\def\eqref#1{equation~\ref{#1}}
\def\1{\bm{1}}
\DeclareMathAlphabet{\mathsfit}{\encodingdefault}{\sfdefault}{m}{sl}
\SetMathAlphabet{\mathsfit}{bold}{\encodingdefault}{\sfdefault}{bx}{n}
\newtheorem{theorem}{Theorem}
\newtheorem{lemma}{Lemma}
\newtheorem{definition}{Definition}
\newtheorem{assumption}{Assumption}
\newtheorem{prop}{Proposition}
\title{Offline RL with Smooth OOD Generalization in Convex Hull and its Neighborhood}
\author{
Qingmao Yao$^{1}$, Zhichao Lei$^{2}$, Tianyuan Chen$^{3,4,6}$, Ziyue Yuan$^{1}$, Xuefan Chen$^{1}$, Jianxiang Liu$^{3,4}$, \\
\textbf{~~Faguo Wu$^{3,4,5,6}$\thanks{Corresponding Authors. Emails: \texttt{\{faguo,xiao.zh\}@buaa.edu.cn}.}, Xiao Zhang$^{1,4,5,6,7}$\footnotemark[1]}\\
  $^{1}$School of Mathematical Sciences, Beihang University\\
  $^{2}$School of Physics, Beihang University\\
  $^{3}$School of Artificial Intelligence, Beihang University\\
  $^{4}$Key Laboratory of Mathematics, Informatics and Behavioral Semantics, MoE, Beihang University\\
  $^{5}$Beijing Advanced Innovation Center for Future Blockchain and Privacy Computing, Beihang University\\
  $^{6}$Zhongguancun Laboratory\\
  $^{7}$Hangzhou International Innovation Institute of Beihang University
  %\texttt{\{ctymath,19377303,faguo,xiao.zh\}@buaa.edu.cn} \\
}
\begin{document}

\maketitle

\begin{abstract}
  Offline Reinforcement Learning (RL) struggles with distributional shifts, leading to the $Q$-value overestimation for out-of-distribution (OOD) actions. Existing methods address this issue by imposing constraints; however, they often become overly conservative when evaluating OOD regions, which constrains the $Q$-function generalization. This over-constraint issue results in poor $Q$-value estimation and hinders policy improvement. In this paper, we introduce a novel approach to achieve better $Q$-value estimation by enhancing $Q$-function generalization in OOD regions within Convex Hull and its Neighborhood (CHN). Under the safety generalization guarantees of the CHN, we propose the Smooth Bellman Operator (SBO), which updates OOD $Q$-values by smoothing them with neighboring in-sample $Q$-values. We theoretically show that SBO approximates true $Q$-values for both in-sample and OOD actions within the CHN. Our practical algorithm, Smooth Q-function OOD Generalization (SQOG), empirically alleviates the over-constraint issue, achieving near-accurate $Q$-value estimation. On the D4RL benchmarks, SQOG outperforms existing state-of-the-art methods in both performance and computational efficiency. Code is available at \url{https://github.com/yqpqry/SQOG}.
\end{abstract}

\section{Introduction}\label{introduction}
Reinforcement Learning (RL) offers a powerful framework for control tasks, underpinned by rigorous mathematical principles. In online RL, an agent learns optimal strategies through direct interaction with the environment. However, in many real-world domains (e.g., robotics, healthcare, autonomous driving), such interactions are infeasible or impractical. Offline RL, in contrast, enables the agent to learn optimal policies from pre-collected datasets, eliminating the need for online interaction. Combining this data-driven paradigm with deep neural networks (DNNs) is anticipated to produce robust and generalizable decision-making engines. However, the primary challenge in offline RL lies in the distribution shift between the learned policy and the behavior policy, which leads to overestimation of OOD actions. Incorrect evaluation of OOD actions results in extrapolation errors, which are further exacerbated by bootstrapping, ultimately causing severe value function approximation errors and hindering the agent from learning an optimal policy.

Recent model-free offline RL algorithms addresses this challenge through the following methods: 1) policy constraints, where constraints are added during policy updates to ensure the learned policies remain close to the behavior policy \citep{fujimoto2019off,DBLP:journals/corr/abs-1911-11361,NEURIPS2019_c2073ffa,NEURIPS2021_a8166da0,kostrikov2021offline-fbrc,prdc,li2022data,huang2023mild}. 2) value penalization, where constraints are incorporated during value updates to enforce conservatism in the value function \citep{DBLP:journals/corr/abs-1911-11361,kostrikov2021offline-fbrc,kumar2020conservative,NEURIPS2022_0b5669c3,yang2022rorl}. 3) in-sample learning, where the value function is learned only within the dataset to avoid evaluating OOD samples \citep{wang2018exponentially,chen2020bail,kostrikov2021offline,xu2023offline,garg2023extreme}. Dataset OOD regions are typically regarded as information-deficient and potentially hazardous. Avoiding evaluations in these regions helps mitigate the overestimation issue. However, many existing methods tend to be overly conservative in handling dataset OOD regions, which limits the $Q$-function's ability to generalize effectively. This phenomenon, termed the over-constraint issue, results in poor $Q$-value estimation. This raises an important question: \textit{Can we achieve better $Q$-value estimation by enhancing $Q$-function generalization in dataset OOD regions?}

To address this question, we first introduce the CHN to define a boundary for safety generalization. By providing two safety guarantees, we demonstrate that generalizing the $Q$-function to OOD regions within the CHN is safe, while doing so outside the CHN is risky. Therefore, we focus on $Q$-function generalization within the CHN. To enhance this generalization, we propose the SBO, which incorporates a smooth generalization term into the empirical Bellman operator. The key idea is to adjust biased OOD $Q$-values using neighboring in-sample $Q$-values that closely approximate the true values. We provide a theoretical justification for the SBO, showing that the smooth generalization term is appropriate. Additionally, we analyze its effects on both in-sample and OOD evaluations and establish its convergence properties. Applying the SBO allows the $Q$-function to gradually approximate the true OOD $Q$-values, while minimally affecting in-sample evaluations. In theory, SBO yields a more accurate $Q$-function for policy evaluation. 

Building on SBO, we develop a computationally efficient offline RL algorithm: Smooth Q-function OOD Generalization (SQOG). Empirically, we demonstrate that compared to TD3+BC \citep{NEURIPS2021_a8166da0}, SQOG achieves more accurate $Q$-value estimation, particularly in OOD regions within the CHN, thereby alleviating the over-constraint issue of the $Q$-function. Finally, on the D4RL benchmarks, SQOG shows superior performance and computational efficiency compared to existing state-of-the-art methods.

To summarize, our contributions are as follows:
\begin{enumerate}[leftmargin=*]
\item[$\bullet$] Under the safety guarantees of the CHN, we propose the Smooth Bellman Operator (SBO), which enhances $Q$-function generalization in OOD regions and approximates the true $Q$-values.
\item[$\bullet$] Building on SBO, we design an effective algorithm, SQOG, which alleviates the over-constraint issue and obtains SOTA results on benchmark datasets.
\end{enumerate}

\section{Preliminaries}

RL is typically modeled as a Markov Decision Process (MDP) \citep{10.5555/3312046}, defined as $M=\left(S,A,T,d_0,r,\gamma\right)$. $S$ represents the state space, $A$ denotes the action space, and $T$ describes the conditional probability of state transitions $T\left(s_{t+1}|s_t,a_t\right)$ (simply denoted as $T\left(s^{\prime}|s,a\right)$). The initial state distribution is defined by $d_0(s_0)$, the reward function is $r:S \times A \rightarrow \mathbb{R}$, and $\gamma \in \left[0,1\right)$ is the discount factor. The objective of RL is to learn an optimal policy $\pi$ that maximizes the cumulative expected reward $J(\pi)=\mathbb{E}_{s_0 \sim d,a_t\sim \pi(\cdot|s_t),s_{t+1}\sim T}\left[\sum_{t=0}^{\infty}\gamma^t r\left(s_t,a_t\right)\right]$. The state-action value function $Q^{\pi}\left(s,a\right)$ quantifies the discounted return of a trajectory starting from state $s$ and action $a$, following the policy $\pi$. The reward function is bounded, i.e. $|r(s,a)|\leq r_{\max}$. Given a policy $\pi$, the Bellman operator for the $Q$ function's iteration is defined as: $\mathcal{B}^{\pi}Q\left(s,a\right)=r\left(s,a\right)+\gamma \mathbb{E}_{s^{\prime}\sim T,a^{\prime}\sim \pi(\cdot|s')}\left[Q\left(s^{\prime},a^{\prime}\right)\right]$.

Offline RL algorithms based on dynamic programming maintain a parametric $Q$-function $Q_{\theta}(s, a)$ and optionally a parametric policy $\pi_{\phi}(a|s)$. The dataset is typically defined as $\mathcal{D}= \{(s_i,a_i,r_i,s^{\prime}_i,d_i)\}^{N}_{i=1}$, where $ d_i \in \{0,1\}$ is the done flag. The dataset is generated according to the behavior policy $\mu(\cdot|s)$. Given state $s'\in \mathcal{D}$, the empirical behavior policy is defined as: $\hat{\mu}(a'|s') := \frac{\sum_{(s,a)\in \mathcal{D}}\mathbf{1}[s=s',a=a']}{\sum_{s \in \mathcal{D}}\mathbf{1}[s=s']}$. The Actor-Critic algorithm is widely used in RL, consisting of policy evaluation in Eq. (\ref{critic}) and policy improvement in Eq. (\ref{actor}).
\begin{equation}\label{critic}
\mathcal{L}_{critic}(\theta)=\mathbb{E}_{(s,a,r,s^{\prime})\sim \mathcal{D}}[(Q_{\theta}(s,a)-(r+\gamma\mathbb{E}_{a^{\prime}\sim\pi_{\phi}(\cdot|\mathbf{s}^{\prime})}[\hat{Q}_{\theta^{\prime}}(s^{\prime},a^{\prime})]))^{2}]
\end{equation}
\begin{equation}\label{actor}
    \mathcal{J}_{actor}(\phi)=-\mathbb{E}_{s\sim \mathcal{D},a\sim \pi_{\phi}(\cdot|\mathbf{s}^{\prime})}[Q_{\theta}(s,a)]
\end{equation}
Since $\mathcal{D}$ typically does not contain all possible
transitions $(s,a,s')$, the policy evaluation step uses an empirical Bellman operator $\hat{\mathcal{B}}^{\pi}Q_{\theta}(s,a)=r+\gamma \mathbb{E}_{s'\sim D ,a'\sim \pi_{\phi}(\cdot|\mathbf{s}^{\prime})} \left[Q_{\theta'}(s',a')\right]$ that only backs up a single sample. The empirical Bellman operator relies on $a^{\prime}$ sampled from learned policy $\pi_{\phi}(\cdot|s^{\prime})$. In offline RL, $a^{\prime}$ may not correspond to any action in the dataset, typically when $\hat{\mu}(a'|s') = 0$. We refer to such actions as OOD actions\footnote{In practice, actions with $\hat{\mu}(a'|s') \approx 0$ are often treated as OOD actions due to their negligible frequency.}, which are usually overestimated \citep{NEURIPS2019_c2073ffa}. Most existing methods introduce a new over-constraint issue when addressing the overestimation of OOD actions. We alleviate this issue by enhancing $Q$-function generalization in OOD regions within the CHN.

\section{\texorpdfstring{$Q$}{Q}-Learning with smooth OOD generalization in CHN}
In this section, we first formally define the CHN and outline its safety guarantees for distinguishing the safer OOD regions (Section \ref{convexhullanditsneighborhood}). Then, we construct the SBO to improve the $Q$ generalization within the CHN. Theoretically, we provide the justification for using SBO, as well as discuss its in-sample and OOD effects (Section \ref{smoothbellmanoperator}). Finally, we propose our practical algorithm SQOG with a low-computational-cost implementation (Section \ref{practicalalgorithm}).

\subsection{Convex Hull and its Neighborhood}
\label{convexhullanditsneighborhood}

\begin{definition}[CHN, Convex Hull and its Neighborhood]\label{CHN}
    For a given dataset $\mathcal{D}$, we define the in-sample state-action set $(S,A)_{\mathcal{D}} = \{(s,a)|(s,a)\in \mathcal{D}\}$\footnote{Here, \((S,A)_{\mathcal{D}}\) represents the set of state-action pairs \((s,a)\) extracted from the dataset \(\mathcal{D}\). The dataset \(\mathcal{D}\) consists of tuples \((s, a, r, s', d)\), but only the \((s,a)\) pairs are included in \((S,A)_{\mathcal{D}}\), while $(r, s', d)$ are ignored.}. CHN\footnote{We use both \textit{CHN} and the CHN in this paper. The italicized form emphasizes the mathematical structure and properties, while the regular font highlights its conceptual and intuitive meaning.} is defined as the union of the convex hull and its neighborhood of $(S,A)_{\mathcal{D}}$: $\text{CHN}(\mathcal{D})=Conv(\mathcal{D})\cup N(Conv(\mathcal{D}))$, where $Conv(\mathcal{D})=\{\sum_{i=1}^n\lambda_ix_i|\lambda_i\geq 0,\sum_{i=1}^n\lambda_i=1,x_i\in (S,A)_{\mathcal{D}}\}$ is convex hull, and $N(Conv(\mathcal{D}))=\{x\in (S,A)\mid x \notin Conv(\mathcal{D}), \min_{y\in Conv(\mathcal{D})}\|x-y\|\leq r\}$\footnote{In this paper, unless otherwise specified, the $||\cdot||$ norm refers to the $\mathcal{L}_2$ norm $||\cdot||_2$.} is the external neighborhood. The radius $r$ is always chosen to be smaller than or equal to the diameter of $Conv(\mathcal{D})$. 
\end{definition}

Definition \ref{CHN} presents the formal mathematical definition of \textit{CHN}, which possesses uniqueness, compactness and connectivity. We then demonstrate two safety guarantees of \textit{CHN}.

\begin{prop}[Safety guarantee 1: $Q$-value difference is controlled within \textit{CHN}]\label{safe generalization}
    Under the NTK regime, given a dataset $\mathcal{D}$, $x_1\in Conv(\mathcal{D})$, $x_2\in N(Conv(\mathcal{D}))$, $x_3\in(S,A)-\text{CHN}(\mathcal{D})$. We have,
    \begin{equation}\label{th31}
        \| Q_{\theta}(x_{1})-Q_{\theta}(\mathrm{Proj}_\mathcal{D}(x_{1}))\|\leq C_{1}(\sqrt{\min(\| x_{1}\|,\|\mathrm{Proj}_\mathcal{D}(x_{1})\|)}\sqrt{d_{1}}+2d_{1})\leq M_{1}
    \end{equation}
    \begin{equation}\label{th32}
         \| Q_{\theta}(x_{2})-Q_{\theta}(\mathrm{Proj}_\mathcal{D}(x_{2}))\|\leq C_{1}(\sqrt{\min(\| x_{2}\|,\|\mathrm{Proj}_\mathcal{D}(x_{2})\|)}\sqrt{d_{2}}+2d_{2})\leq M_{2}
    \end{equation}
    \begin{equation}\label{th33}
         \| Q_\theta(x_3)-Q_\theta(\mathrm{Proj}_\mathcal{D}(x_3))\|\leq C_1(\sqrt{\min(\| x_3\|,\|\mathrm{Proj}_\mathcal{D}(x_3)\|)}\sqrt{d_3}+2d_3)
    \end{equation}
    where $\mathrm{Proj}_\mathcal{D}(x){:}=\mathrm{argmin}_{x_i\in \mathcal{D}}\|x-x_i\|$ is the projection to the dataset. $d_1, d_2, d_3$ are the point-to-dataset distances. Both $d_1=\|x_1-\mathrm{Proj}_\mathcal{D}(x_1)\|\leq\max_{x^{\prime}\in \mathcal{D}}\|x_1-x^{\prime}\|\leq B$ and $d_2=\|x_2-\mathrm{Proj}_\mathcal{D}(x_2)\|\leq r\leq B$ are bounded. $d_3=\|x_3-\mathrm{Proj}_\mathcal{D}(x_3)\|>r$, where $r$ is the external neighborhood radius and $B=\sup\left\{\left\|x-y\right\|| x,y\in Conv\left(\mathcal{D}\right)\right\}$ is the diameter of the convex hull. let $r = \max_{x\in Conv(\mathcal{D})}\|x-\mathrm{Proj}_\mathcal{D}(x)\|$, then $r \leq B$. $C_1, M_1, M_2$ are constants.
\end{prop}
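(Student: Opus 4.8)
The plan is to work in the NTK regime, where the trained $Q$-network behaves like a function lying in the reproducing-kernel Hilbert space $\mathcal{H}_\Theta$ of the network's neural tangent kernel $\Theta$, with RKHS norm $\|Q_\theta\|_{\mathcal H_\Theta}$ controlled by a constant (the lazy-training/linearization property; any finite-width deviation of the empirical tangent kernel from $\Theta$, as well as bias-term contributions, will be absorbed into the constants). This immediately yields the ``kernel Lipschitz'' inequality
\[
\| Q_\theta(x) - Q_\theta(x') \| \;\le\; \|Q_\theta\|_{\mathcal H_\Theta}\,\bigl(\Theta(x,x)-2\Theta(x,x')+\Theta(x',x')\bigr)^{1/2},
\]
so the problem reduces to bounding the NTK feature distance $\|\phi(x)-\phi(x')\|$ by $\|x-x'\|$ and the norms $\|x\|,\|x'\|$.

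Next I would compute $\Theta$ for the ReLU architecture in use: it is the sum of the first-layer term $\tfrac{\pi-\vartheta}{2\pi}\langle x,x'\rangle$ and the degree-one arc-cosine term $\tfrac{\|x\|\|x'\|}{2\pi}(\sin\vartheta+(\pi-\vartheta)\cos\vartheta)$, where $\vartheta=\vartheta_{x,x'}$ is the angle between $x$ and $x'$; in particular $\Theta(x,x)=\|x\|^2$. Writing $a=\|x\|$, $b=\|x'\|$, a short computation gives $\|\phi(x)-\phi(x')\|^2=(a-b)^2+2ab\,(1-g(\vartheta))$ for an explicit $g$ with $g(0)=1$. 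The key observation is that $1-g(\vartheta)$ vanishes only \emph{linearly} at $\vartheta=0$ (because $g'(0)\neq 0$) and is globally Lipschitz in $\vartheta$, so $1-g(\vartheta)\le c\,\vartheta$ on $[0,\pi]$. Combining this with the elementary geometric inequality $\vartheta \le \pi\,\|x-x'\|/\max(\|x\|,\|x'\|)$ (which follows from $\|x-x'\|\ge \max(\|x\|,\|x'\|)\sin\vartheta$) gives $2ab(1-g(\vartheta))\le c'\min(\|x\|,\|x'\|)\,\|x-x'\|$, and hence $\|\phi(x)-\phi(x')\|^2\le \|x-x'\|^2+c'\min(\|x\|,\|x'\|)\,\|x-x'\|$. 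Taking square roots and using $\sqrt{u+v}\le\sqrt u+\sqrt v$, and folding $\|Q_\theta\|_{\mathcal H_\Theta}$ and the slack into a single constant, yields
\[
\|Q_\theta(x)-Q_\theta(x')\| \;\le\; C_1\bigl(\sqrt{\min(\|x\|,\|x'\|)}\,\sqrt{\|x-x'\|}+2\|x-x'\|\bigr).
\]

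Specializing $x'=\mathrm{Proj}_\mathcal D(x)$ with $\|x-x'\|=d$ equal to the point-to-dataset distance gives (\ref{th31})--(\ref{th33}) verbatim for $x_1,x_2,x_3$ with $d=d_1,d_2,d_3$ respectively. The uniform bounds then follow by inserting the stated distance estimates: for $x_1\in Conv(\mathcal D)$ one has $d_1\le B$ (since $x_1$ is a convex combination of dataset points, all of which lie in the diameter-$B$ hull), and $\min(\|x_1\|,\|\mathrm{Proj}_\mathcal D(x_1)\|)\le\max_i\|x_i\|$, so the right-hand side is $\le C_1(\sqrt{\max_i\|x_i\|}\,\sqrt B+2B)=:M_1$; likewise $d_2\le r\le B$ together with the analogous norm bound gives $M_2$. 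For $x_3\notin\text{CHN}(\mathcal D)$ only $d_3>r$ is available, and $d_3$ may be arbitrarily large, so no uniform constant exists --- which is precisely the sense in which extrapolating $Q_\theta$ outside the CHN is unsafe.

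The main obstacle is the second step: identifying the NTK of the concrete architecture and establishing the angular estimate with the right exponents --- in particular recognizing that the arc-cosine contribution degrades \emph{linearly} (not quadratically) in the angle, which is what produces the $\sqrt{\min(\|x\|,\|x'\|)}\sqrt d$ term rather than a pure $O(d)$ bound --- together with the geometric angle-versus-chord inequality. A secondary difficulty is making ``under the NTK regime'' precise: controlling $\|Q_\theta\|_{\mathcal H_\Theta}$ along training and bounding the finite-width deviation of the empirical tangent kernel from $\Theta$; I would either invoke standard lazy-training results or state the required control as part of the NTK-regime hypothesis and absorb the residual $O(\|x-x'\|)$ errors into $C_1$ (which also explains the coefficient $2$ in front of $d_i$).
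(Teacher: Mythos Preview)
Your proposal is correct and follows essentially the same route as the paper. The paper's proof simply invokes, as a black box, the smoothness bound $\|Q_\theta(x)-Q_\theta(x')\|\le C\bigl(\sqrt{\min(\|x\|,\|x'\|)}\sqrt d + 2d\bigr)$ (stated as Lemmas~1--2 and attributed to DOGE and Bietti--Mairal) and then plugs in the distance bounds $d_1\le B$, $d_2\le r\le B$, $d_3>r$; you instead re-derive that smoothness lemma from scratch via the RKHS reproducing inequality and an explicit arc-cosine/angle computation for the two-layer ReLU NTK, and then apply it in the identical way.
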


We generalize Proposition \ref{safe generalization} from the analysis of DOGE \citep{li2022data} under the NTK regime (see Appendix \ref{proof 1}). The \textit{external neighborhood} is a crucial augmentation that significantly broadens the scope of safety generalization. For any state-action pair $x_1$ (inside the convex hull) or $x_2$ (in the external neighborhood), the difference between its $Q$-value and the in-sample $Q$-value $Q_{\theta}(\mathrm{Proj}_\mathcal{D}(x))$ can be controlled by the point-to-dataset distance. Due to the uniqueness of \textit{CHN}, this distance $d_{i}=\| x_{i}-\mathrm{Proj}_\mathcal{D}(x_{i})\|, i=1, 2 $ can be strictly controlled by the longest diameter $B$ of the convex hull. Assuming that deep $Q$-function is a continuous mapping, keeping the compactness (bounded and closed) and connectivity of the set, then $Q$ is bounded within \textit{CHN}. Building upon Proposition \ref{safe generalization}, we can quantify the bound: $\forall x_{in} \in \textit{CHN}$, $\|Q_{\theta}(x_{in})\|\leq \sup_{{x_{i}\in \mathcal{D}}}\|Q_{\theta}(x_{i})\|+\max\left\{M_{1},M_{2}\right\}$. 

\begin{prop}[Safety guarantee 2: $Q$-function is uniformly continuous within \textit{CHN}]
\label{uniform continuity}
    Assuming that deep $Q$-function is continuous, then deep $Q$-function defined on CHN is uniformly continuous:
    \[ \forall\varepsilon>0,~\exists\delta>0,~s.t.~\forall x_i,x_j\in \text{CHN}(\mathcal{D}),\text{ if }\|x_i-x_j\|<\delta,\text{ then }\|Q_{\theta}(x_i)-Q_{\theta}(x_j)\|<\varepsilon .\]
\end{prop}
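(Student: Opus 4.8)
The statement is an instance of the Heine--Cantor theorem: a continuous map on a compact metric space is uniformly continuous. So the plan is in two stages: (i) show that $\text{CHN}(\mathcal{D})$ is a compact subset of the ambient Euclidean state--action space, and (ii) invoke (or reprove) Heine--Cantor for $Q_\theta$ restricted to it.

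For stage (i): since $\mathcal{D}$ contains finitely many tuples, $(S,A)_{\mathcal{D}}$ is a finite point set $\{x_1,\dots,x_m\}\subset\mathbb{R}^n$. The convex hull $Conv(\mathcal{D})$ is the image of the compact standard simplex $\{\lambda\in\mathbb{R}^m:\lambda_i\ge 0,\ \sum_i\lambda_i=1\}$ under the continuous map $\lambda\mapsto\sum_i\lambda_i x_i$, hence $Conv(\mathcal{D})$ is compact, in particular bounded and closed. Next, observe that $\text{CHN}(\mathcal{D})=Conv(\mathcal{D})\cup N(Conv(\mathcal{D}))$ is exactly the closed $r$-tube $\{x\in (S,A):\min_{y\in Conv(\mathcal{D})}\|x-y\|\leq r\}$ (the sphere at distance exactly $r$ is included since $N(Conv(\mathcal{D}))$ uses $\leq r$). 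The function $x\mapsto\min_{y\in Conv(\mathcal{D})}\|x-y\|$ is $1$-Lipschitz, hence continuous, so this tube is the intersection of $(S,A)$ with the preimage of $[0,r]$ under a continuous function; under the standing assumption that the ambient $(S,A)$ is closed in $\mathbb{R}^n$ (e.g.\ a product of closed intervals), $\text{CHN}(\mathcal{D})$ is closed, and it is bounded because $Conv(\mathcal{D})$ is bounded and $r$ is finite. A closed and bounded subset of $\mathbb{R}^n$ is compact; this also substantiates the ``bounded and closed'' parenthetical and the compactness claim stated after Definition~\ref{CHN}.

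For stage (ii): set $K:=\text{CHN}(\mathcal{D})$, which is compact, and suppose for contradiction that $Q_\theta$ is not uniformly continuous on $K$. Then there exist $\varepsilon_0>0$ and sequences $(x_k),(y_k)$ in $K$ with $\|x_k-y_k\|<1/k$ but $\|Q_\theta(x_k)-Q_\theta(y_k)\|\geq\varepsilon_0$. By sequential compactness of $K$, pass to a subsequence along which $x_k\to x^\star\in K$; since $\|x_k-y_k\|\to 0$, also $y_k\to x^\star$. Continuity of $Q_\theta$ at $x^\star$ gives $Q_\theta(x_k)\to Q_\theta(x^\star)$ and $Q_\theta(y_k)\to Q_\theta(x^\star)$, so $\|Q_\theta(x_k)-Q_\theta(y_k)\|\to 0$, contradicting the bound $\geq\varepsilon_0$. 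Hence for every $\varepsilon>0$ there is $\delta>0$ such that $\|x_i-x_j\|<\delta$ implies $\|Q_\theta(x_i)-Q_\theta(x_j)\|<\varepsilon$ for all $x_i,x_j\in K$, which is the claim.

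The only genuine work is stage (i); the point to be careful about is verifying that the union of the convex hull with its exterior neighborhood is precisely the \emph{closed} $r$-tube and is therefore closed, together with making explicit the mild assumption that $(S,A)$ itself is closed in $\mathbb{R}^n$, which is what lets us conclude $K$ is closed in $\mathbb{R}^n$. Once compactness of $K$ is in hand, stage (ii) is the standard subsequence argument and poses no difficulty.
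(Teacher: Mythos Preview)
Your proposal is correct. Both you and the paper reduce the claim to Heine--Cantor on the compact set $K=\text{CHN}(\mathcal{D})$, but you differ in two respects. First, you actually \emph{prove} compactness of $K$ (finite point set $\Rightarrow$ compact convex hull $\Rightarrow$ closed $r$-tube via the $1$-Lipschitz distance-to-set function), whereas the paper simply asserts ``which is compact'' and relies on the remark after Definition~\ref{CHN}; in that sense your write-up is more self-contained and also makes explicit the mild ambient assumption that $(S,A)$ is closed in $\mathbb{R}^n$. Second, for the Heine--Cantor step the paper uses the open-cover argument (continuity gives local balls $B(x,\delta_x)$, halve the radii, extract a finite subcover, take $\delta=\tfrac{1}{2}\min_k\delta_{x_k}$, and conclude via the triangle inequality), while you use the sequential-compactness/contradiction argument. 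These are the two textbook proofs of the same theorem; the open-cover version is constructive in the sense that it exhibits a concrete $\delta$, while your subsequence argument is slightly shorter and avoids bookkeeping with half-radii. Either route is perfectly adequate here.
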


Proposition \ref{uniform continuity} ensures that small input changes will not lead to drastic changes in the output $Q$-value, which means that the $Q$-function of OOD actions within \textit{CHN} is easy to learn from the neighbor $Q$-function of in-sample actions which is more accurate (proved in Section \ref{smoothbellmanoperator}). Through the safety guarantees in Proposition \ref{safe generalization} and \ref{uniform continuity}, we can make it clear that the generalization of the $Q$-function in the OOD regions within \textit{CHN} is safer and more reliable, without producing excessive high estimates. In the following sections, we only focus on the $Q$ generalization in the OOD regions within \textit{CHN}. 

\subsection{Smooth Bellman Operator with OOD generalization}
\label{smoothbellmanoperator}
In this section, we introduce a method to actively improve $Q$ generalization in OOD regions within the \textit{CHN}. We start by defining the SBO and providing its theoretical justification through Theorem \ref{empirical Bellman} and Proposition \ref{appropriate}. Subsequently, we present its in-sample and OOD effects in Theorems \ref{in-sample effects} and \ref{ood effects}.

\begin{definition}\label{SBO}
Given policy $\pi$, the Smooth Bellman Operator (SBO) is defined as
\begin{equation}\label{smooth Bellman operator}
    \widetilde{\mathcal{B}}^\pi Q(s,a) = (\mathcal{G}_1\hat{\mathcal{B}}_2^\pi)Q(s,a)
\end{equation}
where $\mathcal{G}_1$ is the smooth generalization operator:
\begin{equation}\label{smooth-operator}
    {\mathcal{G}_1}Q(s,a) = 
    \left\{
        \begin{array}{ll}
            Q(s,a), & \hat{\mu}(a|s)>0 \\
            Q(s,a^{in}_{neighbor}), &  \hat{\mu}(a|s)=0 \text{ and } (s,a) \in \textit{CHN}
        \end{array}
    \right.
\end{equation}
and $\hat{\mathcal{B}}_2$ is the base Bellman operator:
\begin{equation}\label{bellman-operator}
    \hat{\mathcal{B}}_2^\pi Q(s,a) = 
    \left\{
        \begin{array}{ll}
            \hat{\mathcal{B}}^\pi Q(s,a), &  \hat{\mu}(a|s)>0\\
            Q(s,a), &  \hat{\mu}(a|s)=0 \text{ and } (s,a) \in \textit{CHN}
        \end{array}
    \right.
\end{equation}
where $\hat{\mu}(a|s)=0 \text{ and } (s,a) \in \textit{CHN}$ implies that $a$ is an OOD action within \textit{CHN}, $a^{in}_{neighbor}$ denotes a dataset action which is in the neighborhood of the OOD action $a$, i.e., $a^{in}_{neighbor} \in \mathcal{D}$ and $\|a^{in}_{neighbor} - a\| \leq \delta$. $\hat{\mathcal{B}}^\pi Q(s,a)$ denotes the wildly used empirical Bellman operator $\hat{\mathcal{B}}^\pi Q = \mathbb{E}_{s,a,r,s' \sim \mathcal{D}}[r+\gamma\mathbb{E}_{a'\sim\pi(\cdot|\mathbf{s'})}[Q(s',a')]]$. For simplicity, we omit the network parameters $\theta$, $\theta'$ and $\phi$.
\end{definition}

In the SBO, in-sample $Q$-values are updated using the empirical Bellman backup in $\hat{\mathcal{B}}_2^\pi Q_{\theta}$, while OOD $Q$-values within \textit{CHN} are updated using the neighboring in-sample $Q$-values $Q(s,a^{in}_{neighbor})$ in $\mathcal{G}_1 Q_{\theta}$. Inspired by the MCB operator in MCQ \citep{NEURIPS2022_0b5669c3}, we decompose the operator into $\mathcal{G}_1 Q_{\theta}$ and $\hat{\mathcal{B}}_2^\pi Q_{\theta}$ to address the potential OOD actions generated by $\pi(\cdot|s')$. The smooth generalization operator $\mathcal{G}_1$ conveys the key contribution of the SBO. Through the following Theorem \ref{empirical Bellman} and Proposition \ref{appropriate}, we will provide theoretical justification for $\mathcal{G}_1$.

\begin{theorem}[The empirical Bellman operator $\hat{\mathcal{B}}^{\pi}Q_{\theta}$ is close to $\mathcal{B}^{\pi}Q_{\theta}$]
    \label{empirical Bellman}
    Suppose there exist a policy constraint offline RL algorithm such that the KL-divergence of learned policy $\pi$ and the behavior policy $\mu$ is optimized to guarantee $\max(KL(\pi,\mu),KL(\mu,\pi)) \leq \epsilon$. Then, under the NTK regime, for all $(s,a) \in \mathcal{D}$, with high probability $\geq 1-\delta$, $\delta\in(0,1)$.\\
    \begin{equation}\label{gap}
    \|\hat{\mathcal{B}}^{\pi}Q_{\theta}- \mathcal{B}^{\pi}Q_{\theta}\| \leq \underbrace{\frac{C_{r,T,\delta}}{\sqrt{|\mathcal{D}(s,a)|}}}_{\text{sampling error bound}} + \zeta \cdot \underbrace{C\cdot\max_{s'}\left[\sqrt{\min(\mathbb{E}_{a'\sim\pi}\|(s',a')\|,\mathbb{E}_{a'\sim\mu}\|(s',a')\|)}\sqrt{d}+2d \right]}_{\text{OOD overestimation error bound}}
    \end{equation}
    where $C_{r,T,\delta}=C_{r,\delta}+\gamma C_{T,\delta} R_{max}/(1-\gamma)$, $\zeta = \frac{\gamma C_{T,\delta}}{\sqrt{|\mathcal{D}(s,a)|}}$ and $d \leq \frac{\|a_{min}\|^2 + \|a_{max}\|^2}{2}\sqrt{\frac{\epsilon}{2}}$. Here, $C_{r,\delta}$ and $C_{T,\delta}$ are constants dependent on the concentration properties of $r(s,a)$ and $T(s'|s,a)$, $|\mathcal{D}(s,a)|$ is the dataset size, $a_{min}$ and $a_{max}$ denote the minimum and maximum actions, and $C$ is a constant.
\end{theorem}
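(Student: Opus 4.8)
The plan is to expand the operator gap $\hat{\mathcal{B}}^\pi Q_\theta - \mathcal{B}^\pi Q_\theta$ into a reward piece and a transition piece, to control both by concentration for bounded random variables, and --- crucially --- inside the transition piece to split the target value $V_\pi(s') := \mathbb{E}_{a'\sim\pi}[Q_\theta(s',a')]$ into a behavior-policy part $V_\mu(s') := \mathbb{E}_{a'\sim\mu}[Q_\theta(s',a')]$ (bounded crudely) and an OOD-deviation part $\Delta(s') := V_\pi(s') - V_\mu(s')$ (bounded via the NTK-regime $Q$-difference estimate together with the KL constraint). Fixing $(s,a)\in\mathcal{D}$ and letting $\hat r$, $\widehat{\mathbb{E}}_{s'}$ denote empirical averages over the $|\mathcal{D}(s,a)|$ transitions out of $(s,a)$, we have
\[
\hat{\mathcal{B}}^\pi Q_\theta(s,a) - \mathcal{B}^\pi Q_\theta(s,a) = \big(\hat r(s,a) - r(s,a)\big) + \gamma\big(\widehat{\mathbb{E}}_{s'}[V_\pi(s')] - \mathbb{E}_{s'\sim T}[V_\pi(s')]\big).
\]

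A Hoeffding/Bernstein bound for the bounded reward ($|r|\le r_{max}$) gives $\|\hat r(s,a) - r(s,a)\| \le C_{r,\delta}/\sqrt{|\mathcal{D}(s,a)|}$ with high probability. For the transition piece, substitute $V_\pi = V_\mu + \Delta$ and apply the analogous concentration bound for empirical expectations of bounded functions: $\|\widehat{\mathbb{E}}_{s'}[V_\mu] - \mathbb{E}_{s'\sim T}[V_\mu]\| \le \frac{C_{T,\delta}}{\sqrt{|\mathcal{D}(s,a)|}}\,\|V_\mu\|_\infty$ and likewise for $\Delta$, using $\|V_\mu\|_\infty \le R_{max}/(1-\gamma)$ (which follows from the $Q$-boundedness consequence of Proposition \ref{safe generalization} within \textit{CHN}). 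The $V_\mu$ term contributes $\gamma C_{T,\delta}R_{max}/((1-\gamma)\sqrt{|\mathcal{D}(s,a)|})$, which combines with the reward term into $C_{r,T,\delta}/\sqrt{|\mathcal{D}(s,a)|}$; the $\Delta$ term contributes $\zeta\,\max_{s'}\|\Delta(s')\|$ with $\zeta = \gamma C_{T,\delta}/\sqrt{|\mathcal{D}(s,a)|}$.

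It then remains to bound $\|\Delta(s')\|$, and this is where the NTK regime and the policy constraint enter. Pinsker's inequality converts $\max(KL(\pi,\mu),KL(\mu,\pi)) \le \epsilon$ into $\mathrm{TV}(\pi(\cdot|s'),\mu(\cdot|s')) \le \sqrt{\epsilon/2}$ at every $s'$; coupling $\pi$ and $\mu$ optimally and using that actions lie in the bounded box spanned by $a_{min}$ and $a_{max}$, the induced action displacement is bounded by $d \le \frac{\|a_{min}\|^2+\|a_{max}\|^2}{2}\sqrt{\epsilon/2}$. Then I would invoke the same NTK Lipschitz-type estimate used in Proposition \ref{safe generalization}, with $(s',a')$ for $a'\sim\pi$ playing the role of the query point and its behavior-side counterpart for $a'\sim\mu$ playing the role of the dataset projection, to obtain $\|Q_\theta(s',a'_\pi) - Q_\theta(s',a'_\mu)\| \le C(\sqrt{\min(\mathbb{E}_{a'\sim\pi}\|(s',a')\|,\mathbb{E}_{a'\sim\mu}\|(s',a')\|)}\sqrt d + 2d)$; applying Jensen to pull the expectation inside the norm (and once more through the concave square root) and taking $\max_{s'}$ yields exactly the OOD-overestimation factor. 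A union bound over the two concentration events (and, for a statement uniform over $\mathcal{D}$, over the dataset, which only enlarges the logarithmic factors hidden in the constants) keeps the total failure probability at most $\delta$, and the triangle inequality assembles the stated bound.

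I expect the main obstacle to be the middle step: making the NTK-regime $Q$-difference estimate precise --- pinning down which ``distance'' object the bound is phrased in, and cleanly converting the KL bound between $\pi$ and $\mu$ into the quantitative displacement $d$ via Pinsker plus a coupling argument on the bounded action set, since this is the part not directly supplied by Proposition \ref{safe generalization}. A secondary, purely bookkeeping, obstacle is reconciling constants so that $C_{r,T,\delta}$, $\zeta$ and $d$ come out exactly as written rather than merely up to universal factors.
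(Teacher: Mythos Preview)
Your proposal is correct and follows essentially the same route as the paper: decompose the operator gap into a reward piece and a transition piece, bound both via the concentration assumption on $r$ and $\hat T$ (the paper states this as an explicit assumption rather than deriving Hoeffding/Bernstein), split $V_\pi$ into $V_\mu$ (bounded by $R_{max}/(1-\gamma)$) plus the deviation $\Delta$, bound $\Delta$ via the NTK smoothness lemma (the paper's Lemma~\ref{lemma2}), and control $d$ via Pinsker. The only cosmetic difference is that the paper applies the H\"older-type step $|\sum_{s'}(\hat T-T)V_\pi|\le\|\hat T-T\|_1\max_{s'}|V_\pi|$ first and then splits $V_\pi$, whereas you split $V_\pi=V_\mu+\Delta$ first and then apply concentration to each piece; the resulting constants coincide.
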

\begin{proof}[Proof sketch]
    The proof consists of considering two main sources of error: the sampling error (arising from $r$ and $\hat{T}$), and the OOD overestimation error (generated from $\mathbb{E}_{a'\sim\pi(\cdot|\mathbf{s'})}[Q_{\theta'}(s',a')]$). Since $\hat{\mathcal{B}}^{\mu}Q_{\theta}$ has low OOD overestimation error and $\mu$ is close to $\pi$, we first analyze the sampling error through $\|\hat{\mathcal{B}}^{\mu}Q_{\theta} - \mathcal{B}^{\mu}Q_{\theta}\|$, which can be bounded by $\frac{C_{r,T,\delta}}{\sqrt{|\mathcal{D}(s,a)|}}$. The OOD overestimation error is then examined as the difference between $\mathbb{E}_{a'\sim\pi(\cdot|\mathbf{s'})}[Q_{\theta'}(s',a')]$ and $\mathbb{E}_{a'\sim\mu(\cdot|\mathbf{s'})}[Q_{\theta'}(s',a')]$. Under the NTK regime, this difference is controlled by the distance $d$. Given that both $\frac{1}{\sqrt{|\mathcal{D}(s,a)|}}$ and $d$ are small, the difference between $\hat{\mathcal{B}}^{\pi}Q_{\theta}$ and $\mathcal{B}^{\pi}Q_{\theta}$ is expected to be small. See Appendix \ref{proof 3}.
\end{proof}

Theorem \ref{empirical Bellman} shows that under the policy constraint algorithm framework, for in-sample $(s,a^{in})$, the empirical $\hat{Q}^{\pi}_{\theta}(s,a^{in})$ can closely approximate $Q^{\pi}_{\theta}(s,a^{in})$\footnote{Assuming that parametric in-sample $Q$-value $Q^{\pi}_{\theta}(s,a^{in})$ is converged, i.e. $Q^{\pi}_{\theta}(s,a^{in})\approx\mathcal{B}^{\pi}Q^{\pi}_{\theta}(s,a^{in})$.} by applying the empirical Bellman operator $\hat{\mathcal{B}}^{\pi}$. Meanwhile, $Q^{\pi}_{\theta}(s,a^{in})$ is close to the true non-parametric $Q$-value $Q^{\pi}(s,a^{in})$ \citep{prdc}. Then, $\hat{Q}^{\pi}_{\theta}(s,a^{in})$ is a near-accurate estimation for the in-sample $(s,a^{in})$, i.e. $\hat{Q}^{\pi}_{\theta}(s,a^{in}) \approx Q^{\pi}(s,a^{in})$. However, the OOD $Q$-value $\hat{Q}^{\pi}_{\theta}(s,a^{ood})$ may suffer from \textit{underestimation} due to the over-constraint issue, i.e., for some $(s,a^{ood}) \in \textit{CHN}$, $\hat{Q}^{\pi}_{\theta}(s,a^{ood})<Q^{\pi}(s,a^{ood})$.

In Eq. (\ref{smooth-operator}), the operator $\mathcal{G}_{1}$ is introduced to approximate the true OOD $Q$-value within the CHN. Although the true OOD $Q^{\pi}(s,a^{ood})$ and the exact OOD reward $r(s,a^{ood})$ are unattainable, we already obtain the nearly accurate in-sample $\hat{Q}^{\pi}_{\theta}(s,a^{in})$. In Proposition \ref{appropriate}, we show that $\hat{Q}^{\pi}_{\theta}(s,a^{in})$ can serve as a \textit{appropriate} OOD target when combined with the neighboring condition.

\begin{prop}[$\hat{Q}^{\pi}_{\theta}(s,a^{in}_{neighbor})$ is appropriate]
\label{appropriate}
    Suppose there exist $\varepsilon$ such that $\|\hat{Q}^{\pi}_{\theta}(s,a) - Q^{\pi}(s,a)\| < \varepsilon/2$, for all $(s,a) \in \mathcal{D}$.  For any OOD actions $a^{ood}$ within CHN, by Proposition \ref{uniform continuity}, there exist a small $\delta$, if $\|a^{ood}-a^{in}_{neighbor}\| < \delta$, then $\|Q^{\pi}(s,a^{ood})-Q^{\pi}(s,a^{in}_{neighbor})\|<\varepsilon/2$, we have,
    \begin{equation}
        \|Q^{\pi}(s,a^{ood}) - \hat{Q}^{\pi}_{\theta}(s,a^{in}_{neighbor})\| < \varepsilon
    \end{equation}
\end{prop}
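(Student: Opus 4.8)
The plan is to prove Proposition~\ref{appropriate} by a single application of the triangle inequality, combining the two hypotheses that are already in hand. Concretely, I would write
\begin{equation*}
    \|Q^{\pi}(s,a^{ood}) - \hat{Q}^{\pi}_{\theta}(s,a^{in}_{neighbor})\|
    \leq \|Q^{\pi}(s,a^{ood}) - Q^{\pi}(s,a^{in}_{neighbor})\|
    + \|Q^{\pi}(s,a^{in}_{neighbor}) - \hat{Q}^{\pi}_{\theta}(s,a^{in}_{neighbor})\|,
\end{equation*}
then bound the first term on the right by $\varepsilon/2$ and the second term by $\varepsilon/2$, and conclude that the left-hand side is strictly less than $\varepsilon$.

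For the first term, the key is that $a^{in}_{neighbor}$ is a genuine in-sample action chosen to lie within distance $\delta$ of the OOD action $a^{ood}$, and by hypothesis the pair $(s,a^{ood})$ lies in \textit{CHN}; since $(s,a^{in}_{neighbor}) \in \mathcal{D} \subseteq \textit{CHN}$ as well, Proposition~\ref{uniform continuity} (uniform continuity of the deep $Q$-function on \textit{CHN}) applies to the pair $(a^{ood}, a^{in}_{neighbor})$. Here I would be slightly careful: Proposition~\ref{uniform continuity} as stated concerns the parametric $Q_{\theta}$, so to bound $\|Q^{\pi}(s,a^{ood}) - Q^{\pi}(s,a^{in}_{neighbor})\|$ I would invoke the analogous continuity/Lipschitz property of the true value function $Q^{\pi}$ on the compact, connected set \textit{CHN} — the statement of the proposition already presupposes this by asserting the existence of the appropriate $\delta$, so the argument is just: choose $\delta$ from that uniform-continuity statement, note $\|a^{ood} - a^{in}_{neighbor}\| < \delta$, hence $\|Q^{\pi}(s,a^{ood}) - Q^{\pi}(s,a^{in}_{neighbor})\| < \varepsilon/2$. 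For the second term, I directly invoke the standing assumption of the proposition, namely $\|\hat{Q}^{\pi}_{\theta}(s,a) - Q^{\pi}(s,a)\| < \varepsilon/2$ for all $(s,a) \in \mathcal{D}$, instantiated at $(s, a^{in}_{neighbor}) \in \mathcal{D}$; this is exactly where Theorem~\ref{empirical Bellman} is being used upstream to justify that such an $\varepsilon$ exists (the empirical Bellman backup makes the parametric in-sample $Q$ close to the true $Q$). Adding the two bounds gives $\varepsilon/2 + \varepsilon/2 = \varepsilon$, completing the proof.

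Since the statement hands us both ingredients as hypotheses, there is essentially no computational obstacle; the only subtlety — and the part I would flag as needing care in the write-up — is the consistency of the two $\delta$-choices. The hypothesis says "there exist a small $\delta$" such that the implication holds, and separately requires $\|a^{ood} - a^{in}_{neighbor}\| < \delta$; so I would make explicit that the $\delta$ in the neighboring condition is taken to be exactly the one furnished by uniform continuity for the target accuracy $\varepsilon/2$, ensuring the first term is controlled. Everything else is the triangle inequality. It would also be worth remarking that this proposition is what licenses using $\hat{Q}^{\pi}_{\theta}(s,a^{in}_{neighbor})$ as the OOD update target in the smooth generalization operator $\mathcal{G}_1$ of Definition~\ref{SBO}, since it certifies the target is within $\varepsilon$ of the unattainable true OOD value $Q^{\pi}(s,a^{ood})$.
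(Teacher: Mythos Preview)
Your proposal is correct and matches the paper's own argument exactly: the paper simply states that Proposition~\ref{appropriate} ``can be proved directly using the triangle inequality,'' and your write-up spells out precisely that one-line triangle-inequality decomposition with each summand bounded by $\varepsilon/2$. The additional care you take about reconciling the $\delta$ from uniform continuity with the neighboring condition is a welcome clarification that the paper leaves implicit.
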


Proposition \ref{appropriate} can be proved directly using the triangle inequality. Subsequently, we propose Theorem \ref{in-sample effects} and \ref{ood effects} to illustrate the \textit{effects} of the SBO.

\begin{theorem}[Effects on in-sample evaluation]
\label{in-sample effects}
    For the in-sample evaluation, $\mathcal{G}_{1}$ introduces negligible changes to the empirical Bellman operator $\hat{\mathcal{B}}^{\pi}Q_{\theta}$. Under the NTK regime, \textbf{given $(s,a) \in \mathcal{D}$}, assuming that $\forall a' \sim \pi(\cdot|s'), \exists a^{in}_{neighbor}, s.t. \|a' - a^{in}_{neighbor}\| < \delta$, we have,
    \begin{equation}
        \|\widetilde{\mathcal{B}}^{\pi}Q_{\theta}(s,a) - \hat{\mathcal{B}}^{\pi}Q_{\theta}(s,a)\| \leq C\cdot\gamma\mathbb{E}_{s'}\left[\sqrt{\min(x,y)}\sqrt{\delta}+2\delta \right]
    \end{equation}
    where $x = \mathbb{E}_{a'\sim\pi, \|a' - a^{in}_{neighbor}\| < \delta}\|(s',a^{in}_{neighbor})\|$, $y=\mathbb{E}_{a'\sim\pi}\|(s',a')\|$, $C$ is a constant.
\end{theorem}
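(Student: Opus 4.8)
The plan is to reduce the claimed inequality to a single pointwise application of the NTK $Q$-value difference bound of Proposition~\ref{safe generalization}, followed by two elementary expectation estimates.

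First I would unfold $\widetilde{\mathcal{B}}^{\pi}Q_\theta(s,a)$ at an in-sample pair $(s,a)\in\mathcal{D}$, i.e.\ with $\hat\mu(a|s)>0$. On such inputs the outer operator $\mathcal{G}_1$ acts as the identity, so $\widetilde{\mathcal{B}}^{\pi}Q_\theta(s,a)$ reduces to the empirical backup $r(s,a)+\gamma\,\mathbb{E}_{s'}\mathbb{E}_{a'\sim\pi(\cdot|s')}[\,\cdot\,]$; the one place it departs from the vanilla $\hat{\mathcal{B}}^{\pi}Q_\theta(s,a)$ is in how the successor target actions $a'\sim\pi(\cdot|s')$ are scored. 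When such an $a'$ is an OOD action within \textit{CHN}, the SBO backs up the neighboring in-sample value $Q_\theta(s',a^{in}_{neighbor})$ — a neighbor with $\|a'-a^{in}_{neighbor}\|<\delta$ exists by the standing assumption — whereas $\hat{\mathcal{B}}^{\pi}$ uses $Q_\theta(s',a')$; on in-sample $a'$ the two agree. Hence
\[
\big\|\widetilde{\mathcal{B}}^{\pi}Q_\theta(s,a)-\hat{\mathcal{B}}^{\pi}Q_\theta(s,a)\big\|
=\gamma\,\big\|\mathbb{E}_{s'}\mathbb{E}_{a'\sim\pi(\cdot|s')}\big[Q_\theta(s',a^{in}_{neighbor})-Q_\theta(s',a')\big]\big\| .
\]

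Next I would move the norm inside both expectations and invoke Proposition~\ref{safe generalization}: since $\|(s',a')-(s',a^{in}_{neighbor})\|=\|a'-a^{in}_{neighbor}\|<\delta$ and both points lie in \textit{CHN}, the same NTK estimate behind Eqs.~(\ref{th31})--(\ref{th32}) gives $\|Q_\theta(s',a')-Q_\theta(s',a^{in}_{neighbor})\|\le C_1\big(\sqrt{\min(\|(s',a')\|,\|(s',a^{in}_{neighbor})\|)}\,\sqrt{\delta}+2\delta\big)$. Averaging over $a'\sim\pi(\cdot|s')$, I would use concavity of $t\mapsto\sqrt{t}$ (Jensen) and the inequality $\mathbb{E}[\min(U,V)]\le\min(\mathbb{E}U,\mathbb{E}V)$ to replace $\mathbb{E}_{a'}\sqrt{\min(\|(s',a')\|,\|(s',a^{in}_{neighbor})\|)}$ by $\sqrt{\min(y,x)}$ with $x,y$ exactly the quantities in the statement; finally averaging over $s'$ and absorbing $C_1$ into $C$ yields the bound.

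The step I expect to be the crux is the first one: pinning down precisely why the SBO and the empirical Bellman operator coincide on an in-sample pair except through the OOD successor actions. This means making explicit that the in-sample backup inherits the smoothed evaluation of the OOD actions proposed by $\pi(\cdot|s')$ at $s'$ (so the relevant targets are $Q_\theta(s',a^{in}_{neighbor})$ rather than $Q_\theta(s',a')$), and checking that the standing assumption on $\delta$-neighbors makes this substitution well-defined for every sampled $a'$ and keeps the point-to-dataset distance entering Proposition~\ref{safe generalization} bounded by $\delta$. Once that reduction is in place, the remaining expectation manipulations are routine.
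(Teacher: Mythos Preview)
Your proposal is correct and follows essentially the same route as the paper: the paper's own proof simply says ``similar to the proof of Theorem~\ref{empirical Bellman}, we can derive the inequality from Lemma~\ref{lemma2}, the distance in this case is bounded by $\delta$ from the assumption,'' which is exactly the reduction you lay out (and your Jensen/$\min$ step makes explicit the informal passage from a pointwise Lemma~\ref{lemma2} bound to the stated form with $x,y$). You also correctly single out the only delicate point, namely that the SBO's effect at an in-sample $(s,a)$ enters solely through replacing $Q_\theta(s',a')$ by $Q_\theta(s',a^{in}_{neighbor})$ on the OOD successor actions --- this is precisely the reading the paper uses in its convergence proof.
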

The proof of Theorem \ref{in-sample effects} is similar to Theorem \ref{empirical Bellman}. See Appendix \ref{proof 4}.
\begin{theorem}[Effects on OOD evaluation]
\label{ood effects}
    For the OOD evaluation, $\mathcal{G}_{1}$ helps mitigate underestimation and overestimation. Assuming that for all $(s,a) \in \mathcal{D}$, $Q^{k}(s,a) \approx Q^{\pi}(s,a)$. \textbf{Given} $(s,a) \notin \mathcal{D}$ and $(s,a) \in \textit{CHN}$, assuming that $\|a-a^{in}_{neighbor}\| \leq \delta$ and $\|Q^{\pi}(s,a)-Q^{k}(s,a^{in}_{neighbor})\|<\varepsilon$, if $\|Q^{k}(s,a) - Q^{k}(s,a^{in}_{neighbor})\| > \varepsilon$ (underestimation or overestimation), by applying the SBO for gradient descent updates (with infinitesimally small learning rate), we have,
    \begin{equation}
        \|Q^{\pi}(s,a)-Q^{k+1}(s,a)\| < \|Q^{\pi}(s,a)-Q^{k}(s,a)\|
    \end{equation}
    If $\|Q^{k}(s,a) - Q^{k}(s,a^{in}_{neighbor})\|\leq \varepsilon$, then $\|Q^{k+1}(s,a) - Q^{\pi}(s,a)\|< 2\varepsilon$.
\end{theorem}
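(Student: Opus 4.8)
The plan is to track a single OOD point $(s,a)$ through one SBO update and show the distance to the true value $Q^\pi(s,a)$ decreases. By Definition~\ref{SBO}, for $(s,a) \notin \mathcal{D}$ with $(s,a) \in \textit{CHN}$ the operator acts as $\widetilde{\mathcal{B}}^\pi Q^k(s,a) = (\mathcal{G}_1 \hat{\mathcal{B}}_2^\pi) Q^k(s,a) = \mathcal{G}_1 Q^k(s,a) = Q^k(s,a^{in}_{neighbor})$, since $\hat{\mathcal{B}}_2^\pi$ is the identity on the OOD-within-\textit{CHN} branch. So the Bellman target at this point is exactly $Q^k(s,a^{in}_{neighbor})$. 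The critic loss of Eq.~(\ref{critic}) does gradient descent pulling $Q_\theta(s,a)$ toward this target; with an infinitesimally small learning rate and restricting attention to the contribution at $(s,a)$, the first-order update moves $Q^{k+1}(s,a)$ an infinitesimal step from $Q^k(s,a)$ in the direction of $Q^k(s,a^{in}_{neighbor})$. I would make this precise by writing $Q^{k+1}(s,a) = Q^k(s,a) + \alpha\,(Q^k(s,a^{in}_{neighbor}) - Q^k(s,a)) + o(\alpha)$ for step size $\alpha \to 0^+$.

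First I would handle the main case $\|Q^k(s,a) - Q^k(s,a^{in}_{neighbor})\| > \varepsilon$. Using the hypothesis $\|Q^\pi(s,a) - Q^k(s,a^{in}_{neighbor})\| < \varepsilon$ (which holds because $Q^k(s,a^{in}_{neighbor}) = Q^k(s,a^{in}) \approx Q^\pi(s,a^{in}) \approx Q^\pi(s,a)$ via the near-accuracy of in-sample values and uniform continuity of $Q^\pi$, Proposition~\ref{uniform continuity}), the target $Q^k(s,a^{in}_{neighbor})$ lies strictly closer to $Q^\pi(s,a)$ than $Q^k(s,a)$ does: indeed $\|Q^\pi(s,a) - Q^k(s,a^{in}_{neighbor})\| < \varepsilon < \|Q^k(s,a) - Q^k(s,a^{in}_{neighbor})\|$. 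Moving $Q^k(s,a)$ an infinitesimal amount toward a point that is strictly closer to $Q^\pi(s,a)$ strictly decreases the distance to $Q^\pi(s,a)$ — geometrically this is because the directional derivative of $t \mapsto \|Q^\pi(s,a) - (Q^k(s,a) + t(\text{target} - Q^k(s,a)))\|$ at $t=0$ is negative whenever the target is not farther than $Q^k(s,a)$ and the three points are not collinear in the bad way; the strict inequality on the norms rules out the degenerate case. This gives $\|Q^\pi(s,a) - Q^{k+1}(s,a)\| < \|Q^\pi(s,a) - Q^k(s,a)\|$.

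For the second case $\|Q^k(s,a) - Q^k(s,a^{in}_{neighbor})\| \le \varepsilon$, the update keeps $Q^{k+1}(s,a)$ within the segment between $Q^k(s,a)$ and the target, so $\|Q^{k+1}(s,a) - Q^k(s,a^{in}_{neighbor})\| \le \max(\|Q^k(s,a) - Q^k(s,a^{in}_{neighbor})\|, 0) \le \varepsilon$, and then the triangle inequality with $\|Q^\pi(s,a) - Q^k(s,a^{in}_{neighbor})\| < \varepsilon$ yields $\|Q^{k+1}(s,a) - Q^\pi(s,a)\| < 2\varepsilon$. The main obstacle I anticipate is making the "gradient descent with infinitesimal learning rate moves strictly toward the target" step rigorous in the presence of function-approximation coupling — a parameter update for $\theta$ affects $Q_\theta$ at many inputs simultaneously, so isolating the effect at $(s,a)$ requires either a tabular idealization or an NTK-style argument that the cross-influence terms are controlled; I would state the tabular/infinitesimal-step idealization explicitly as the setting in which the claim is proved, consistent with how Theorems~\ref{empirical Bellman}–\ref{in-sample effects} invoke the NTK regime. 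The geometric lemma about strict distance decrease is routine (it is essentially convexity of the norm plus the strict norm inequality) and I would relegate it to the appendix.
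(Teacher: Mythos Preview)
Your proposal is correct and mirrors the paper's argument: both identify the SBO target at the OOD point as $Q^k(s,a^{in}_{neighbor})$, write the explicit gradient step $Q^{k+1}(s,a)=Q^k(s,a)+2\alpha\bigl[Q^k(s,a^{in}_{neighbor})-Q^k(s,a)\bigr]$, show strict decrease of the error in the first case, and use the triangle inequality for the $2\varepsilon$ bound in the second. The only cosmetic difference is that the paper exploits that $Q$-values are scalars and argues by a direct under/over-estimation case split on the real line (e.g.\ $Q^k<Q^k(s,a^{in}_{neighbor})-\varepsilon\Rightarrow Q^k<Q^\pi$, so the step increases $Q^k$ without overshooting), whereas you invoke a norm-geometry lemma that is unnecessary in one dimension---your ``collinear in the bad way'' degeneracy simply cannot occur for real numbers; your caveat about the tabular/pointwise idealization needed to ignore cross-input parameter coupling is well-placed and is in fact left implicit in the paper's own proof.
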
 
\begin{proof}[Proof sketch]\vspace{-5pt}
    If $Q^{k}(s,a) < Q^{k}(s,a^{in}_{neighbor}) - \varepsilon$ (underestimation), then by applying the gradient descent method, it follows that $Q^{k}(s,a) < Q^{k+1}(s,a) \leq Q^{\pi}(s,a)$. Similarly, the overestimation case can be addressed. The final result can be established using a similar approach combined with the triangle inequality. See Appendix \ref{proof 5}.
\end{proof}\vspace{-10pt}
From Theorem \ref{in-sample effects} and \ref{ood effects}, we observe that applying the SBO, $\widetilde{\mathcal{B}}^{\pi}$, enables the $Q$-function to gradually approximate the true OOD $Q$-values within the CHN, while incurring negligible side effects on the in-sample evaluation. Finally, we present the convergence of the SBO with its proof in Appendix \ref{proof 6}. 

It is worth noting that the SBO is designed specifically to handle OOD $Q$-values within the CHN, adhering to the safety guarantees. Extending this approach to learn $Q$-values for faraway OOD actions, which fall outside the CHN, remains an open challenge in the field.

\begin{algorithm}[htbp]
    \caption{\textbf{S}mooth \textbf{Q}-function \textbf{O}OD \textbf{G}eneralization (SQOG)}
    \label{sqog}
    \begin{algorithmic}[1]
        \STATE \textbf{Initialize:} Actor network parameter $\phi$, critic network parameters $\theta_1$ and $\theta_2$, dataset $\mathcal{D}$, target parameters $\phi^{\prime}, \theta_1^{\prime}, \theta_2{}^{\prime}$, training step $T$, smoothing parameter $\tau$, actor update frequency $m$.
        \FOR{\textup{step} $t=1$ \textup{to} $T$}
            \STATE Sample a mini-batch of transitions $\{\left(s,a,r,s^{\prime},d\right)\}$ from $\mathcal{D}$.
            \STATE Update $\theta_1$, $\theta_2$ via minimizing the critic loss Eq. (\ref{practical Q-function}).
            \IF {$t \textup{ mod } m=0$}
                \STATE Update $\phi$ via minimizing the actor loss Eq. (\ref{TD3BC actor}). 
                \STATE Update target network parameters by $\phi^{\prime}\leftarrow(1-\tau)\phi^{\prime}+\tau\phi$, $\theta_i^{\prime}\leftarrow(1-\tau)\theta_i^{\prime}+\tau\theta_i$, $i=1,2$
            \ENDIF
        \ENDFOR
    \end{algorithmic}
\end{algorithm}
\subsection{Practical Algorithm}
\label{practicalalgorithm}
Based on the smooth generalization operator $\mathcal{G}_1$ in SBO, we design an OOD generalization loss $\mathcal{L}_{OG}$ in Eq. (\ref{critic-OG}), which can be easily integrated into the objective function of critic network Eq. (\ref{critic}):
\begin{equation}\label{critic-OG}
\mathcal{L}_{OG}(\theta) = \mathbb{E}_{s\sim \mathcal{D},a^{ood}}\left[\left(Q_{\theta}(s,a^{ood})-Q(s,a^{in}_{neighbor})\right)^{2}\right]
\end{equation}

In practice, we aim to devise a low-computational-cost implementation for $a^{ood}$ and $a^{in}_{neighbor}$. Notably, during each training loop, we randomly sample a batch of state-action pairs from the dataset, which \textit{naturally} yields an in-sample action $a^{in}$. By adding noise $\eta$ to $a^{in}$, we generate a neighboring action $a^{ood} = a^{in} + \eta$\footnote{Note that the superscript ood is used to distinguish from the in-dataset real actions. If $a^{ood}$ is in-sample, the added noise will provide robustness for training the in-sample $Q$. Similar to \citep{NEURIPS2022_0b5669c3}.} ensuring that $\|a^{in} - a^{ood}\| \leq \delta$ and $(s, a^{ood}) \in \textit{CHN}$ are satisfied by appropriately controlling the noise scale. This approach allows us to sample pairs of $a^{in}_{neighbor}$ and $a^{ood}$ with minimal computational cost. Consequently, by combining with Eq. (\ref{critic}) and (\ref{critic-OG}), we achieve a practical $Q$-learning objective function of critic networks with low-computational-cost:
\begin{equation}\label{practical Q-function}
\begin{gathered}
\mathcal{L}_{SQOG}(\theta_i) =\mathbb{E}_{(s,a,r,s^{\prime})\sim \mathcal{D}}\left[\left(Q_{\theta_i}(s,a)-\left(r+\gamma\min_i \hat{Q}_{\theta^{\prime}_i}(s^{\prime},a^{\prime})\right)\right)^{2}\right] \\
+\beta\mathbb{E}_{(s,a)\sim \mathcal{D}}\left[\left(Q_{\theta_i}(s,a+\eta)-\bar{Q}_{\theta_i}(s,a)\right)^{2}\right] 
\end{gathered}
\end{equation}
where $\hat{Q}_{\theta'_i}(s',a')$ represents the $Q$ target network outputs, $a^{\prime} = \pi_\phi(s)$, $\bar{Q}_{\theta_i}(s,a)$ is the $Q$ network output with the gradient detached, $i\in \{1,2\}$. Similar to TD3+BC \citep{NEURIPS2021_a8166da0}, a representative offline Actor-Critic algorithm, we set the objective function of actor network as:
\begin{equation}\label{TD3BC actor}
    \mathcal{J}(\phi)=-\mathbb{E}_{(s,a)\sim \mathcal{D}}[\lambda Q_{\theta_1}(s,\pi_{\phi}(s))-(\pi_{\phi}(s)-a)^2]
\end{equation}
where $\lambda=\alpha N/\sum_{s_{i},a_{i}}Q(s_{i},a_{i})$, $\alpha$ is a hyperparameter, $N$ is the batch-size. The pseudo-code of SQOG is in Algorithm \ref{sqog}. Further discussions on SQOG are provided in Appendix \ref{Appendix D} and \ref{Appendix E}.

\section{Experiments}\label{experiments}
In this section, we first empirically show that compared to TD3+BC, SQOG alleviates the over-constraint issue, leading to more accurate $Q$-value estimation. Second, we highlight the advantages of our algorithm on the D4RL benchmarks \citep{fu2021d4rl}, where SQOG demonstrates superior performance and computational efficiency. Finally, we present an ablation study to analyze the contributions of the key components in our approach.

\begin{figure}[ht]
\centering  
\subfigure[$Q$-values for state $s_1$]{
\label{Q_value1}
\includegraphics[width=0.4\textwidth]{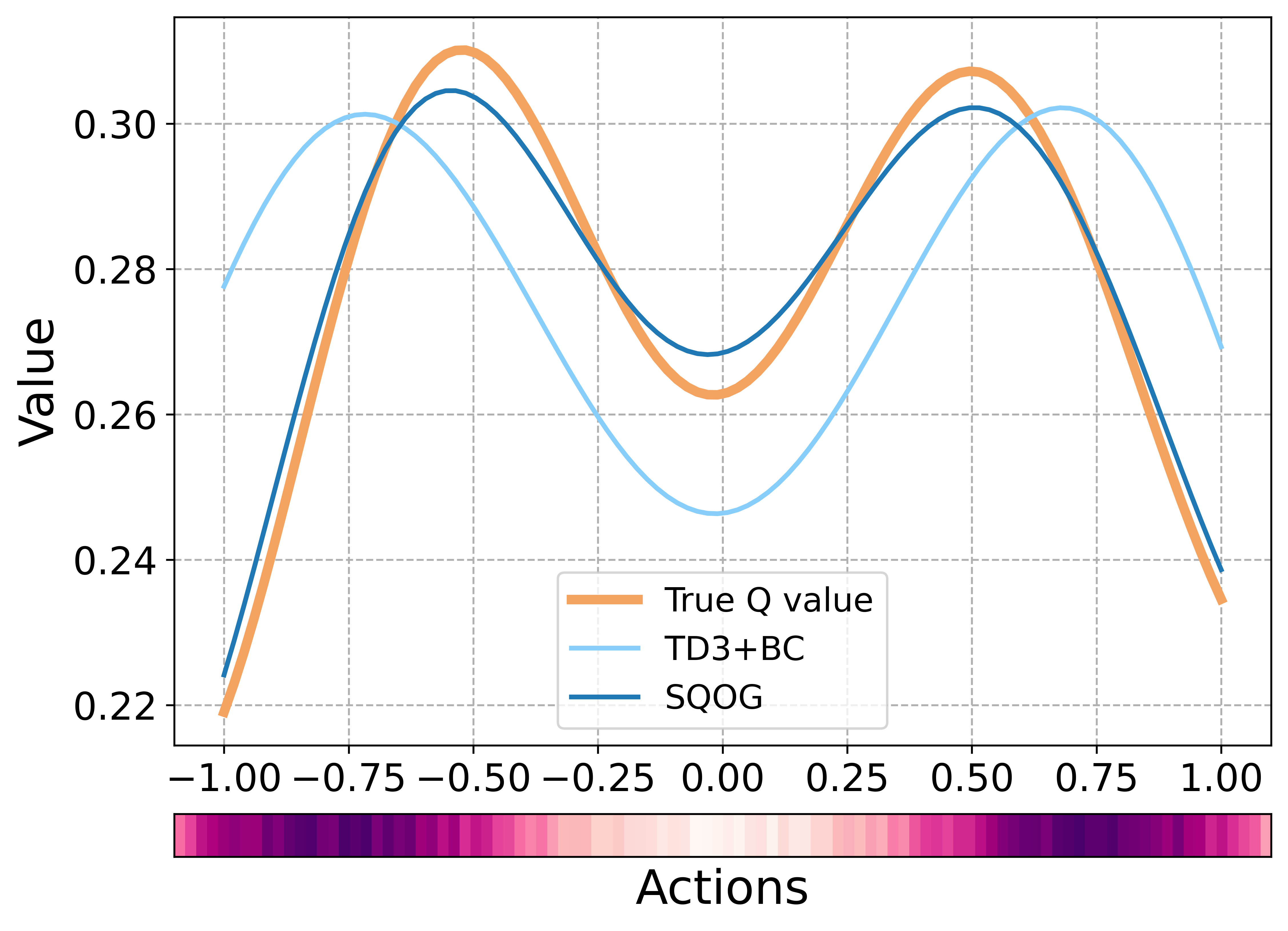}}
\subfigure[$Q$-values for state $s_2$]{
\label{Q_value2}
\includegraphics[width=0.4\textwidth]{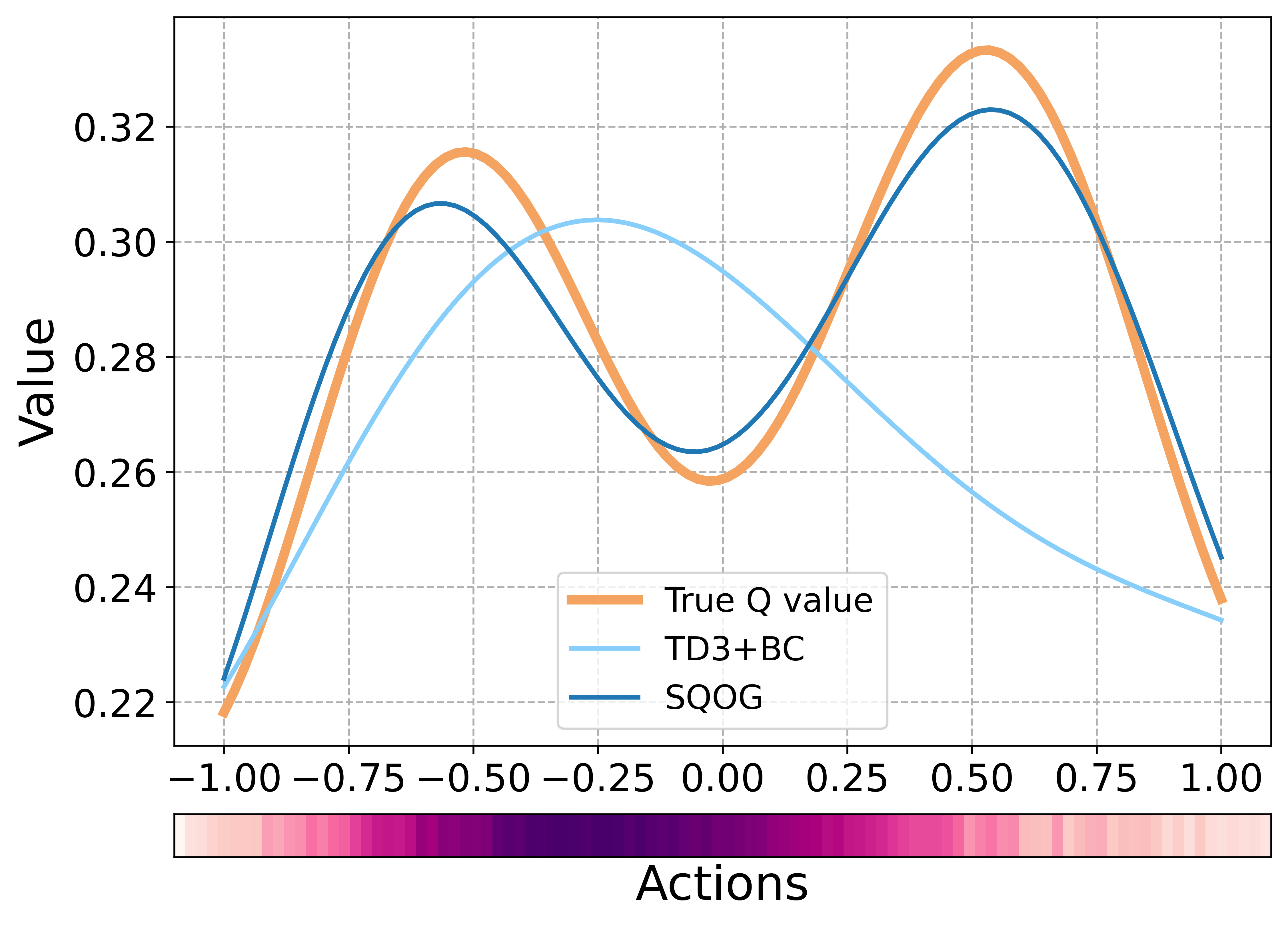}}
\caption{$Q$-values estimation for two key states. The color bars show the density of different actions. Higher density actions correspond to darker colors. With the tight constraints of the behavior policy, the $Q$-values of TD3+BC are overly constrained within [-0.50, 0.50] as shown in Figure \ref{Q_value1}, while in Figure \ref{Q_value2}, the $Q$-values are overly constrained within [-0.80, -0.40] and [0.25, 1.00]. However, SQOG consistently achieves accurate estimation of $Q$-values in most cases.}
\label{figure-Q}
\end{figure}
\begin{table}[ht]
\vspace{-0.5cm}
  \caption{Normalized average score comparison of SQOG against baseline methods on D4RL benchmarks over the final 10 evaluations and 4 random seeds. We bold the highest scores.}
  \label{table}
  \centering
  \resizebox{\linewidth}{!}{
  \begin{tabular}{lllllll||l}
    \toprule
    Dataset  & BC & TD3+BC & CQL & IQL & DOGE & MCQ & SQOG \\
    \midrule
    halfcheetah-r & 2.2$\pm$0.0  & 11.0$\pm$1.1 & 17.5$\pm$1.5 & 13.1$\pm$1.3  &17.8$\pm$1.2 & 23.6$\pm$0.8 & \textbf{25.6$\pm$0.4}   \\
    hopper-r & 3.7$\pm$0.6 & 8.5$\pm$0.6 & 7.9$\pm$0.4 & 7.9$\pm$0.2 & 21.1$\pm$12.6 & \textbf{31.0$\pm$1.7} & 15.6$\pm$3.3 \\
    walker2d-r & 1.3$\pm$0.1 & 1.6$\pm$1.7 & 5.1$\pm$1.3 & 5.4$\pm$1.2  & 0.9$\pm$2.4 & 10.3$\pm$6.8 & \textbf{17.7$\pm$3.5} \\
    \midrule
    halfcheetah-m & 43.2$\pm$0.6 & 48.3$\pm$0.3 & 47.0$\pm$0.5 & 47.4$\pm$0.2 & 45.3$\pm$0.6 & 58.3$\pm$1.3 & \textbf{59.2$\pm$2.4} \\
    hopper-m & 54.1$\pm$3.8 & 59.3$\pm$4.2 & 53.0$\pm$28.5 & 66.2$\pm$5.7 & 98.6$\pm$2.1 & 73.6$\pm$10.3 & \textbf{100.6$\pm$0.7} \\
    walker2d-m & 70.9$\pm$11.0 & 83.7$\pm$2.1 & 73.3$\pm$17.7 & 78.3$\pm$8.7  & 86.8$\pm$0.8 & \textbf{88.4$\pm$1.3} & 82.9$\pm$0.8 \\
    \midrule
    halfcheetah-m-r & 37.6$\pm$2.1 & 44.6$\pm$0.5 & 45.5$\pm$0.7 & 44.2$\pm$1.2 & 42.8$\pm$0.6 & \textbf{51.5$\pm$0.2} & 46.4$\pm$1.2 \\
    hopper-m-r & 16.6$\pm$4.8 & 60.9$\pm$18.8 & 88.7$\pm$12.9 & 94.7$\pm$8.6 & 76.2$\pm$17.7 & 99.5$\pm$1.7 & \textbf{100.9$\pm$5.1} \\
    walker2d-m-r & 20.3$\pm$9.8 & 81.8$\pm$5.5 & 81.8$\pm$2.7 & 73.8$\pm$7.1  & 87.3$\pm$2.3 & 83.3$\pm$1.9 & \textbf{88.3$\pm$3.5} \\
    \midrule
    halfcheetah-m-e & 44.0$\pm$1.6 & 90.7$\pm$4.3 & 75.6$\pm$25.7 & 86.7$\pm$5.3 & 78.7$\pm$8.4 & 85.4$\pm$3.4 & \textbf{92.6$\pm$0.4} \\
    hopper-m-e & 53.9$\pm$4.7 & 98.0$\pm$9.4 & 105.6$\pm$12.9 & 91.5$\pm$14.3 & 102.7$\pm$5.2 & 106.1$\pm$2.3 & \textbf{109.2$\pm$2.8} \\
    walker2d-m-e & 90.1$\pm$13.2 & 110.1$\pm$0.5 & 107.9$\pm$1.6 & 109.6$\pm$1.0 & \textbf{110.4$\pm$1.5} & 110.3$\pm$0.1 & 109.0$\pm$0.3 \\
    \midrule
    Mujoco Average  & 36.5 & 58.2 & 61.8 & 59.9 & 64.1 & 68.4 & \textbf{70.7} \\
    \midrule
    Maze2d Average &-2.0  & 35.0 & 19.6 & 37.2 & - & 102.2 & \textbf{124.7}  \\
    \midrule
    Adroit Total &93.9 &0.0 &93.6 &110.7 &- &123.3 & \textbf{149.6}\\
    \midrule
    Runtime (h) & 0.3 & 0.4& 10.8& 0.4 & 0.9 &8.0 & 0.4 \\
    \bottomrule
  \end{tabular}}
\end{table}

\textbf{Sanity check: alleviation of the over-constraint issue.}
To demonstrate the alleviation of the over-constraint issue, we construct a dataset using the Mujoco environment ``Inverted Double Pendulum'', chosen for its one-dimensional action space and appropriate task complexity (see Appendix \ref{wider evidence} for wider evidence on high-dimensional tasks). The dataset is generated by training a policy online using Soft Actor-Critic \citep{haarnoja2018soft} and subsequently collecting 1 million samples from the trained policy. We select two key states (the most frequently occurring ones in the dataset) to illustrate the estimation of $Q$-values and use TD3+BC to highlight the over-constraint issue. For each state, we compute the $Q$-values for every 0.01 increment within the action range [-1.0, 1.0], using the critic networks of TD3+BC and SQOG. The true $Q$-values are obtained by a Monte Carlo method, where the discounted return is computed for the same state-action pairs under the same policy. To facilitate comparison, we smooth the values using cubic spline interpolation.

Based on the color bars in Figure \ref{figure-Q}, we can identify the OOD regions. In Figure \ref{Q_value1}, the highest true value occurs within the range [-0.50, 0.50], corresponding to OOD regions inside the \textbf{convex hull}. In Figure \ref{Q_value2}, the highest true value is located within [0.30, 1.00], representing OOD regions in the \textbf{neighborhood} of the convex hull. However, TD3+BC struggles with the over-constraint issue in these OOD regions, failing to accurately estimate $Q$-values for policy evaluation. In contrast, SQOG successfully estimates $Q$-values by smoothly generalizing in the OOD regions within the CHN (inside the convex hull in \ref{Q_value1} and its neighborhood in \ref{Q_value2}). These results from our sanity check demonstrate that improving $Q$-value generalization in the OOD regions within the CHN leads to better policy evaluation, reinforcing our theoretical analysis.

\textbf{Results on D4RL benchmarks.}
We evaluate our proposed approach on the D4RL benchmarks of OpenAI gym Mujoco locomotion tasks \citep{DBLP:journals/corr/BrockmanCPSSTZ16,6386109}. For baselines, we choose representative offline model-free algorithms of different categories including BC, TD3+BC \citep{NEURIPS2021_a8166da0}, CQL \citep{kumar2020conservative}, IQL \citep{kostrikov2021offline}, as well as including DOGE \citep{li2022data}, MCQ \citep{NEURIPS2022_0b5669c3} due to their high performance. For fairness, we choose four types of the ``-v2'' datasets (r:random, m:medium, m-r:medium-replay, m-e:medium-expert) for all methods, yielding a total of 12 datasets. We conduct additional experiments on 4 Maze2d ``-v1'' datasets and 8 Adroit \citep{DBLP:journals/corr/abs-1709-10087} ``-v1'' datasets (see Appendix \ref{Appendix B}). The results of BC, TD3+BC, CQL, IQL and MCQ are obtained from MCQ paper \citep{NEURIPS2022_0b5669c3}, DOGE \citep{li2022data} is obtained from its own paper. All methods are run for 1 M gradient steps.

In Table \ref{table}, we present the Mujoco results alongside the average scores for Maze2d and total scores for Adroit tasks. For detailed results, please refer to Appendix \ref{Appendix B}. As demonstrated, SQOG consistently attains the highest scores on most datasets and achieves the highest average scores across the Mujoco, Maze2d, and Adroit tasks. The second-ranking MCQ algorithm approaches ours in average score but runs \textbf{20 times} slower. Compared to the representative method TD3+BC, SQOG demonstrates a \textbf{70\%} performance improvement on hopper-medium-v2 dataset and significant improvements in average scores, with minimal increase in computational cost. Based on the benchmark results, SQOG exhibits performance improvement, which aligns with our theoretical analysis indicating that SQOG achieves better evaluation.

\textbf{Computational cost.}
Time complexity is a significant challenge in offline RL. We evaluate run time of training each offline RL algorithms for 1 million time steps, using the author-provided implementations or the re-implementations of the source code using JAX. The results are reported in Figure \ref{runtime}. In contrast to MCQ and CQL, our approach significantly reduces computational costs by avoiding the use of a generative model for behavior modeling (Figure \ref{additional implementation}). Compared to TD3+BC, the supplementary OOD generalization term is computationally-free due to the low-computational-cost implementation of our OOD sampling methods.

\begin{figure}[htbp]
  \centering
  \begin{minipage}{0.43\linewidth}
    \centering
    \includegraphics[width=\textwidth]{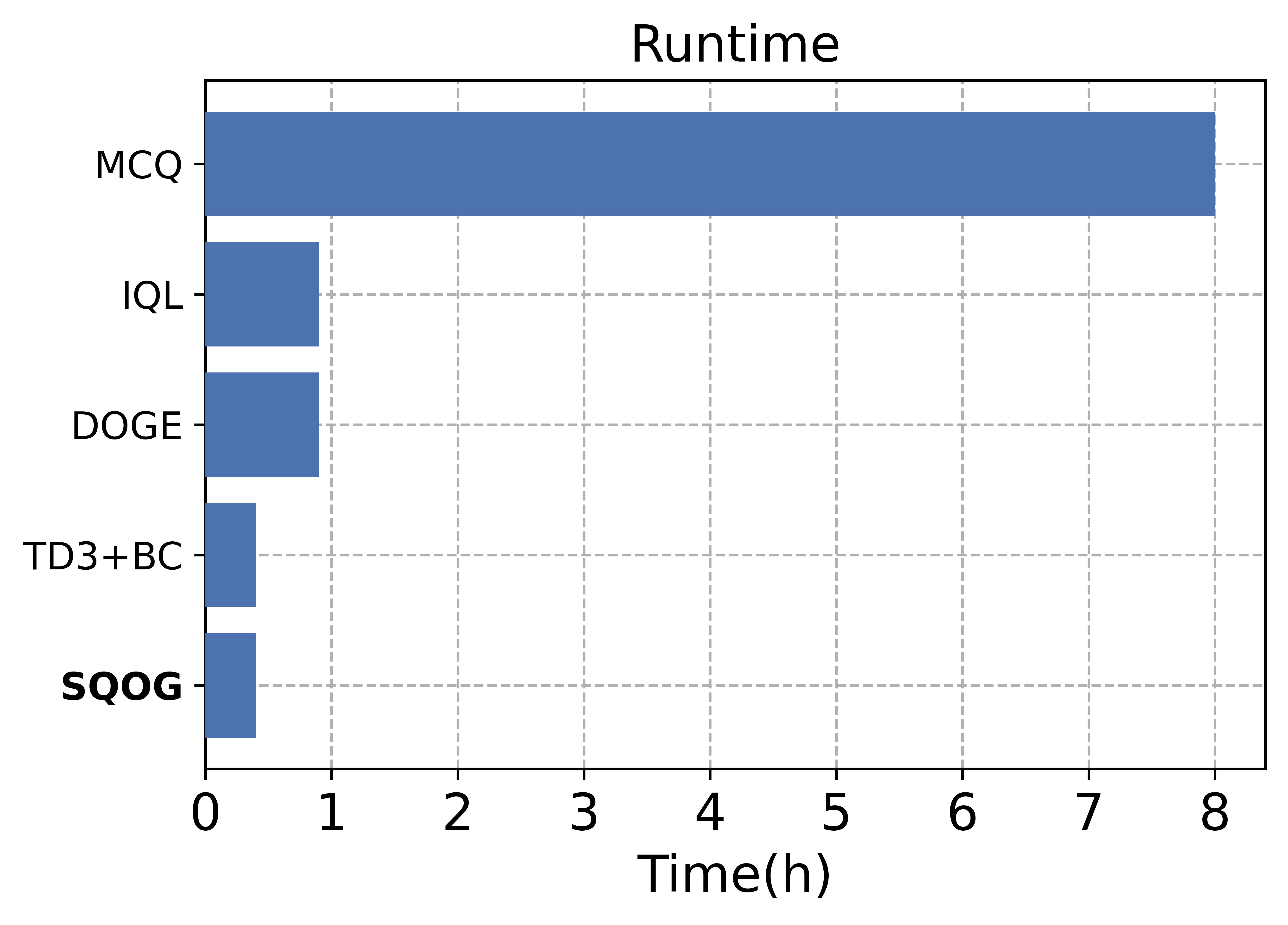}
    \vspace{-0.3cm}
    \caption{Average run time on Mujoco locomotion tasks.}
    \label{runtime}
  \end{minipage}
  \hfill
  \begin{minipage}{0.5\linewidth}
    \centering
    \small
    \begin{tabular}{p{0.4\linewidth}p{0.2\linewidth}p{0.2\linewidth}}
    \toprule
    Key Features  & MCQ & SQOG \\
    \midrule
    Loss Modification & Critic & Critic\\
    Generative Model & CVAE & None \\
    OOD Sampling &From $\pi$ & Add noise \\
    \bottomrule
    \end{tabular}
    \caption{The key features of SQOG and MCQ \citep{NEURIPS2022_0b5669c3}. The use of CVAE makes MCQ time consuming. In contrast, SQOG avoids the use of any generative model, achieving SOTA results with low computational cost.}
    \label{additional implementation}
  \end{minipage}
\end{figure}

\begin{figure}[htbp]
\vspace{-0.5cm}
\centering  
\subfigure[Effects of $\beta$]{
\label{para.sub.1}
\includegraphics[width=0.3\textwidth]{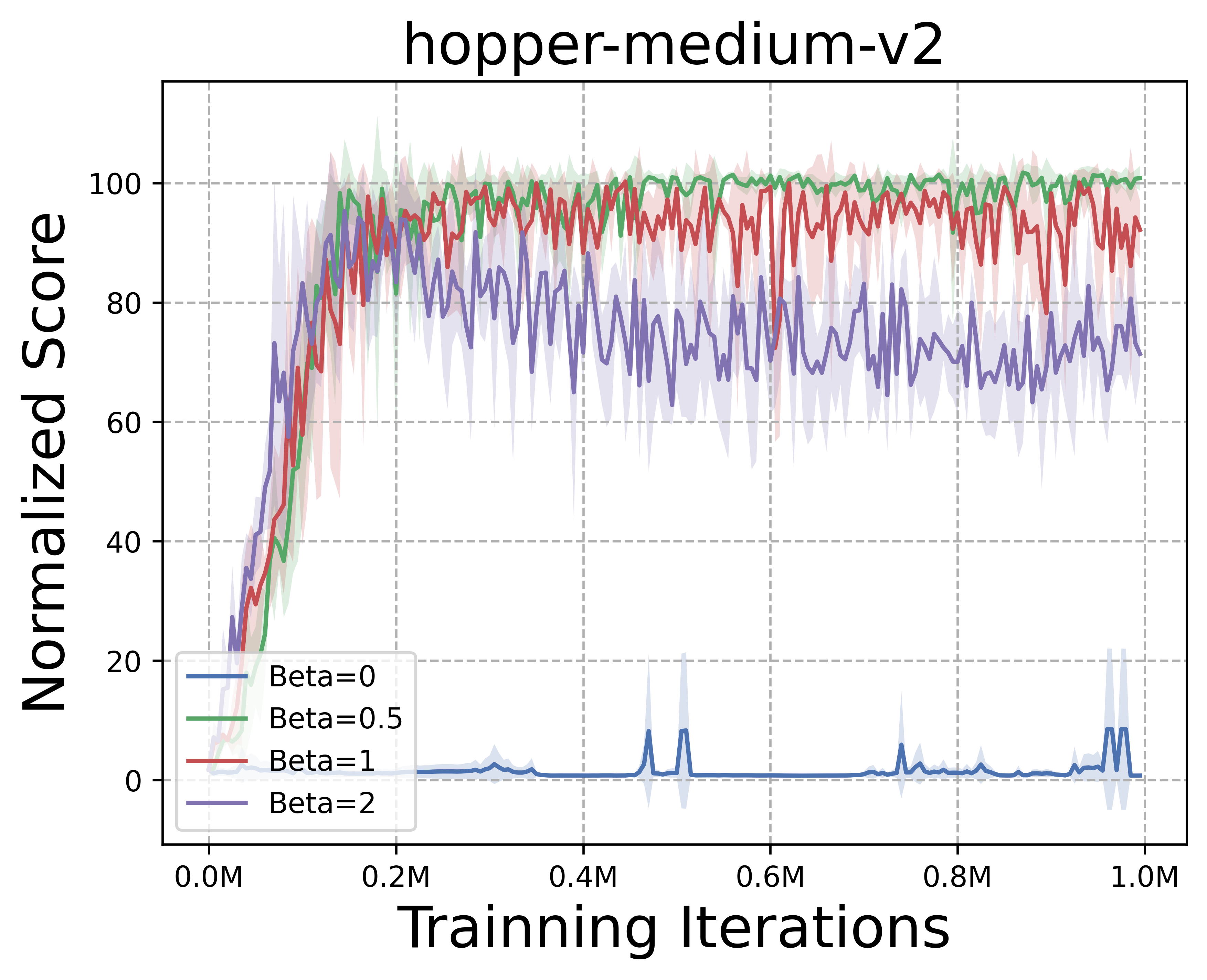}}
\subfigure[Effects of $\beta$]{
\label{para.sub.2}
\includegraphics[width=0.3\textwidth]{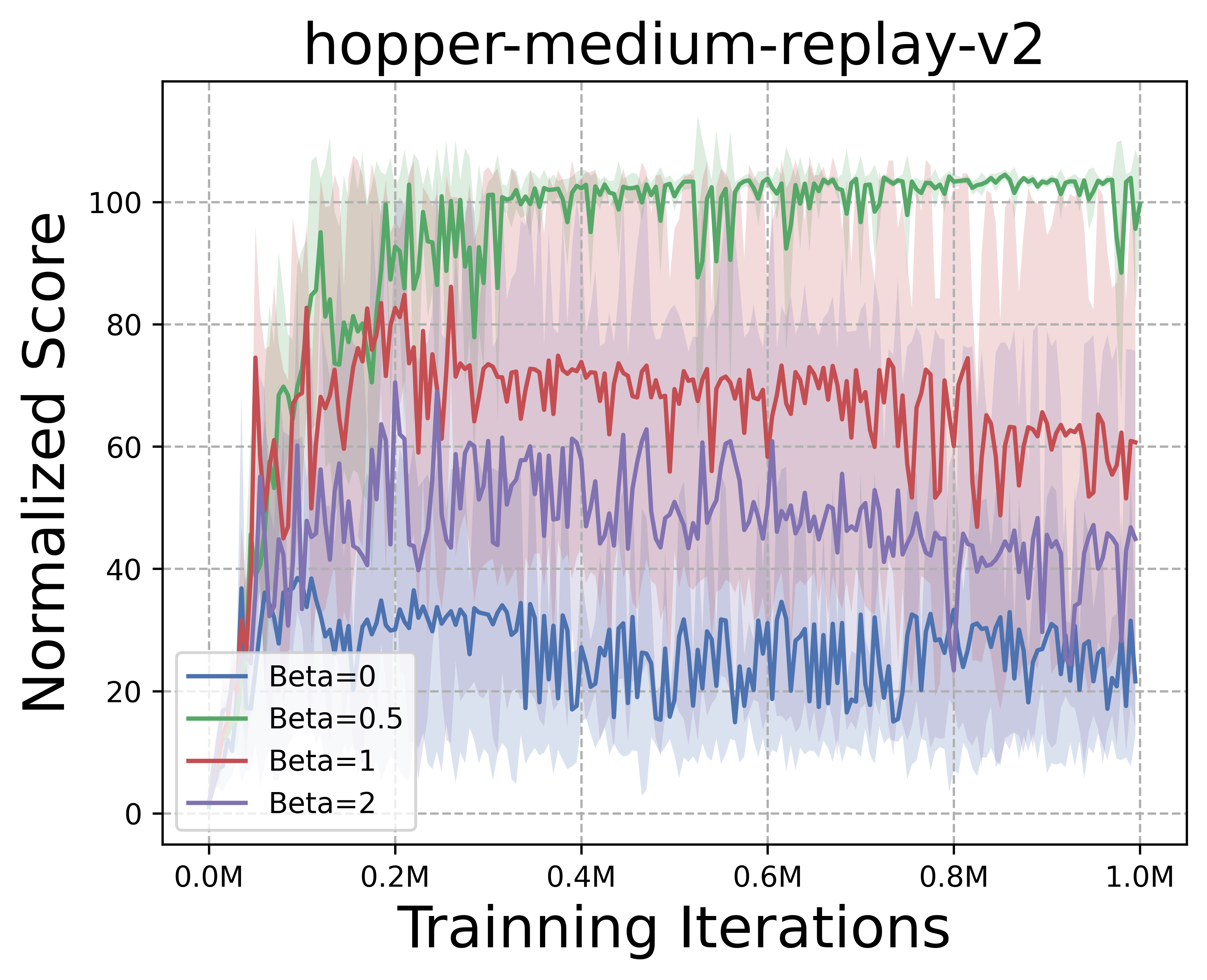}}
\subfigure[Effects of $\beta$]{
\label{para.sub.3}
\includegraphics[width=0.3\textwidth]{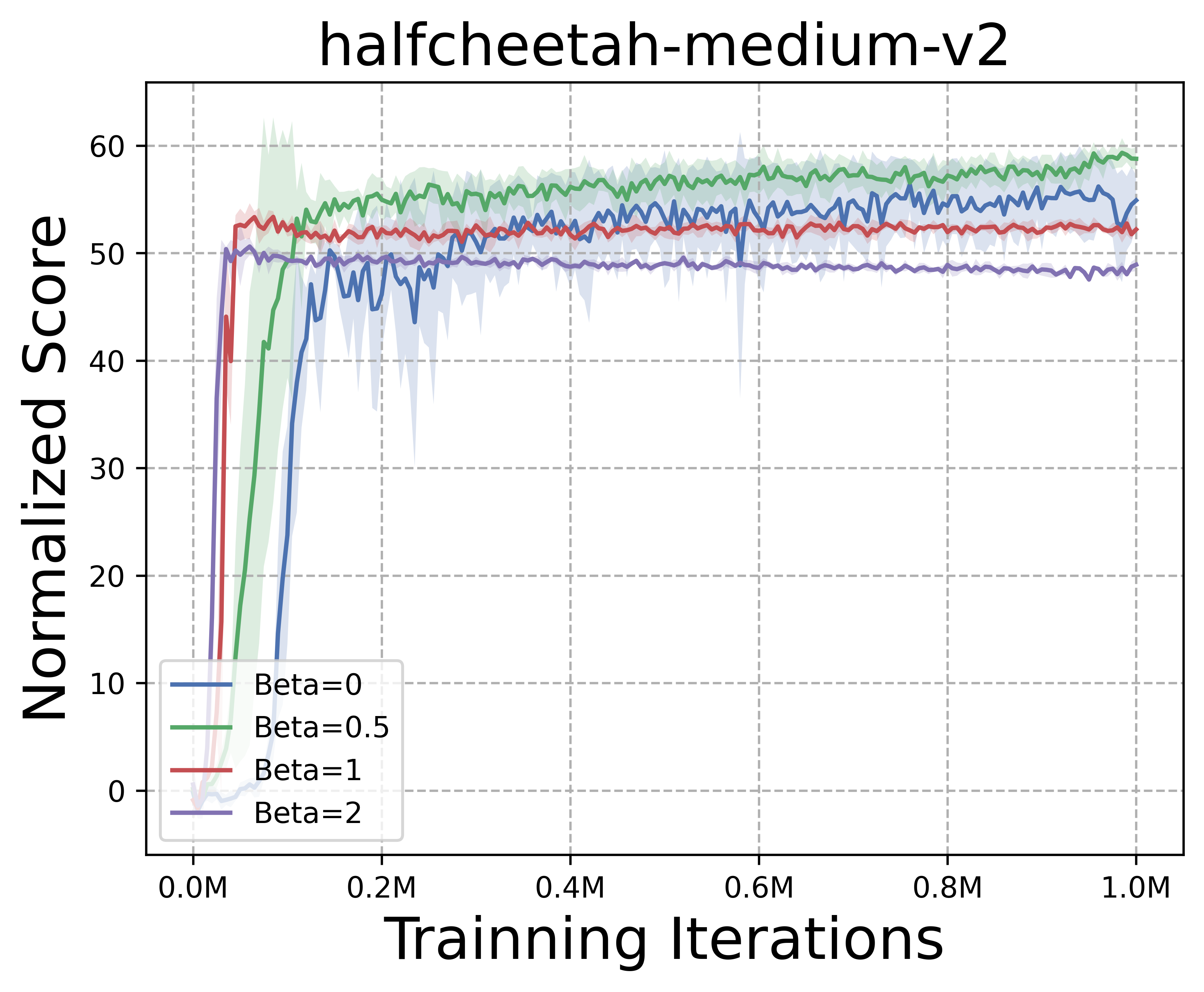}}
\subfigure[Effects of noise type]{
\label{noise.sub.1}
\includegraphics[width=0.3\textwidth]{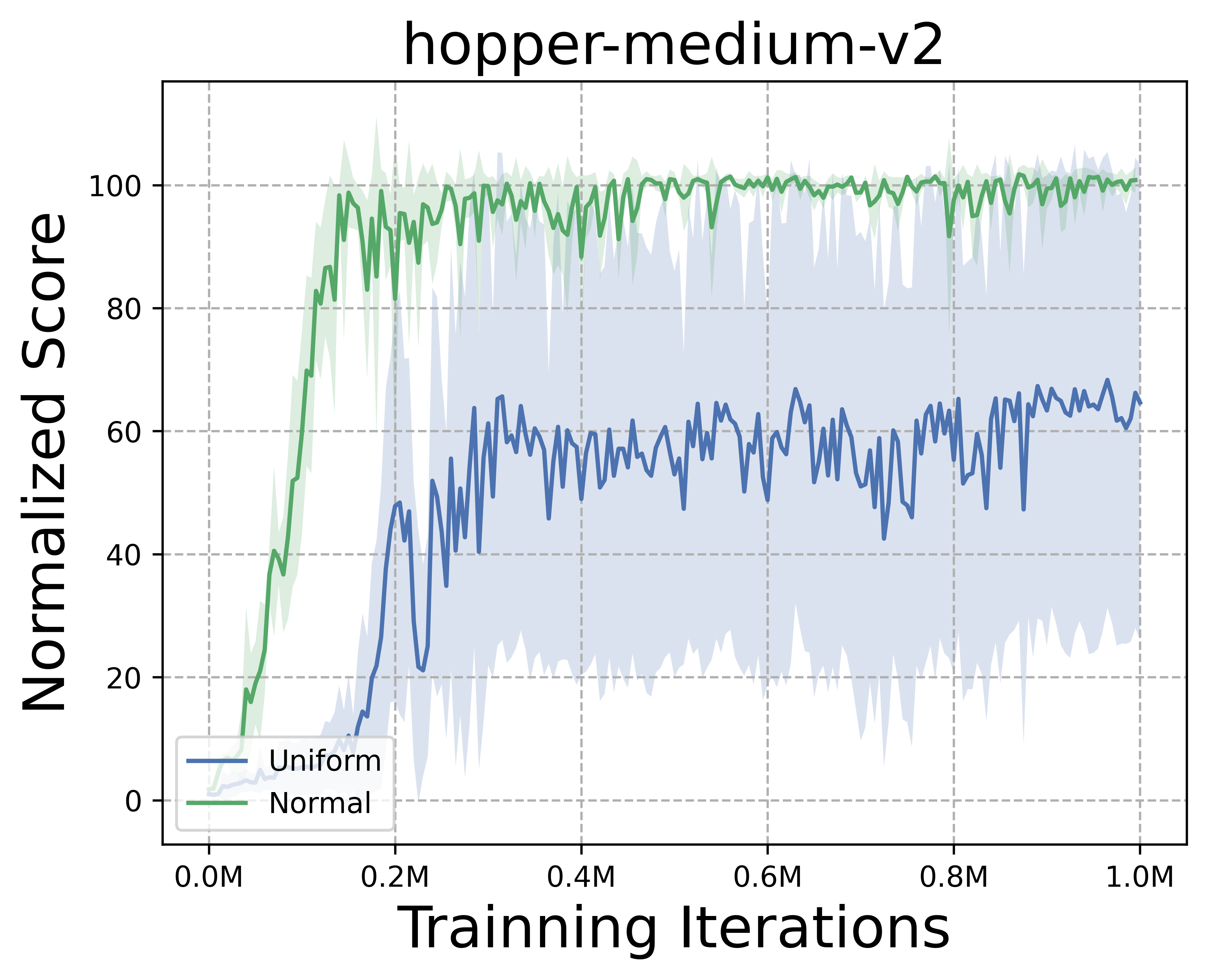}}
\subfigure[Effects of noise type]{
\label{noise.sub.2}
\includegraphics[width=0.3\textwidth]{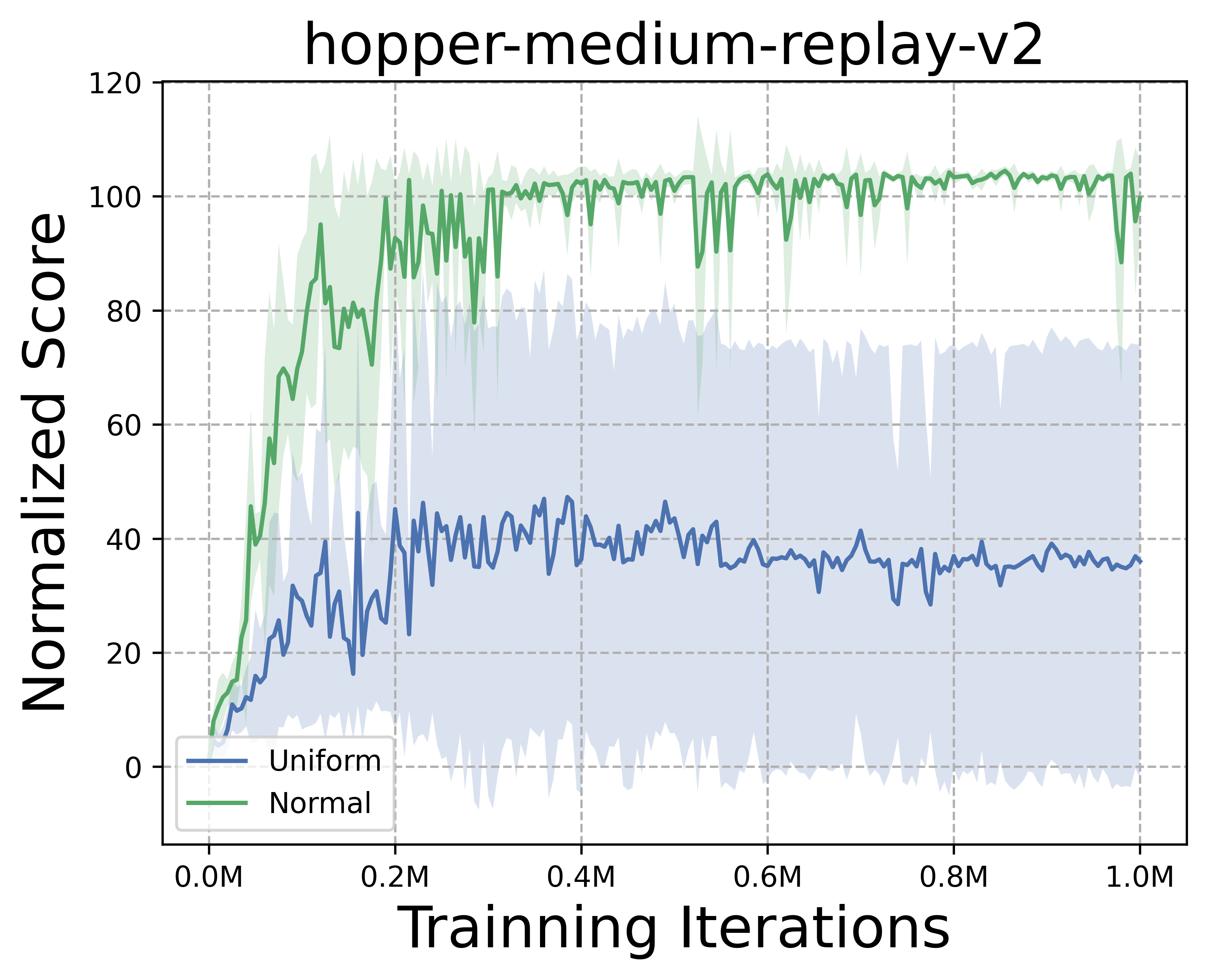}}
\subfigure[Effects of noise type]{
\label{noise.sub.3}
\includegraphics[width=0.3\textwidth]{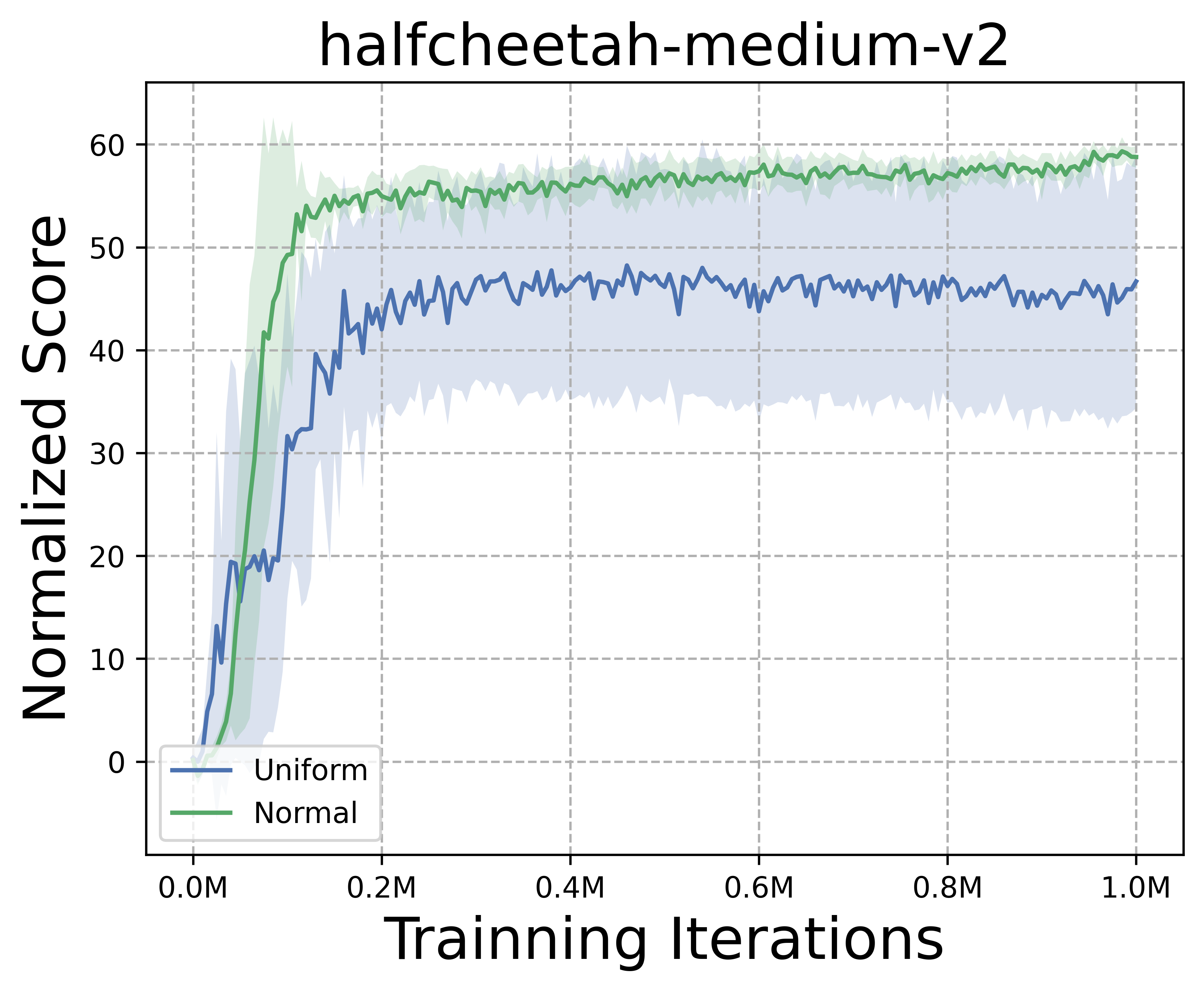}}
\caption{Hyperparameter study and noise study on hopper-medium-v2, hopper-medium-replay-v2, halfcheetah-medium-v2. The experiments are run for 1M gradient steps over 4 random seeds.}
\label{Fig.main}
\vspace{-0.5cm}
\end{figure}

\textbf{Ablation study.}
We conduct ablation studies on hyperparameter $\beta$ and the noise type. The hyperparameter $\beta$ controls the significance of the OOD generalization term in $Q$-learning, specifically balancing the learning weight between the OOD $Q$-values and the in-sample $Q$-values. We investigate the effects of four different values of $\beta$ to understand its impact. Our findings (Figure \ref{para.sub.1}, \ref{para.sub.2}, \ref{para.sub.3}) indicate that $\beta = 0.5$ generally yields optimal performance. Larger values of $\beta$ make it difficult to achieve accurate in-sample $Q$-values, while smaller values of $\beta$ hinder the learning of OOD $Q$-values, leading to reduced performance. Additional study on hyperparameter $\alpha$ is provided in Appendix \ref{add_exp:alpha}.

Additionally, we examined two commonly used types of noise, both constrained to the same range [-0.5, 0.5]. As shown in Figures \ref{noise.sub.1}, \ref{noise.sub.2} and \ref{noise.sub.3}, the normal noise with clipping appears to be a straightforward and effective choice for dataset noise. In contrast, uniform noise tends to generate overly random OOD samples that are sparsely and evenly distributed, which may limit its ability to focus on critical OOD regions. Further details, including an additional study on noise clipping, are provided in Appendix \ref{addexp:scale&clip} and \ref{addexp:noisetype}.

\section{Related Work}
\textbf{Dataset structure.}
Manifold learning \citep{roweis2000nonlinear,tenenbaum2000global,coifman2006diffusion,belkin2001laplacian,wang2004adaptive,barannikov2021manifold,hegde2007random,brehmer2020flows} is a kind of dataset structure learning method aiming to uncover the underlying structure of high-dimensional data. However, manifold learning faces challenges when applied to offline RL datasets due to the complexity of the data and the trajectory information in the high dimensional state-action space. Convex hull is another dataset structure used in dataset analysis and algorithm designation in supervised learning settings such as classification and regression \citep{fung2005learning,khosravani2016convex,xiong2014zeta,nemirko2021machine,xu2021uncertainty}. Understanding the structure of RL datasets facilitates the design of superior algorithms, enhancing their performance, stability, and generalization capability \citep{wang2020improving,schweighofer2022dataset,hong2023beyond}. DOGE \citep{li2022data} was the first to apply the convex hull in the design of offline RL algorithms. SEABO \citep{lyu2024seabosimplesearchbasedmethod}, PRDC \citep{prdc} and \citep{sun2023accountabilityofflinereinforcementlearning} utilize nearest neighbor techniques to address practical challenges. Building upon these previous works, we propose a new dataset structure called CHN and analyze its safety guarantees for generalization.

\textbf{Model-free offline RL.}
Offline RL algorithms address the challenge of distribution shift by employing different strategies.
Model-free offline RL can be broadly categorized into policy-based approaches \citep{wang2018exponentially,fujimoto2019off,peng2019advantage,prdc,DBLP:journals/corr/abs-1911-11361,NEURIPS2021_a8166da0,kostrikov2021offline,kostrikov2021offline-fbrc,NEURIPS2019_c2073ffa,li2022data,mao2024on} and value-based approaches \citep{kumar2020conservative,kostrikov2021offline,NEURIPS2022_0b5669c3,an2021uncertainty,ghasemipour2022so,xu2023offline,zhang2023insample,yang2024robustofflinereinforcementlearning,lee2024kernelmetriclearninginsample,geng2024improvingofflinerlblending}. Policy-based approaches like BCQ \citep{fujimoto2019off}, BRAC \citep{DBLP:journals/corr/abs-1911-11361} and TD3+BC \citep{NEURIPS2021_a8166da0} solely rely on the distribution of behavior policy, leading to overly constrained learned policies. PRDC \citep{prdc} relaxes policy constraints by utilizing the entire dataset, while DOGE \citep{li2022data} introduces a novel policy constraint that enables exploitation in OOD areas within the dataset convex hull. Instead of directly modifying the constraint term during policy improvement, we focus on policy evaluation. Our method, MQOG, achieves better policy evaluation, which indirectly addresses the over-constraint issue arising from policy improvement. Compared to value-based approaches like IQL \citep{kostrikov2021offline} and MCQ \citep{NEURIPS2022_0b5669c3}, which approximate the optimal in-sample $Q$ for $Q$-learning, MQOG aims to approximate the true $Q$ for policy evaluation. Our theoretical and empirical results demonstrate that improved evaluation leads to enhanced performance.

\section{Conclusion}\label{conclusion}

In this paper, we introduce Smooth Q-function OOD Generalization (SQOG) to achieve better policy evaluation by improving $Q$ generalization in OOD regions within the CHN. We provide the safety guarantees for the CHN, indicating that our approach to OOD generalization is unlikely to adversely affect evaluation. SQOG trains OOD $Q$-values by constructing appropriate OOD target values with guidance from the Smooth Bellman Operator (SBO). Specifically, we use the neighboring in-sample $Q$-values to update OOD $Q$-values in the SBO.
Theoretically, we demonstrate that the SBO is appropriate and enables the $Q$-function to approximate the true $Q$-values within the CHN. Furthermore, we conduct a sanity check to show that SQOG achieve better $Q$ estimation by improving the generalization in OOD regions, thereby alleviating the over-constraint issue. Experiments on the D4RL benchmarks demonstrate that SQOG outperforms baseline methods across most datasets, highlighting the importance of accurate evaluation in OOD regions within the CHN. We anticipate that our work will draw greater attention to $Q$-value estimation in OOD regions and provide new insights for the offline RL community.

Finally, it is crucial to emphasize that precisely solving the CHN and identifying all OOD regions within CHN for each dataset is impractical and unnecessary. However, various ingenious approaches can be employed to improve the estimation of the OOD $Q$-values within the CHN.

\section*{Acknowledgement}
This work was supported by the National Science and Technology Major Project under Grant 2022ZD0116401, the National Natural Science Foundation of China under Grant 62141605, and the Fundamental Research Funds for the Central Universities, China. We acknowledge the support of high-performance computing resources provided by the School of Mathematical Sciences of Beihang University. 
We would like to thank Xiaotie Deng, Xu Chu, Yurong Chen, Zhijian Duan and the anonymous reviewers for their insightful comments on improving the paper.

\newpage
\bibliography{main}
\bibliographystyle{iclr2025_conference}
\newpage
\appendix
\section{Additional Theoretical Results and Missing Proofs}
\label{Appendix A}
Before providing the missing proofs, we briefly introduce the Neural Tangent Kernel (NTK) \citep{DBLP:journals/corr/abs-1806-07572}. NTK is widely used in the analysis of the generalization, the convergence and optimality of deep RL. In this paper, we complete our proof of Proposition \ref{safe generalization} and Theorem \ref{empirical Bellman} and \ref{ood effects} under the NTK regime. 

Following DOGE \citep{li2022data}, we introduce Assumption \ref{NTK assum} and Lemma \ref{lemma1} and \ref{lemma2}.

\begin{assumption}\label{NTK assum}
    We assume the function approximators discussed in our paper are two-layer fully-connected ReLU neural networks with infinity width and are trained with infinitesimally small learning rate unless otherwise specified.
\end{assumption}

\begin{lemma}[Smoothness of the kernel map of two-layer ReLU networks]
\label{lemma1}
    Let $\phi$ be the kernel map of the neural tangent kernel induced by a two-layer ReLU neural network, $x$ and $y$ be two inputs, then $\phi$ satisfies the following smoothness property.
    \begin{equation}
        \|\phi(x)-\phi(y)\| \leq \sqrt{min(\|x\|,\|y\|)\|x-y\|}+2\|x-y\|
    \end{equation}
\end{lemma}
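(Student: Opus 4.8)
The final statement to prove is Lemma \ref{lemma1}, the smoothness property of the kernel map of the neural tangent kernel induced by a two-layer ReLU network.

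\textbf{Approach.} The plan is to exploit the explicit form of the NTK feature map for a two-layer ReLU network and reduce the claimed inequality to two elementary estimates. Recall that for a two-layer ReLU net $f(x) = \frac{1}{\sqrt{m}}\sum_{k=1}^m v_k \sigma(w_k^\top x)$, the NTK feature map $\phi(x)$ has (in the infinite-width limit) two blocks: one coming from the gradient with respect to the output-layer weights $v_k$, whose entries are proportional to $\sigma(w_k^\top x)$, and one from the gradient with respect to the hidden weights $w_k$, whose entries are proportional to $v_k\,\sigma'(w_k^\top x)\,x = v_k\,\1[w_k^\top x \ge 0]\,x$. So up to the usual normalization, $\phi(x)$ is built from the two maps $x \mapsto \sigma(w^\top x)$ and $x \mapsto \1[w^\top x\ge 0]\,x$, integrated/averaged over the distribution of $(w,v)$. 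I would therefore first write $\|\phi(x) - \phi(y)\|^2$ as the sum of the squared contributions of these two blocks.

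\textbf{Key steps.} First, for the output-layer block I would bound $|\sigma(w^\top x) - \sigma(w^\top y)| \le |w^\top(x-y)| \le \|w\|\,\|x-y\|$ using $1$-Lipschitzness of $\sigma$, and after taking the expectation over the (normalized) weight distribution this contributes a term of order $\|x-y\|^2$. Second, and this is the crux, for the hidden-layer block I must control $\|\1[w^\top x\ge 0]\,x - \1[w^\top y\ge 0]\,y\|$. I would split this as $\1[w^\top x\ge 0](x-y) + (\1[w^\top x\ge 0] - \1[w^\top y\ge 0])\,y$; the first piece is bounded by $\|x-y\|$, while the second piece is nonzero only when $w$ separates $x$ and $y$, i.e. the sign of $w^\top x$ differs from that of $w^\top y$. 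The measure of such $w$ (for $w$ uniform on the sphere, or Gaussian) is the angle between $x$ and $y$ divided by $\pi$, and a standard geometric estimate bounds this probability by something like $\frac{\|x-y\|}{\min(\|x\|,\|y\|)}$ (up to constants); multiplying by $\|y\|^2 \le $ the relevant norm and taking square roots produces the $\sqrt{\min(\|x\|,\|y\|)\,\|x-y\|}$ term. Combining the two blocks via $\sqrt{a+b} \le \sqrt{a} + \sqrt{b}$ and absorbing constants into the stated bound gives $\|\phi(x)-\phi(y)\| \le \sqrt{\min(\|x\|,\|y\|)\|x-y\|} + 2\|x-y\|$. Since this is quoted from DOGE (\citealp{li2022data}), I would also note that the cleanest route is simply to cite their Lemma and reproduce the argument for completeness.

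\textbf{Main obstacle.} The hard part is the sign-mismatch term: bounding the probability (under the weight distribution) that a random hyperplane through the origin separates $x$ and $y$, and pairing that probability correctly with the $\|y\|^2$ factor so that the $\min(\|x\|,\|y\|)$ rather than $\max$ appears inside the square root. This requires the elementary but slightly delicate geometric fact that the angle $\angle(x,y)$ satisfies $\angle(x,y) \le \pi \|x-y\| / \min(\|x\|,\|y\|)$ (or a variant with a different constant), and care in which of $x,y$ to factor out when splitting the indicator difference so that the smaller norm survives. Everything else — Lipschitzness of $\sigma$, the decomposition of $\phi$ into its two blocks, and the final $\sqrt{a+b}\le\sqrt a+\sqrt b$ step — is routine.
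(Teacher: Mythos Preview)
Your proposal is more than the paper provides: the paper does not prove this lemma at all, it simply writes ``For the proof of Lemma~\ref{lemma1}, we refer the reader to \citep{li2022data}.'' You already anticipated this (``the cleanest route is simply to cite their Lemma''), so on that count you match the paper exactly.

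Your sketched argument is the standard one and is essentially correct in outline: split the NTK feature map into the output-layer block (controlled by the $1$-Lipschitzness of $\sigma$) and the hidden-layer block (controlled by the measure of directions $w$ that separate $x$ and $y$), then combine. One small point to tighten if you actually write it out: to land on $\min(\|x\|,\|y\|)$ rather than $\max$ inside the square root, you should choose \emph{which} of $x,y$ to factor out of the indicator-difference term according to which has smaller norm, and then check that the separating-hyperplane probability bound you use (e.g.\ $\angle(x,y)/\pi$ with $\angle(x,y)$ bounded via $\|x-y\|$) pairs with that choice to give the stated constants. This is exactly the ``slightly delicate'' step you flagged; the rest is routine as you say.
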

For the proof of Lemma \ref{lemma1}, we refer the reader to \citep{li2022data}.
\begin{lemma}[Smoothness for deep $Q$-function]
\label{lemma2}
    Given two inputs $x$ and $x'$, the distance between these two data points is $d=\|x-x'\|$, $C$ is a finite constant. Then the difference between the output at $x$ and the output at $x'$ can be bounded by:
    \begin{equation}
        \|Q_{\theta}(x)-Q_{\theta}(x')\|\leq C\sqrt{min(\|x\|,\|x'\|)}\sqrt{d}+2d
    \end{equation}
\end{lemma}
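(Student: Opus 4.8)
The plan is to derive the bound by composing the NTK feature map smoothness from Lemma~\ref{lemma1} with the linearization of the network in the NTK regime. Under Assumption~\ref{NTK assum}, a two-layer infinitely-wide ReLU network trained with infinitesimally small learning rate behaves as a linear model in its NTK features: there is a (fixed, or slowly-varying) weight vector $w$ with $Q_\theta(x) = \langle w, \phi(x)\rangle$, where $\phi$ is the kernel map of Lemma~\ref{lemma1}. First I would write the difference as $Q_\theta(x) - Q_\theta(x') = \langle w, \phi(x) - \phi(x')\rangle$ and apply Cauchy--Schwarz to get $\|Q_\theta(x) - Q_\theta(x')\| \le \|w\|\,\|\phi(x) - \phi(x')\|$.

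Next I would invoke Lemma~\ref{lemma1} directly on the second factor, giving
\[
\|Q_\theta(x) - Q_\theta(x')\| \le \|w\|\Bigl(\sqrt{\min(\|x\|,\|x'\|)\,\|x-x'\|} + 2\|x-x'\|\Bigr).
\]
Writing $d = \|x - x'\|$ and pulling $\sqrt{d}$ out of the first term, this is $\|w\|\bigl(\sqrt{\min(\|x\|,\|x'\|)}\sqrt{d} + 2d\bigr)$. Finally I would absorb $\|w\|$ into the constant $C$ (this is where the statement is slightly informal: strictly the coefficient on the $2d$ term should also carry $\|w\|$, but the paper writes it as $2d$, so one either assumes $\|w\|\le C$ with $C$ chosen $\ge 1$, or reads the displayed bound as holding up to the constant $C$ multiplying the whole expression). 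The boundedness of $\|w\|$ follows from the NTK training dynamics: with the reward bounded by $r_{\max}$ and a finite dataset, the fitted parameters have norm controlled by the smallest eigenvalue of the NTK Gram matrix, so $\|w\|$ is a finite constant depending only on the problem data, which we fold into $C$.

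The main obstacle is making the linearization rigorous: justifying that in the NTK regime $Q_\theta = \langle w, \phi(\cdot)\rangle$ for a $w$ whose norm is uniformly bounded along the whole training trajectory (not just at a fixed snapshot). The cleanest route is to cite the standard NTK results already used by DOGE~\citep{li2022data} — in the infinite-width limit the network output stays in the span of the training features and the parameter deviation from initialization stays bounded — and then treat $C$ as the resulting finite constant. An alternative, if one wants to avoid tracking $\|w\|$, is to note that the feature-map bound of Lemma~\ref{lemma1} is itself the only nontrivial ingredient and the rest is Cauchy--Schwarz plus constant-absorption; the lemma is essentially a restatement of Lemma~\ref{lemma1} lifted through the linear readout, so no genuinely new estimate is needed beyond controlling that readout norm.
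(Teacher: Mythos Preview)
Your derivation is sound, but note that the paper does not actually prove Lemma~\ref{lemma2}: it simply writes ``For the proof of Lemma~\ref{lemma2}, we refer the reader to \citep{bietti2019inductivebiasneuraltangent}.'' So there is no in-paper argument to compare against; you have supplied the proof the paper omits.

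Your route --- linearize via the NTK feature map, apply Cauchy--Schwarz, then invoke Lemma~\ref{lemma1} and absorb $\|w\|$ into $C$ --- is exactly the natural way to obtain this bound and is consistent with how the cited reference treats smoothness in the NTK RKHS. You are also right to flag the informality in the displayed inequality: as written, the $2d$ term lacks the multiplicative constant, so the statement should be read either as $C(\sqrt{\min(\|x\|,\|x'\|)}\sqrt{d}+2d)$ with $C\ge\|w\|$, or with $C$ multiplying the entire parenthesis. The paper uses the lemma only in this looser ``up to a constant'' sense (see the proofs of Proposition~\ref{safe generalization} and Theorem~\ref{empirical Bellman}, where the whole right-hand side carries a constant prefactor), so the discrepancy is cosmetic and your reading is the intended one.
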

For the proof of Lemma \ref{lemma2}, we refer the reader to \citep{bietti2019inductivebiasneuraltangent}.
\paragraph{Proof of Proposition \ref{safe generalization}.}\label{proof 1}
We obtain this theorem by generalizing DOGE's \citep{li2022data} Theorem 1, which is proved under Neural Tangent Kernel (NTK) regime. We recall the main results in our theorem: Given a point within convex hull $x_1\in Conv(\mathcal{D})$, a point in the convex hull neighborhood $x_2\in N(Conv(\mathcal{D}))$, and an external point $x_3\in(S,A)-\textit{CHN}(\mathcal{D})$, we have,
\begin{equation}\label{th311}
    \| Q_{\theta}(x_{1})-Q_{\theta}(\mathrm{Proj}_\mathcal{D}(x_{1}))\|\leq C_{1}(\sqrt{\min(\| x_{1}\|,\|\mathrm{Proj}_\mathcal{D}(x_{1})\|)}\sqrt{d_{1}}+2d_{1})\leq M_{1}
\end{equation}
\begin{equation}\label{th322}
    \| Q_{\theta}(x_{2})-Q_{\theta}(\mathrm{Proj}_\mathcal{D}(x_{2}))\|\leq C_{1}(\sqrt{\min(\| x_{2}\|,\|\mathrm{Proj}_\mathcal{D}(x_{2})\|)}\sqrt{d_{2}}+2d_{2})\leq M_{2}
\end{equation}
\begin{equation}\label{th333}
    \| Q_\theta(x_3)-Q_\theta(\mathrm{Proj}_\mathcal{D}(x_3))\|\leq C_1(\sqrt{\min(\| x_3\|,\|\mathrm{Proj}_\mathcal{D}(x_3)\|)}\sqrt{d_3}+2d_3)
\end{equation}

where $\mathrm{Proj}_\mathcal{D}(x){:}=\mathrm{argmin}_{x_i\in \mathcal{D}}\|x-x_i\|$ is the projection to the dataset. $d_1, d_2, d_3$ are the point-to-dataset distances. Both $d_1=\|x_1-\mathrm{Proj}_\mathcal{D}(x_1)\|\leq\max_{x^{\prime}\in \mathcal{D}}\|x_1-x^{\prime}\|\leq B$ and $d_2=\|x_2-\mathrm{Proj}_\mathcal{D}(x_2)\|\leq r\leq B$ are bounded. $d_3=\|x_3-\mathrm{Proj}_\mathcal{D}(x_3)\|>r$, where $r$ is the neighborhood radius and $B=\sup\left\{\left\|x-y\right\|| x,y\in Conv\left(\mathcal{D}\right)\right\}$ is the diameter of the convex hull. let $r = \max_{x\in Conv(\mathcal{D})}\|x-\mathrm{Proj}_\mathcal{D}(x)\|$, then $r \leq B$. $C_1, M_1, M_2$ are constants.

DOGE classify data points into $x_{in}$ (the interpolate data in convex hull) and $x_{out}$ (the extrapolate data outside convex hull). However, $x_{out}$ can be further divided into $x_2$ (the data in the neighborhood of convex hull) and $x_3$ (the data outside \textit{CHN}). 

With Lemma \ref{lemma2}, we can derive the following bound:
\begin{align*}
    \| Q_{\theta}(x_{1})-Q_{\theta}(\mathrm{Proj}_\mathcal{D}(x_{1}))\|&\leq C_{1}(\sqrt{\min(\| x_{1}\|,\|\mathrm{Proj}_\mathcal{D}(x_{1})\|)}\sqrt{d_{1}}+2d_{1})\\
    &\leq C_{1}(\sqrt{\min(\| x_{1}\|,\|\mathrm{Proj}_\mathcal{D}(x_{1})\|)}\sqrt{B}+2B) \leq M_1
\end{align*}
Similarly, we have:
\begin{align*}
    \| Q_{\theta}(x_{2})-Q_{\theta}(\mathrm{Proj}_\mathcal{D}(x_{2}))\|&\leq C_{1}(\sqrt{\min(\| x_{2}\|,\|\mathrm{Proj}_\mathcal{D}(x_{2})\|)}\sqrt{d_{2}}+2d_{2})\\
    &\leq C_{1}(\sqrt{\min(\| x_{2}\|,\|\mathrm{Proj}_\mathcal{D}(x_{2})\|)}\sqrt{r}+2r) \leq M_2
\end{align*}
Since $d_3=\|x_3-\operatorname{Proj}_D(x_3)\|>r$, Eq. (\ref{th333}) is no longer bounded, indicating that the error grows uncontrollably for data points far outside the neighborhood of the convex hull. However, we demonstrate that the approximation error within the convex hull's neighborhood can still be effectively controlled. To sum up, we expand the safe generalization boundary of $Q$-function.

\paragraph{Proof of Proposition \ref{uniform continuity}.}\label{proof 2}
The $Q$-function defined on \textit{CHN} is uniformly continuous:
    \[ \forall\varepsilon>0,~\exists\delta>0,~s.t.~\forall x_i,x_j\in \textit{CHN}(\mathcal{D}),\text{ if }\|x_i-x_j\|<\delta,\text{ then }\|Q(x_i)-Q(x_j)\|<\varepsilon .\]
\begin{proof}
    To simplify the notation, let $X = \textit{CHN}$, which is compact. For any arbitrary $\varepsilon > 0$, for every $x \in X$, we can find $\delta_x > 0$ such that for all $x' \in B(x, \delta_x)$, $\|Q(x) - Q(x')\| < \varepsilon/2$ (since $Q$ is continuous). The collection $\mathcal{B} = \{B(x, \delta_x) \mid x \in X\}$ is an open cover of $X$. Define $\mathcal{B}_1 = \{B(x, \delta_x / 2) \mid x \in X\}$; this is also an open cover of $X$. Since $X$ is compact, there exists a finite subcover of $\mathcal{B}_1$, denoted by $\mathcal{B}_1' = \{B(x_k, \delta_{x_k} / 2)\}_{k=1}^n$.

    Let $\delta = \min{\{\delta_{x_1}, \delta_{x_2}, \ldots , \delta_{x_n}\}/2}$. For any $y, z \in X$, if $\|y - z\| < \delta$, without loss of generality, assume $\|y - x_k\| < \delta_{x_k}/2$ for some $k$. By the triangle inequality, $\|z - x_k\| \leq \|z - y\| + \|y - x_k\| < \delta + \delta_{x_k}/2 \leq \delta_{x_k}$. Therefore, $y, z \in B(x_k, \delta_{x_k})$, and by the properties of $Q$ and the triangle inequality, $\|Q(y) - Q(z)\| < \varepsilon$.

    Thus, for any $\varepsilon > 0$, there exists $\delta > 0$ such that for all $y, z \in \textit{CHN}(D)$, if $\|y - z\| < \delta$, then $\|Q(y) - Q(z)\| < \varepsilon$.
\end{proof}

\paragraph{Proof of Theorem \ref{empirical Bellman}.}\label{proof 3}
Assuming that $\max(KL(\pi,\mu),KL(\mu,\pi)) \leq \epsilon$. Then, under the NTK regime, for all $(s,a) \in \mathcal{D}$, with high probability $\geq 1-\delta$, $\delta\in(0,1)$.\\
    \begin{equation}
    \|\hat{\mathcal{B}}^{\pi}Q_{\theta}- \mathcal{B}^{\pi}Q_{\theta}\| \leq \underbrace{\frac{C_{r,T,\delta}}{\sqrt{|\mathcal{D}(s,a)|}}}_{\text{sampling error bound}} + \zeta \cdot \underbrace{C\cdot\max_{s'}\left[\sqrt{\min(\mathbb{E}_{a'\sim\pi}\|(s',a')\|,\mathbb{E}_{a'\sim\mu}\|(s',a')\|)}\sqrt{d}+2d \right]}_{\text{OOD overestimation error bound}}
    \end{equation}
    where $C_{r,T,\delta}=C_{r,\delta}+ \gamma C_{T,\delta} R_{max}/(1-\gamma)$, $\zeta = \frac{\gamma C_{T,\delta}}{\sqrt{|\mathcal{D}(s,a)|}}$ and $d \leq \frac{\|a_{min}\|^2 + \|a_{max}\|^2}{2}\sqrt{\frac{\epsilon}{2}}$. $C_{r,\delta}$ and $C_{T,\delta}$ is constants dependent on the concentration properties of $r(s,a)$ and $T(s'|s,a)$, $|\mathcal{D}(s,a)|$ is the dataset size, $a_{min}$ and $a_{max}$ denote the minimum and maximum actions. $C$ is a constant.
Similar to \citep{kumar2020conservative,NIPS2008_e4a6222c,osband2017posteriorsamplingbetteroptimism}, we assume concentration properties of the reward function and the transition dynamics.
\begin{assumption}\label{concentration}
    $\forall (s,a) \in \mathcal{D}$, with high probability $\geq 1-\delta$, we have,\\
    \begin{equation}
        \|r-r(s,a)\|\leq \frac{C_{r,\delta}}{\sqrt{|\mathcal{D}(s,a)|}},~~~\|\hat{T}(s'|s,a)-T(s'|s,a)\|_{1}\leq \frac{C_{T,\delta}}{\sqrt{|\mathcal{D}(s,a)|}}
    \end{equation}
    where $C_{r,\delta}$ and $C_{T,\delta}$ is constants dependent on the concentration properties of $r(s,a)$ and $T(s'|s,a)$, $|\mathcal{D}(s,a)|$ is the dataset size.
\end{assumption}
\begin{proof}
    From Assumption \ref{concentration}, we have,
    \begin{align*}
        \|\hat{\mathcal{B}}^{\pi}Q_{\theta}- \mathcal{B}^{\pi}Q_{\theta}\| &= \|(r-r(s,a))+\gamma \sum_{s'}(\hat{T}(s'|s,a)-T(s'|s,a))\mathbb{E}_{\pi}Q_{\theta'}(s',a')\| \\
        &\leq \|r-r(s,a)\| + \gamma \|\hat{T}(s'|s,a)-T(s'|s,a)\|_{1} \cdot \max_{s'}\|\mathbb{E}_{\pi}Q_{\theta'}(s',a')\|\\
        &\leq \frac{C_{r,\delta}}{\sqrt{|\mathcal{D}(s,a)|}} + \gamma \frac{C_{T,\delta}}{\sqrt{|\mathcal{D}(s,a)|}} \max_{s'}\|\mathbb{E}_{\pi}Q_{\theta'}(s',a')\|\\
    \end{align*}
    where $\max_{s'}\|\mathbb{E}_{\pi}Q_{\theta'}(s',a')\| \leq \max_{s'}\|\mathbb{E}_{\pi}Q_{\theta'}(s',a')-\mathbb{E}_{\mu}Q_{\theta'}(s',a')\| + \max_{s',a'\sim\mu}\|Q_{\theta'}(s',a')\|$. For simplicity, we denote $\mathbb{E}_{a'\sim\pi}Q_{\theta'}(s',a')$ as $\mathbb{E}_{\pi}Q_{\theta'}(s',a')$.
    From Lemma \ref{lemma2}, we have,
    \begin{align*}
        \|\mathbb{E}_{\pi}Q_{\theta'}(s',a')-\mathbb{E}_{\mu}Q_{\theta'}(s',a')\|
        &\leq C\cdot\left[\sqrt{\min(\mathbb{E}_{a'\sim\pi}\|(s',a')\|,\mathbb{E}_{a'\sim\mu}\|(s',a')\|)}\sqrt{d}+2d \right]
    \end{align*}
    For distance $d$, by applying the Pinsker's Inequality, we have,
    \begin{align*}
        d 
        &= \|\mathbb{E}_{a'\sim\pi}[a']-\mathbb{E}_{a'\sim\mu}[a']\|= \|\int_{A} a'(\pi(a'|s')-\mu(a'|s'))\, da'\|\\
        &\leq \int_{A} \|a'\|\,da' \cdot \underbrace{\sup_{a'\in A}\|\pi(a'|s')-\mu(a'|s')\|}_{\text{Total variation distance}} \leq \frac{\|a_{min}\|^2 + \|a_{max}\|^2}{2}\sqrt{\frac{\epsilon}{2}}
    \end{align*}
    Where $a_{min}$ and $a_{max}$ denotes the min and max action, $C$ is a constant. The reward function is bounded, then $\max_{s',a'\sim\mu}\|Q_{\theta'}(s',a')\|\leq R_{max}/(1-\gamma)$.
    Therefore, we have,
    \begin{align*}
        \|\hat{\mathcal{B}}^{\pi}Q_{\theta}- \mathcal{B}^{\pi}Q_{\theta}\| 
        &\leq \underbrace{\frac{C_{r,\delta}+ \gamma C_{T,\delta} R_{max}/(1-\gamma)}{\sqrt{|\mathcal{D}(s,a)|}}}_{\text{sampling error bound}}\\
        &+ \frac{\gamma C_{T,\delta}}{\sqrt{|\mathcal{D}(s,a)|}} \cdot \underbrace{C\cdot\max_{s'}\left[\sqrt{\min(\mathbb{E}_{a'\sim\pi}\|(s',a')\|,\mathbb{E}_{a'\sim\mu}\|(s',a')\|)}\sqrt{d}+2d \right]}_{\text{OOD overestimation error bound}}
    \end{align*}
\end{proof}
\textbf{Remark.} One might ask why we can't directly set $\max_{s'} \|\mathbb{E}_{\pi} Q_{\theta'}(s',a')\| \leq \frac{R_{\text{max}}}{1 - \gamma}$. The reason is that $\pi$ may generate OOD actions for $Q_{\theta'}(s',a')$, and the neural network could overestimate the $Q$-values for these OOD actions, making it impossible to simply bound them. However, since $\mu$ does not produce OOD actions, we can bound $\max_{s',a'\sim\mu} \|Q_{\theta'}(s',a')\|$ by $\frac{R_{\text{max}}}{1 - \gamma}$. Therefore, if $\pi$ is not sufficiently close to $\mu$, we cannot bound $\|\hat{\mathcal{B}}^{\pi}Q_{\theta}- \mathcal{B}^{\pi}Q_{\theta}\|$.
\paragraph{Proof of Theorem \ref{in-sample effects}.}\label{proof 4}
For the in-sample evaluation, $\mathcal{G}_{1}$ introduces negligible changes to the empirical Bellman operator $\hat{\mathcal{B}}^{\pi}Q_{\theta}$. Under the NTK regime, \textbf{given $(s,a) \in \mathcal{D}$}, assuming that $\forall a' \sim \pi(\cdot|s'), \exists a^{in}_{neighbor}, s.t. \|a' - a^{in}_{neighbor}\| < \delta$, we have,
    \begin{equation}
        \|\widetilde{\mathcal{B}}^{\pi}Q_{\theta}(s,a) - \hat{\mathcal{B}}^{\pi}Q_{\theta}(s,a)\| \leq C\cdot\gamma\mathbb{E}_{s'}\left[\sqrt{\min(x,y)}\sqrt{\delta}+2\delta \right]
    \end{equation}
    where $x = \mathbb{E}_{a'\sim\pi, \|a' - a^{in}_{neighbor}\| < \delta}\|(s',a^{in}_{neighbor})\|$, $y=\mathbb{E}_{a'\sim\pi}\|(s',a')\|$, $C$ is a constant.
\begin{proof}
    Similar to the proof of Theorem \ref{empirical Bellman}, we can derive the inequality from Lemma \ref{lemma2}. The distance in this case is bounded by $\delta$ from the assumption.
\end{proof}
\paragraph{Proof of Theorem \ref{ood effects}.}\label{proof 5}
Assuming that for all $(s,a) \in \mathcal{D}$, $Q^{k}(s,a) \approx Q^{\pi}(s,a)$. \textbf{Given} $(s,a) \notin \mathcal{D}$ and $(s,a)\in \textit{CHN}$, assuming that $\|a-a^{in}_{neighbor}\| \leq \delta$ and $\|Q^{\pi}(s,a)-Q^{k}(s,a^{in}_{neighbor})\|<\varepsilon$, if $\|Q^{k}(s,a) - Q^{k}(s,a^{in}_{neighbor})\| > \varepsilon$, by applying the SBO $\widetilde{\mathcal{B}}$ for gradient descent updates (with infinitesimally small learning rate), we have,
    \begin{equation}
        \|Q^{\pi}(s,a)-Q^{k+1}(s,a)\| < \|Q^{\pi}(s,a)-Q^{k}(s,a)\|
    \end{equation}
    If $\|Q^{k}(s,a) - Q^{k}(s,a^{in}_{neighbor})\|\leq \varepsilon$, then $\|Q^{k+1}(s,a) - Q^{\pi}(s,a)\|<2\varepsilon$.
\begin{proof}
    Given $(s,a) \notin \mathcal{D}$, if $Q^{k}(s,a)<Q^{k}(s,a^{in}_{neighbor})-\varepsilon$, then $Q^{k}(s,a)<Q^{\pi}(s,a)$. From the SBO, we define the MSE loss as: 
    \begin{align*}
        \mathcal{L}=[Q^{k}(s,a)-Q^{k}(s,a^{in}_{neighbor})]^{2}
    \end{align*}
     for gradient descent updates. Then,
    \begin{align*}
        Q^{k+1}(s,a) = Q^{k}(s,a) + 2\alpha[Q^{k}(s,a^{in}_{neighbor})-Q^{k}(s,a)]
    \end{align*}
    where the learning rate is infinitesimally small. Consequently, $Q^{k}(s,a)<Q^{k+1}(s,a)\leq Q^{\pi}(s,a)$, and we have,
    \begin{align*}
        \|Q^{\pi}(s,a)-Q^{k+1}(s,a)\| < \|Q^{\pi}(s,a)-Q^{k}(s,a)\|
    \end{align*}
    The proof for the case $Q^{k}(s,a)>Q^{k}(s,a^{in}_{neighbor})+\varepsilon$ follows a similar approach.
    
    For $\|Q^{k}(s,a) - Q^{k}(s,a^{in}_{neighbor})\|\leq \varepsilon$, if $Q^{k}(s,a^{in}_{neighbor})-\varepsilon \leq Q^{k}(s,a) \leq Q^{k}(s,a^{in}_{neighbor})$, then by applying the gradient descent method, we have, $Q^{k}(s,a) \leq Q^{k+1}(s,a) \leq Q^{k}(s,a^{in}_{neighbor})$. Similarly, if $Q^{k}(s,a^{in}_{neighbor}) \leq Q^{k}(s,a) \leq Q^{k}(s,a^{in}_{neighbor})+\varepsilon$, then $Q^{k}(s,a^{in}_{neighbor}) \leq Q^{k+1}(s,a) \leq Q^{k}(s,a)$. Therefore, $\|Q^{k+1}(s,a) - Q^{k}(s,a^{in}_{neighbor})\|\leq \varepsilon$ always holds. By the triangle inequality, we have $\|Q^{k+1}_{\theta}(s,a) - Q^{\pi}(s,a)\|< 2\varepsilon$.
    
\end{proof}
\begin{prop}[Convergence of SBO]
\label{MBO convergence}
Assume that the OOD actions generated by $\pi(\cdot|s')$ lie within the CHN. Then the SBO is a $\gamma$- contraction operator in the $\mathcal{L}_{\infty}$ norm. Any initial $Q$-function can converge to a unique fixed point by repeatedly applying the SBO.
\end{prop}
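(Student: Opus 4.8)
The plan is to invoke the Banach fixed-point theorem on the space of bounded real-valued functions defined on the admissible state–action domain (in-sample pairs together with OOD pairs lying inside \textit{CHN}), equipped with the $\mathcal{L}_{\infty}$ norm, which is a complete metric space. Two facts must be established: that $\widetilde{\mathcal{B}}^{\pi}$ is a self-map of this space, and that it contracts by a factor $\gamma$.

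\textbf{Self-map.} Unfolding the composition $\mathcal{G}_1\hat{\mathcal{B}}_2^{\pi}$, for an in-sample pair $(s,a)$ one gets $\widetilde{\mathcal{B}}^{\pi}Q(s,a)=\hat{\mathcal{B}}^{\pi}Q(s,a)=r(s,a)+\gamma\mathbb{E}_{a'\sim\pi(\cdot|s')}[Q(s',a')]$, while for an OOD pair $(s,a)$ inside \textit{CHN} one gets $\widetilde{\mathcal{B}}^{\pi}Q(s,a)=\hat{\mathcal{B}}^{\pi}Q(s,a^{in}_{neighbor})$, i.e. the same expression evaluated at the in-sample pair $(s,a^{in}_{neighbor})$ with $s'$ replaced by its dataset successor. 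Using $|r|\le r_{\max}$, each output is bounded by $r_{\max}+\gamma\|Q\|_{\infty}$, so $\widetilde{\mathcal{B}}^{\pi}Q$ is bounded whenever $Q$ is; the standing assumption that the actions generated by $\pi(\cdot|s')$ lie inside \textit{CHN} is precisely what guarantees the backup never evaluates a point where $\widetilde{\mathcal{B}}^{\pi}$ is undefined, so the iteration stays in the admissible domain.

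\textbf{Contraction.} The structural observation is that, once the selection rule for $a^{in}_{neighbor}$ is fixed independently of $Q$ (e.g.\ the nearest dataset action, or, as in SQOG, the sampled $a^{in}$ to which noise is added), for every admissible $(s,a)$ there exist a state $\sigma(s,a)$ and a scalar $c(s,a)$, both independent of $Q$, with $\widetilde{\mathcal{B}}^{\pi}Q(s,a)=c(s,a)+\gamma\,\mathbb{E}_{a'\sim\pi(\cdot|\sigma(s,a))}[Q(\sigma(s,a),a')]$. Hence for any two bounded $Q_1,Q_2$,
\[
\bigl|\widetilde{\mathcal{B}}^{\pi}Q_1(s,a)-\widetilde{\mathcal{B}}^{\pi}Q_2(s,a)\bigr| = \gamma\Bigl|\mathbb{E}_{a'\sim\pi(\cdot|\sigma(s,a))}\bigl[Q_1(\sigma(s,a),a')-Q_2(\sigma(s,a),a')\bigr]\Bigr| \le \gamma\,\|Q_1-Q_2\|_{\infty},
\]
and taking the supremum over $(s,a)$ yields $\|\widetilde{\mathcal{B}}^{\pi}Q_1-\widetilde{\mathcal{B}}^{\pi}Q_2\|_{\infty}\le\gamma\|Q_1-Q_2\|_{\infty}$. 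Since $\gamma<1$, Banach's theorem gives a unique fixed point $Q^{\star}$ and $(\widetilde{\mathcal{B}}^{\pi})^{k}Q\to Q^{\star}$ for every initial $Q$.

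I expect the main obstacle to be bookkeeping around $\mathcal{G}_1$ rather than any genuine estimate: one must check that applying $\mathcal{G}_1$ after $\hat{\mathcal{B}}_2^{\pi}$ merely relabels the point at which an already $Q$-affine backup is evaluated, and does not sneak a $Q$-dependence in through the choice of neighbor; and that the branching ``$\hat{\mu}(a|s)>0$ versus $=0$'' is a property of the argument $(s,a)$, not of the function being operated on, so the two cases can be treated separately and then merged by a supremum. A secondary point is to pin down the function space carefully enough that completeness holds and that $\widetilde{\mathcal{B}}^{\pi}$ never steps outside its domain of definition — which is exactly what the hypothesis on $\pi$-generated OOD actions provides.
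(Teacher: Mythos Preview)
Your proposal is correct and follows essentially the same route as the paper: unfold $\widetilde{\mathcal{B}}^{\pi}=\mathcal{G}_1\hat{\mathcal{B}}_2^{\pi}$, observe that in every case the backup is an affine function of $Q$ with leading coefficient $\gamma$, and conclude $\gamma$-contraction followed by Banach. Your presentation is in fact tidier than the paper's---you make explicit the self-map check, the requirement that the neighbor selection $a^{in}_{neighbor}$ be $Q$-independent, and the role of the CHN hypothesis in keeping the iteration inside the admissible domain---whereas the paper's manipulation pushing $\mathcal{G}_1$ inside the expectation at the $(s',a')$ level is notationally loose; but the underlying argument is the same.
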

\paragraph{Proof of Proposition \ref{MBO convergence}.}\label{proof 6}
We prove the $\gamma$-contraction property of the SBO operator under the assumption that OOD actions generated by $\pi(\cdot|s')$ lie within the CHN. This assumption can be satisfied with high probability if we incorporate a regularization term into the actor loss to constrain $\pi$ closer to $\mu$, such as using a behavior cloning (BC) loss. 

The SBO is a $\gamma$- contraction operator in the $\mathcal{L}_{\infty}$. Any initial $Q$-function can converge to a unique fixed point by repeated application of the SBO.

We first recall the definition of smooth Bellman operator,
\begin{equation}
    \widetilde{\mathcal{B}}^\pi Q(s,a) = (\mathcal{G}_1\hat{\mathcal{B}}_2^\pi)Q(s,a)
\end{equation}
where
\begin{equation}
    {\mathcal{G}_1}Q(s,a) = 
    \left\{
        \begin{array}{ll}
            Q(s,a), &  \hat{\mu}(a|s)>0 \\
            Q(s,a^{in}_{neighbor}), &  \hat{\mu}(a|s)=0 \text{ and } (s,a) \in \textit{CHN}
        \end{array}
    \right.
\end{equation}
\begin{equation}
    {\hat{\mathcal{B}}_2^\pi}Q(s,a) = 
    \left\{
        \begin{array}{ll}
            \mathbb{E}_{s,a,r,s' \sim \mathcal{D}}[r+\gamma\mathbb{E}_{a'\sim\pi(\cdot|\mathbf{s'})}[Q(s',a')]], &  \hat{\mu}(a|s)>0\\
            Q(s,a), &  \hat{\mu}(a|s)=0 \text{ and } (s,a) \in \textit{CHN}
        \end{array}
    \right.
\end{equation}

\begin{proof}
    Given policy $\pi$, let $Q_1$ and $Q_2$ be two arbitrary $Q$-functions. Since $a \in \mathcal{D}$, we have,
    \begin{align}
    ||\widetilde{\mathcal{B}}^\pi Q_{1}- \widetilde{\mathcal{B}}^\pi Q_{2}||_{\infty} &=  ||\mathcal{G}_{1}\hat{\mathcal{B}}_2^\pi Q_{1}- \mathcal{G}_{1}\hat{\mathcal{B}}_2^\pi Q_{2}||_{\infty}\notag\\
    &= \max_{s,a}\mathcal{G}_{1} ||r+\gamma\mathbb{E}_{s^{\prime}, a^{\prime} \sim \pi} Q_{1}(s^{\prime}, a^{\prime})-r-\gamma\mathbb{E}_{s^{\prime}, a^{\prime} \sim \pi} Q_{2}(s^{\prime}, a^{\prime})||\notag\\
    &= \max_{s,a}\gamma \mathcal{G}_{1}\mathbb{E}_{s^{\prime}, a^{\prime} \sim \pi}||Q_{1}(s^{\prime}, a^{\prime})-Q_{2}(s^{\prime}, a^{\prime})||\\
    &= \max_{s,a}\gamma\mathbb{E}_{s^{\prime}\sim T, a^{\prime} \sim \pi}||\mathcal{G}_{1}Q_{1}(s^{\prime}, a^{\prime})-\mathcal{G}_{1}Q_{2}(s^{\prime}, a^{\prime})||.\notag
    \end{align}

    If $\hat{\mu}(a^\prime |s^\prime)>0$, then $||\mathcal{G}_{1}Q_{1}(s^{\prime}, a^{\prime})-\mathcal{G}_{1}Q_{2}(s^{\prime}, a^{\prime})||=||Q_{1}(s^{\prime}, a^{\prime})-Q_{2}(s^{\prime}, a^{\prime})||$, we have,
    \begin{align}
        ||\widetilde{\mathcal{B}}^\pi Q_{1}- \widetilde{\mathcal{B}}^\pi Q_{2}||_{\infty}
        &= \max_{s,a}\gamma\mathbb{E}_{s^{\prime}\sim T, a^{\prime} \sim \pi}||Q_{1}(s^{\prime}, a^{\prime})-Q_{2}(s^{\prime}, a^{\prime})||\notag\\
        & \leq \gamma \max_{s, a}||Q_{1}-Q_{2}||_{\infty}\\
        &= \gamma||Q_1 - Q_2||_{\infty}\notag
    \end{align}
    So we can find that $||\widetilde{\mathcal{B}}^\pi Q_{1}- \widetilde{\mathcal{B}}^\pi Q_{2}||_{\infty}\leq \gamma||Q_1 - Q_2||_{\infty}.$
    
    If $\hat{\mu}(a^\prime |s^\prime)=0$ and $(s,a) \in \textit{CHN}$, then $||\mathcal{G}_{1}Q_{1}(s^{\prime}, a^{\prime})-\mathcal{G}_{1}Q_{2}(s^{\prime}, a^{\prime})||=||Q_{1}(s^{\prime}, a^{in}_{neighbor})-Q_{2}(s^{\prime}, a^{in}_{neighbor})||$, we have,
    \begin{align}
        ||\widetilde{\mathcal{B}}^\pi Q_{1}- \widetilde{\mathcal{B}}^\pi Q_{2}||_{\infty}
        &= \max_{s,a}\gamma\mathbb{E}_{s^{\prime}\sim T, a^{\prime} \sim \pi}||Q_{1}(s^{\prime}, a^{in}_{neighbor})-Q_{2}(s^{\prime}, a^{in}_{neighbor})||_{\infty}\notag\\
        &\leq \gamma \max_{s, a}||Q_{1}-Q_{2}||_{\infty}\\
        &= \gamma||Q_1 - Q_2||_{\infty}\notag
    \end{align}
    Combining the results together, we conclude that the smooth Bellman operator is a $\gamma$- contraction operator in the $\mathcal{L}_{\infty}$. 
\end{proof}
\newpage
\section{Additional Experiments and Experimental Details}
\label{Appendix B}

\subsection{Results on Additional Datasets}\label{Additional dataset}

\begin{table}[htbp]\vspace{-5pt}
  \caption{Normalized average score comparison of SQOG against baseline methods on Maze2d ``-v1'' and Adroit ``-v1''  datasets over the final 10 evaluations and 4 random seeds. We bold the highest scores.}
  \vspace{5pt}
  \label{table2}
  \centering
  \resizebox{\linewidth}{!}{
  \begin{tabular}{lllllll||l}
    \toprule
    Dataset  & BC & CQL & BCQ  & TD3+BC & IQL & MCQ & SQOG \\
    \midrule
    maze2d-umaze & -3.2 & 18.9 & 49.1  & 25.7$\pm$6.1  & 65.3$\pm$13.4 & 81.5$\pm$23.7 & \textbf{126.4$\pm$10.4}   \\
    maze2d-umaze-dense & -6.9 & 14.4 & 48.4  & 39.7$\pm$3.8 & 57.8$\pm$12.5 & \textbf{107.8$\pm$3.2} & 100.4$\pm$1.9 \\
    maze2d-medium & -0.5 & 14.6 & 17.1  & 19.5$\pm$4.2  & 23.5$\pm$11.1 & 106.8$\pm$38.4 & \textbf{149.4$\pm$2.9} \\
    maze2d-medium-dense & 2.7 & 30.5 & 41.1 & 54.9$\pm$6.4  & 28.1$\pm$16.8 & 112.7$\pm$5.5 & \textbf{122.7$\pm$0.8} \\
    \midrule
    Maze2d Average & -2.0 & 19.6 & 38.9 & 35.0 & 37.2 & 102.2 & \textbf{124.7}\\
    \midrule
    pen-human &34.4 &37.5 &68.9 &0.0$\pm$0.0 &68.7$\pm$8.6 &68.5$\pm$6.5 & \textbf{80.0$\pm$4.7} \\
    door-human &0.5 &\textbf{9.9} &0.0 & 0.0$\pm$0.0 & 3.3$\pm$1.3 & 2.3$\pm$2.2 &1.0$\pm$1.1\\
    relocate-human &0.0 &\textbf{0.2} &-0.1 & 0.0$\pm$0.0 & 0.0$\pm$0.0 & 0.1$\pm$0.1 &0.1$\pm$0.0\\
    hammer-human &1.5 &\textbf{4.4} &0.5 & 0.0$\pm$0.0& 1.4$\pm$0.6& 0.3$\pm$0.1&1.4$\pm$0.7\\
    pen-cloned &56.9 &39.2 &44.0 & 0.0$\pm$0.0& 35.3$\pm$7.3& 49.4$\pm$4.3 &\textbf{66.7$\pm$3.4} \\
    door-cloned &-0.1 &0.4 &0.0 & 0.0$\pm$0.0& 0.5$\pm$0.6& \textbf{1.3$\pm$0.4} &-0.1$\pm$0.0 \\
    relocate-cloned &-0.1 &-0.1 &-0.3 & \textbf{0.0$\pm$0.0} & -0.2$\pm$0.0& \textbf{0.0$\pm$0.0}&-0.1$\pm$0.0 \\
    hammer-cloned &0.8 &2.1 &0.4 & 0.0$\pm$0.0& \textbf{1.7$\pm$1.0}& 1.4$\pm$0.5& 0.6$\pm$0.3\\
    \midrule
    Adroit Total &93.9 &93.6 &113.4 &0.0 &110.7 &123.3 & \textbf{149.6}\\
    \bottomrule
  \end{tabular}}
\end{table}

To further illustrate the effectiveness of the SQOG algorithm, we conduct more experiments on 4 Maze2d ``-v1'' datasets and 8 Adroit ``-v1'' datasets from D4RL benchmarks. The Maze2d tasks involve navigating a 2D maze environment from an initial point to a designated goal, presenting intricate pathfinding challenges due to the maze's structural complexity and the presence of obstacles. Conversely, the Adroit environment engages agents in object manipulation within a physics-based simulation, demanding adept control and adaptability across diverse scenarios. The Maze2d datasets feature non-Markovian policies, yielding undirected and multitask data, while Adroit datasets exhibit heightened realism characterized by non-representable policies, narrow data distributions, and sparse rewards. 

We compare MCQ against BC, CQL \citep{kumar2020conservative}, BCQ \citep{fujimoto2019off}, TD3+BC \citep{NEURIPS2021_a8166da0}, IQL \citep{kostrikov2021offline} and MCQ \citep{NEURIPS2022_0b5669c3}. We take the results on these datasets from \citep{NEURIPS2022_0b5669c3} directly. From Table \ref{table2}, we observe that SQOG consistently outperforms existing algorithms on the Maze2d datasets, while demonstrating competitiveness with prior methods on Adroit tasks. Remarkably, SQOG achieves the highest average score across all Maze2d and Adroit datasets, underscoring its superior performance. Additional results further corroborate SQOG's efficacy across diverse dataset types, thus attesting to its robust generalization capabilities across varied environments. Moreover, SQOG demonstrates consistent and low runtime on Maze2d and Adroit tasks, with an average runtime of 0.4 hours. Therefore, we believe that its strong performance and computational efficiency make SQOG a noteworthy contribution to the offline RL community.

\subsection{Wider Evidence on Q-value Estimation of SQOG}\label{wider evidence}
We have presented an intuitive sanity check to demonstrate that SQOG alleviates the over-constraint issue and achieves more accurate $Q$-value estimation. This sanity check is conducted on the Inverted Double Pendulum task, which features a one-dimensional action space and a suitable level of task complexity. To further validate SQOG's effectiveness, we perform additional experiments on locomotion tasks to show that SQOG maintains accurate $Q$-value estimation in higher-dimensional environments.

\begin{figure}[ht]
\vspace{-0.1cm}
\centering  
\subfigure{
\label{fig:value_diff1}
\includegraphics[width=0.3\textwidth]{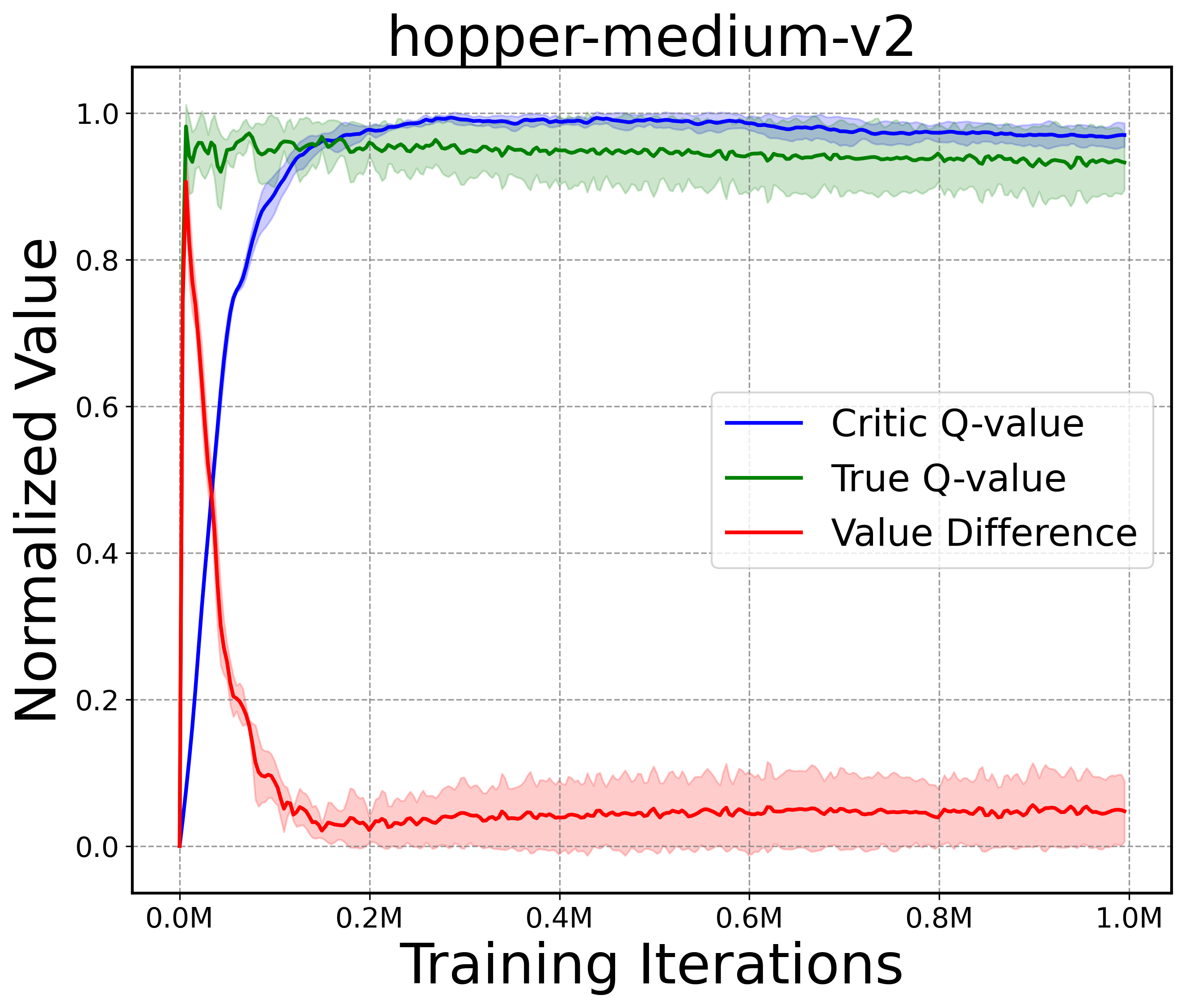}}
\subfigure{
\label{fig:value_diff2}
\includegraphics[width=0.3\textwidth]{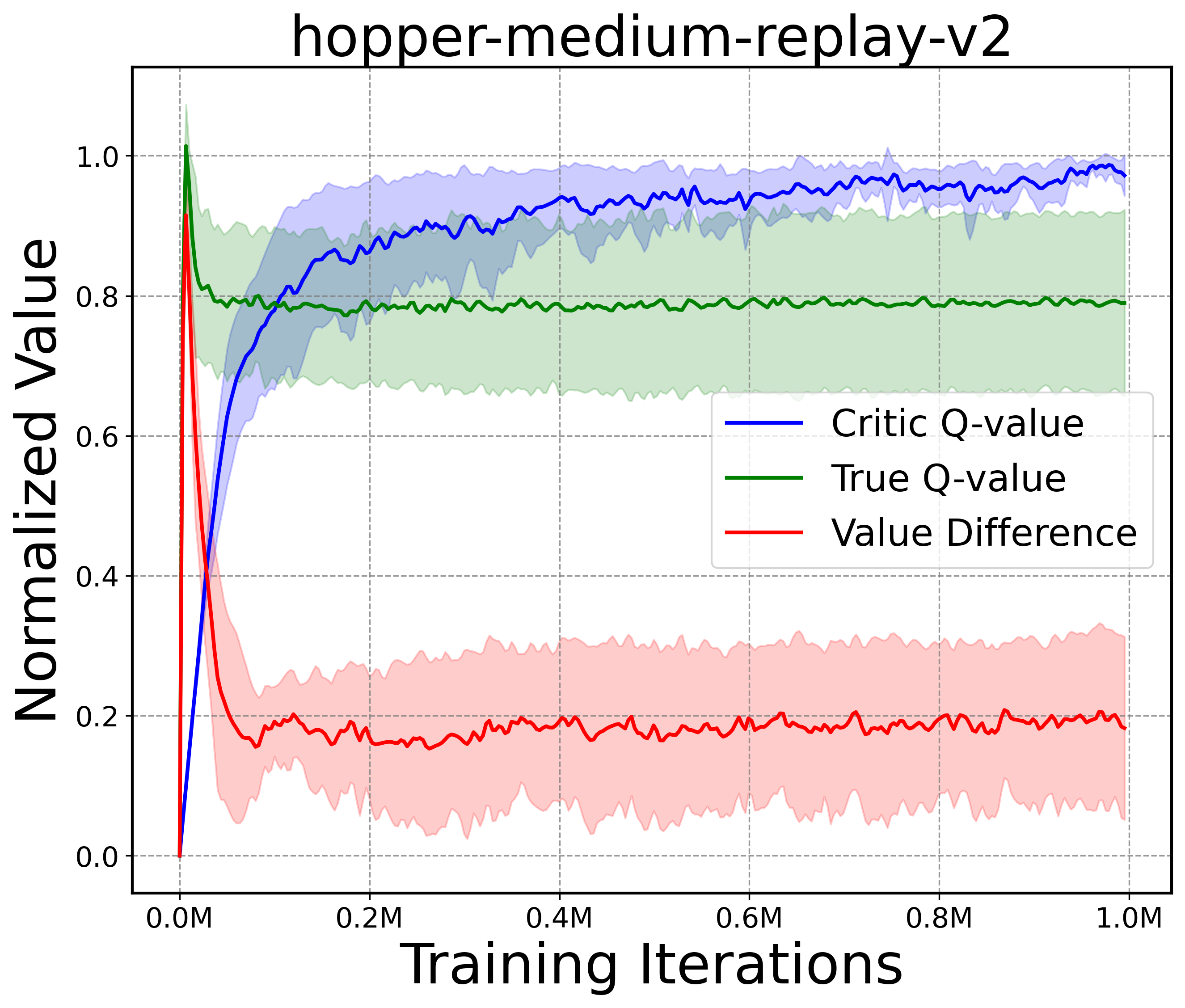}}
\subfigure{
\label{fig:value_diff3}
\includegraphics[width=0.3\textwidth]{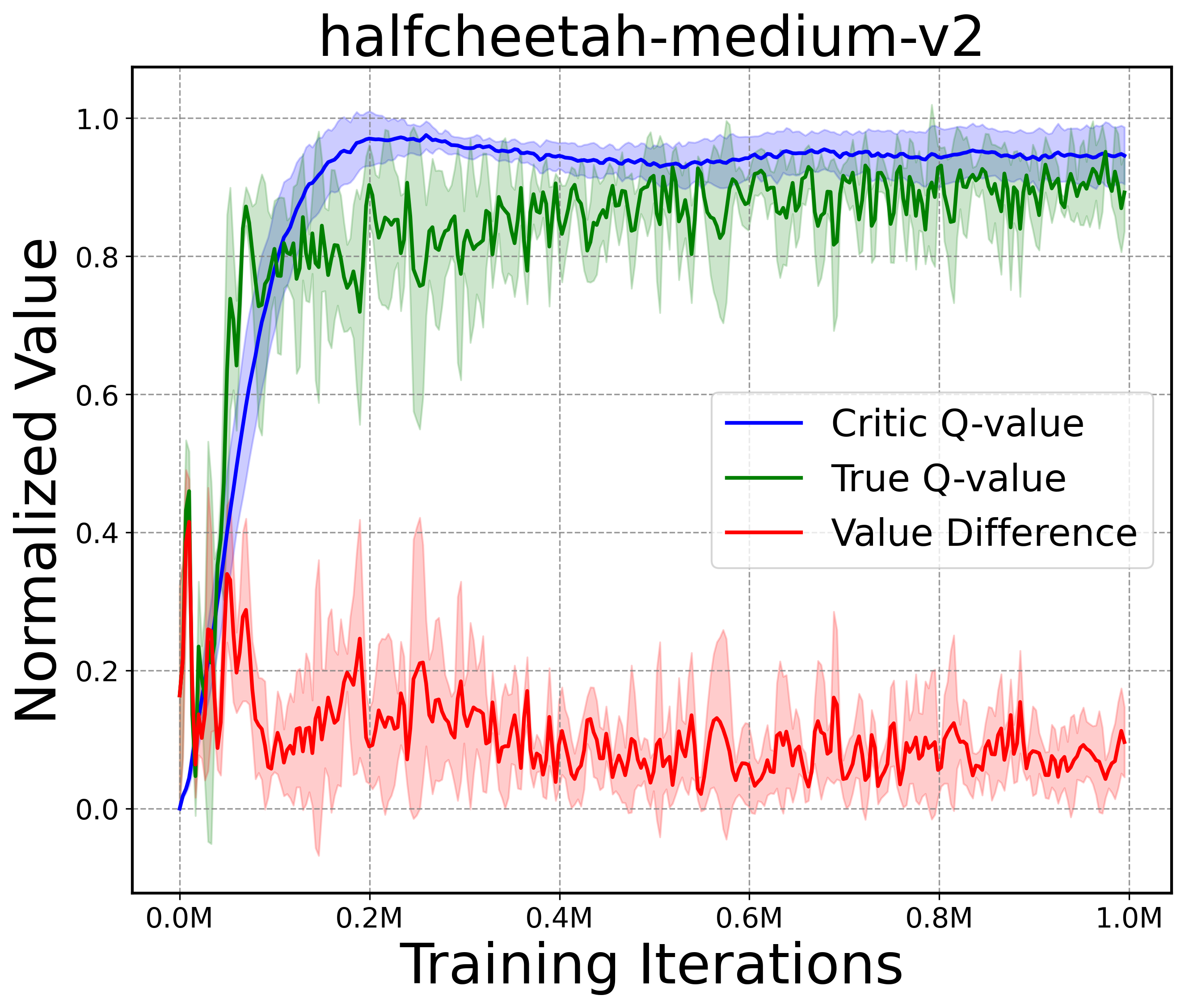}}
\caption{$Q$-value difference between critic $Q$-value and true $Q$-value on hopper-medium-v2, hopper-medium-replay-v2, halfcheetah-medium-v2. The experiments are run for 1M gradient steps over 4 random seeds.}
\label{Fig.value_diff}
\vspace{-0.5cm}
\end{figure}

The results, shown in Figure \ref{Fig.value_diff}, are obtained by estimating true $Q$-values using the Monte Carlo method, similar to the sanity check. The critic $Q$-values are computed using SQOG's critic network, and both sets of $Q$-values are normalized for ease of comparison. We compute the difference between these normalized $Q$-values. Across all datasets, the value difference consistently decreases over training iterations, indicating that SQOG achieves accurate $Q$-value estimation. Notably, SQOG demonstrates the most stable value estimation on the hopper-medium-v2 dataset, aligning with its stable, SOTA performance on the same task.

\subsection{Trade-off between generalization and conservative behaviors}\label{add_exp:alpha}
The trade-off between generalization and conservative behaviors is primarily influenced by two hyperparameters: $\beta$ and $\alpha$. $\beta$ controls the significance of the OOD generalization term in the critic loss, as demonstrated in our ablation study. $\alpha$ governs the relative intensity of the behavioral cloning penalty in the actor loss. Smaller values of $\alpha$ encourage more conservative behaviors. To investigate the impact of $\alpha$, we conduct a series of experiments with varying $\alpha$ values, fixing $\beta=0.5$. The results are summarized in Table \ref{tab:performance-comparison}.

\begin{table}[htbp]\vspace{-10pt}
\centering
\caption{Normalized average score of SQOG over different choices of different $\alpha$ values (with $\beta=0.5$) on Mujoco ``-v2''. The results are averaged over 4 different random seeds. We bold the highest scores.}
\label{tab:performance-comparison}
\vspace{5pt}
  \begin{tabular}{lllllll}
    \toprule
    Dataset & $\alpha=300$ & $\alpha=200$ & $\alpha=150$ & $\alpha=100$ & $\alpha=50$ \\  
    \midrule
    halfcheetah-medium & \textbf{59.3$\pm$1.2} & 57.7$\pm$1.6 & 59.2$\pm$2.4 & 54.1$\pm$1.0 & 51.1$\pm$0.4 \\
    hopper-medium & 70.9$\pm$7.7 & 91.6$\pm$6.0 & \textbf{100.6$\pm$0.7} & 94.1$\pm$6.8 & 74.9$\pm$4.1 \\
    walker2d-medium & 83.6$\pm$7.5 & \textbf{88.2$\pm$1.8} & 82.9$\pm$0.8 & 81.7$\pm$0.8 & 81.1$\pm$0.3 \\
    halfcheetah-medium-replay & 41.4$\pm$0.6 & 42.5$\pm$2.3 & \textbf{46.4$\pm$1.2} & 37.8$\pm$1.6 & 36.1$\pm$0.5 \\
    hopper-medium-replay & 74.1$\pm$5.5 & 86.9$\pm$8.5 & \textbf{100.9$\pm$5.1} & 84.0$\pm$9.4 & 82.8$\pm$9.9 \\
    walker2d-medium-replay & 39.0$\pm$12.3 & 42.4$\pm$7.4 & \textbf{88.3$\pm$3.5} & 73.1$\pm$8.0 & 68.5$\pm$5.3 \\
    \midrule
    Mujoco Average & 61.4& 68.2&79.7& 70.8& 65.8\\
    \bottomrule
  \end{tabular}
\end{table}
On datasets like halfcheetah-medium and walker2d-medium, larger $\alpha$ values (e.g., $\alpha=300$ or $\alpha=200$) tend to perform better, suggesting that these tasks benefit from more aggressive generalization. The dynamics of these environments may not require highly conservative strategies, allowing the agent to explore beyond the dataset effectively. Across tasks, the results indicate that a moderate value of $\alpha=150$ provides a strong balance between generalization and conservative behaviors. This balance likely ensures sufficient exploration without risking the instability of over-generalization, making it a robust choice across diverse scenarios.

\begin{table}[ht]
  \caption{Normalized average score of SQOG over different choices of Gaussian noise scale and clip on Mujoco ``-v2'' and Adroit ``-v1'' datasets. The results are averaged over 4 different random seeds. We bold the highest scores.}
  \vspace{5pt}
  \label{table:scale&clip}
  \centering
  \begin{tabular}{llllll}
    \toprule
    Dataset & scale=0, & scale=0.2, & {scale=0.6,} & {scale=1.0,} & {scale=2.0,} \\ 
                     & {clip=0} & {clip=0.3} & {clip=0.5} & {clip=0.7} & {clip=1.0} \\  
    \midrule
    halfcheetah-medium & 51.2$\pm$3.9 & \textbf{60.6$\pm$0.5} & 59.2$\pm$2.4 & 53.1$\pm$1.9 & 51.7$\pm$1.5 \\
    hopper-medium & 1.5$\pm$0.3 & 67.2$\pm$1.6 & \textbf{100.6$\pm$0.7} & 94.5$\pm$3.6 & 79.5$\pm$10.3 \\
    walker2d-medium & 48.3$\pm$1.2 & 58.9$\pm$2.3 & 82.9$\pm$0.8 & \textbf{86.0$\pm$1.0} & 82.5$\pm$1.1 \\
    halfcheetah-medium-replay & \textbf{49.2$\pm$0.3} & 47.8$\pm$2.2 & 46.4$\pm$1.2 & 36.6$\pm$1.1 & 35.8$\pm$0.8 \\
    hopper-medium-replay & 22.3$\pm$6.6 & 19.4$\pm$6.8 & \textbf{100.9$\pm$5.1} & 24.3$\pm$2.7 & 27.1$\pm$6.7 \\
    walker2d-medium-replay & 12.2$\pm$5.5 & 56.8$\pm$2.0 & 88.3$\pm$3.5 & \textbf{90.1$\pm$3.6} & 60.6$\pm$6.9 \\
    \midrule
    Mujoco Average & 30.8 & 51.8 & \textbf{79.7} & 64.1 & 56.2 \\ 
    \midrule
    pen-human & 23.3$\pm$6.7 & 46.7$\pm$4.8 & \textbf{80.0$\pm$4.7} & 64.9$\pm$5.5 & 55.8$\pm$5.9 \\
    pen-cloned & 9.5$\pm$1.9 & 41.2$\pm$7.6 & \textbf{66.7$\pm$3.4} & 43.2$\pm$5.1 & 42.3$\pm$7.0 \\
    \midrule
    Adroit Average & 16.4 & 44.0 & \textbf{73.4} & 54.1 & 49.1 \\ 
    \bottomrule
  \end{tabular}
  \vspace{-10pt}
\end{table}

\subsection{Additional experiments on noise scale and clip}\label{addexp:scale&clip}
To further study the effects of the noise scale and clipping range, we conduct additional experiments by systematically varying the scale and clipping parameters to observe their influence on performance across multiple datasets. We present the results in Table \ref{table:scale&clip}.

Across most datasets, a scale of 0.6 and clip of 0.5 consistently achieves strong performance, suggesting that moderate noise levels effectively balance exploration and stability. While noise promotes generalization, excessive noise can lead to sampling outside the CHN boundary, resulting in OOD $Q$-values that violate \textit{safety guarantees} and degrade performance. Properly scaled noise ensures effective learning while respecting safety constraints.

The sensitivity to the noise distribution parameters (scale and clip) varies across datasets. In halfcheetah-medium-replay, the baseline configuration (scale=0, clip=0) yields the best performance, indicating that the effect of noise on in-sample $Q$-value estimation cannot be ignored. The dataset may already provide sufficient diversity, and adding noise introduces harmful uncertainty, leading to performance degradation. For such datasets, it is crucial to keep noise parameters conservative to avoid disrupting in-sample $Q$-learning and maintain stable performance. In contrast, for halfcheetah-medium, the optimal configuration is (scale=0.2, clip=0.3), and for walker2d-medium and walker2d-medium-replay, (scale=1.0, clip=0.7) achieves the best results. These differences highlight that there is room for improvement in \textit{fine-tuning noise parameters} across various datasets. Combined with the results of experiments on the hyperparameter $\alpha$, we find that larger noise clipping and larger $\alpha$ achieve better performance in the walker2d-medium dataset. This implies that the task benefits more from $Q$-value generalization and aggressive policy extrapolation within the CHN.

The baseline configuration (scale=0, clip=0) significantly underperforms in most datasets, underscoring the necessity of noise injection to enhance exploration and overall performance. Noise injection is essential for improving the $Q$-function's ability to generalize to previously unexplored regions and optimize learning.

\subsection{Additional experiments on noise type}
\label{addexp:noisetype}
To better analyze the performance differences among noise types, we conducted additional experiments using three different noise settings: normal noise with a scale of 0.6 and clip [-0.5, 0.5], normal noise scaled through a tanh transformation to [-0.5, 0.5], and uniform noise within [-0.5, 0.5]. Results across 4 random seeds are presented in Table \ref{table:noise_types}.

\begin{table}[htbp]\vspace{-10pt}
  \caption{Normalized average score of SQOG with different noise types on Mujoco ``-v2'' and Adroit ``-v1'' datasets. The results are averaged over 4 different random seeds. We bold the highest scores.}
  \vspace{5pt}
  \label{table:noise_types}
  \centering
  \begin{tabular}{llll}
    \toprule
    Dataset & normal+0.6 scale & normal+tanh & uniform \\
    \midrule
    halfcheetah-medium & 59.2$\pm$2.4 & \textbf{59.3$\pm$0.4} & 47.4$\pm$11.1 \\
    hopper-medium & \textbf{100.6$\pm$0.7} & 89.7$\pm$7.4 & 62.5$\pm$36.5 \\
    walker2d-medium & \textbf{82.9$\pm$0.8} & 82.8$\pm$3.2 & 65.5$\pm$17.9 \\
    halfcheetah-medium-replay & \textbf{46.4$\pm$1.2} & 39.9$\pm$1.6 & 40.1$\pm$1.2 \\
    hopper-medium-replay & \textbf{100.9$\pm$5.1} & 94.5$\pm$5.9 & 37.9$\pm$38.3 \\
    walker2d-medium-replay & \textbf{88.3$\pm$3.5} & 62.5$\pm$17.4 & 46.8$\pm$20.4 \\
    \midrule
    Mujoco Average & \textbf{79.7} & 71.5 & 49.9 \\
    \midrule
    pen-human & \textbf{80.0$\pm$4.7} & 75.3$\pm$5.8 & 75.4$\pm$4.1 \\
    pen-cloned & \textbf{66.7$\pm$3.4} & 64.5$\pm$8.1 & 61.9$\pm$4.7 \\
    \midrule
    Adroit Average & \textbf{73.4} & 69.9 & 68.7 \\
    \bottomrule
  \end{tabular}
\end{table}

From Table \ref{table:noise_types}, it is evident that uniform noise significantly underperforms on Mujoco datasets, with higher variance compared to normal noise settings. This is likely due to uniform noise generating overly random OOD samples that are \textit{distributed sparsely and equally} across the entire range of [-0.5, 0.5]. While this ensures coverage across the OOD region, it fails to focus sufficiently on some key areas that are critical for $Q$-learning. Consequently, uniform noise may lead to insufficient training in each region, resulting in unstable and inconsistent performance.

Normal noise (both scaled+clip and tanh-transformed) performs better due to its concentrated sampling behavior. For normal+scale+clip noise, samples are densely concentrated at the clip boundaries (e.g., -0.5, 0.5). For normal+tanh noise, most samples are concentrated near the boundaries (e.g., -0.5, 0.5). This boundary-focused sampling behavior is likely to provide relatively adequate training samples and encourage the generalization in some critical OOD regions.

On Adroit datasets, differences between noise types are less pronounced, suggesting that the clip range [-0.5, 0.5] plays a more critical role than the specific noise type. Within this range, all noise types perform reasonably well.

Our observations suggest that the poor performance of uniform noise may result from its overly sparse and evenly spread sampling, which appears to limit its ability to provide sufficient coverage of some critical OOD regions for $Q$-value generalization. In contrast, normal noise with clipping or tanh transformations demonstrates potential advantages due to its boundary-focused sampling, which may facilitate more sufficient learning in those critical OOD regions. While our experiments demonstrate the superiority of normal noise with clipping in this context, we acknowledge that noise type and its influence on OOD action sampling is an important topic deserving deeper exploration. Our primary contribution lies in introducing SQOG, which effectively addresses over-constraint issues in offline RL by improving OOD $Q$-value generalization, delivering superior performance and computational efficiency. In this work, we focused on empirically validating the feasibility of normal noise (with clipping) and conducted preliminary analyses on the effects of noise type and range.

As future work, we plan to conduct a more systematic investigation into the role of noise in OOD sampling and $Q$-function generalization, aiming to establish a clearer theoretical understanding of its impact.

\subsection{Additional evaluations with more seeds}
We extended the evaluation of SQOG on Mujoco ``-v2'' datasets by running it with another 4 random seeds, bringing the total to 8. We believe this is a sufficiently robust number for reliable evaluation. The summarized results are presented in Table \ref{tab:seeds}, where SQOG demonstrates consistent and reliable performance across different settings.

\begin{table}[htbp]\vspace{-5pt}
\centering
\caption{Normalized average score of SQOG on Mujoco ``-v2'' datasets. The results are averaged over 4 and 8 different random seeds}
\label{tab:seeds}
\vspace{5pt}
\begin{tabular}{lll}
\toprule
Dataset & SQOG (4 seeds) & SQOG (8 seeds) \\
\midrule
halfcheetah-random         & 25.6 $\pm$ 0.4  & 25.7 $\pm$ 1.8 \\
hopper-random              & 15.6 $\pm$ 3.3  & 15.4 $\pm$ 4.1 \\
walker2d-random            & 17.7 $\pm$ 3.5  & 16.9 $\pm$ 3.1 \\
halfcheetah-medium         & 59.2 $\pm$ 2.4  & 60.1 $\pm$ 3.0 \\
hopper-medium              & 100.6 $\pm$ 0.7 & 100.8 $\pm$ 0.6 \\
walker2d-medium            & 82.9 $\pm$ 0.8  & 82.3 $\pm$ 1.3 \\
halfcheetah-medium-replay  & 46.4 $\pm$ 1.2  & 45.8 $\pm$ 2.2 \\
hopper-medium-replay       & 100.9 $\pm$ 5.1 & 99.5 $\pm$ 6.8 \\
walker2d-medium-replay     & 88.3 $\pm$ 3.5  & 86.5 $\pm$ 5.8 \\
halfcheetah-medium-expert  & 92.6 $\pm$ 0.4  & 91.6 $\pm$ 1.4 \\
hopper-medium-expert       & 109.2 $\pm$ 2.8 & 110.1 $\pm$ 2.5 \\
walker2d-medium-expert     & 109.0 $\pm$ 0.3 & 109.1 $\pm$ 0.3 \\
\midrule
Mujoco Average             & 70.7            & 70.3        \\
\bottomrule
\end{tabular}
\end{table}

\subsection{Compute Infrastructure}
In Table \ref{compute infrastructure}, we list the compute infrastructure that we use to run all of the baseline algorithms and SQOG experiments.

\begin{table}[ht]\vspace{-5pt}
  \caption{Compute infrastructure.}
  \label{compute infrastructure}
  \centering
  \begin{tabular}{l|l|l}
    \toprule
    CPU & GPU & Memory\\
    \midrule
    Intel(R) Xeon(R) CPU E5-2698 & Tesla V100-DGXS-32GB × 8 & 251G \\
    \midrule
    Intel(R) Xeon(R) Silver 4216 & GPU GeForce RTX 3090 & 62G
    \\
    \bottomrule
  \end{tabular}
\end{table}

\subsection{Experimental Details}
\textbf{The true $Q$-values\footnote{In this context, the term ``true $Q$-value'' refers to the ground truth $Q$-values computed through Monte Carlo estimation. These values serve as a reference standard for evaluating the accuracy of the predicted $Q$-values generated by the critic networks.} in sanity check.} In sanity check, we calculate the true $Q$-values using a Monte Carlo estimation method to ensure accuracy. Specifically, we reset the environment to a given state $s$ and execute the action $a$. Starting from $(s,a)$, we simulate full trajectories and calculate the discounted return for each trajectory. To approximate the expected return, we repeat this process for 1000 sampled trajectories and take the average. We compute the $Q$-values for every 0.01 increment within the action range [-1.0, 1.0] and smooth the values using cubic spline interpolation. Finally, we normalize the values to keep them between 0 and 1 by multiplying by an appropriate constant. Given the computational intensity and rigorous sampling, this process provides a robust approximation of the true $Q$-values. 

The true $Q$-function in the Mujoco environment could be irregular or even stepwise. However, we note that the smooth appearance of the $Q$-values in Figure 1 arises from the use of cubic spline interpolation applied to densely sampled data points. The interpolation is used solely for visual clarity and does not affect the underlying accuracy of the $Q$-value computation. Without this interpolation, the individual sampled points would still demonstrate the high accuracy of our SQOG method in estimating $Q$-values while effectively alleviating the issue of over-constraint.

\textbf{D4RL benchmarks.} In the main paper, we evaluate SQOG on the D4RL Gym-Mujoco task \citep{fu2021d4rl}, which contains three environments (halfcheetah, hopper, walker2d), and ﬁve types of datasets (random, medium, medium-replay, medium-expert,
expert). Random datasets are gathered by a random policy. Medium is generated by ﬁrst training a policy online using Soft Actor-Critic \citep{haarnoja2018soft}, early-stopping the training, and collecting 1M samples from this partially-trained policy. Medium-replay datasets are collected during the training process of the ``medium'' SAC policy. Medium-expert datasets are formed by combining the suboptimal samples and the expert samples. Expert datasets are made up of expert trajectories. 

D4RL offers a metric called normalized score to evaluate the performance of the offline RL algorithm, which is
calculated by:
\begin{equation}
    \text{normalized score} = 100 * \frac{\text{score} - \text{random score}}{\text{expert score} - \text{random score}}
\end{equation}
If the normalized score equals to 0, that indicates that the learned policy has a similar performance as the random policy, while 100 corresponds to an expert policy. The final results are obtained using a standard evaluation procedure. Specifically, for each experiment, we calculate the average normalized score over the last ten evaluation episodes. We then compute the mean and standard deviation of these scores across different random seeds.

\begin{table}[htbp]\vspace{-5pt}
\caption{Experimental hyperparameters of SQOG.}
\label{hyperparameters table}
\centering
\begin{tabular}{@{}ll@{}}
\toprule
Hyperparameter                      & Value    \\ \midrule
Optimizer                           & Adam \citep{DBLP:journals/corr/KingmaB14}     \\
Actor network learning rate         & $3 \times 10^{-4}$ \\
Critic network learning rate        & $3 \times 10^{-4}$ \\
Discount factor $\gamma$        & 0.99     \\
Total training step $T$            & $1 \times 10^{6}$ \\
Policy noise clipping parameter     & 0.5      \\
Mini-batch size                     & 256      \\
Target network smoothing parameter $\tau$ & 0.005    \\
Actor network update frequency $m$     & 2        \\
$\alpha$ (TD3+BC \citep{NEURIPS2021_a8166da0} Hyperparameter)    & 150 (25 for Adroit tasks)      \\
\midrule
$\beta$                             & 0.5 (2.5 for Adroit tasks)     \\
Noise type                          & Normal distribution \\
Noise scale                         & 0.6      \\
Noise clip                          & 0.5      \\ \bottomrule
\end{tabular}
\end{table}

\textbf{Hyperparameters.} 
All experimental hyperparameters of SQOG are documented in Table \ref{hyperparameters table}. The hyperparameter $\alpha$ of TD3+BC governs the efficacy of the behavior cloning constraint, while in our algorithm, we introduce $\beta$ to regulate the strength of OOD generalization. Notably, a trade-off exists between $\alpha$ and $\beta$, where a larger $\beta$ corresponds to stronger generalization, while a smaller $\alpha$ implies more stringent constraints. Through empirical exploration, we identify two sets of hyperparameters that yield favorable performance: (150, 0.5) for Mujoco (excluding halfcheetah-medium-replay) and Maze2d tasks, and (25, 2.5) for Adroit tasks and halfcheetah-medium-replay in Mujoco. The Gaussian distribution is a good choice for dataset noise in OOD sampling, but there could be better alternatives such as pink noise \citep{eberhard2023pink}, which is left for future work. We did not finely tune the noise scale and noise clip parameters, leaving room for improvement in their optimization.

\section{Additional Discussion on CHN}
\label{Appendix C}
\textbf{Why the \textit{state} space needs to be included in the CHN definition?} We study the $Q$-function in this paper, which is also called the state-action value function. The state is necessary when discussing OOD actions, as the state-action pair defines the context in which the action is taken. 

An alternative definition of the CHN can be given as follows:
\begin{definition}\label{alter_def}
    Given state $s$ in the dataset $\mathcal{D}$, let $Conv_{s}(\mathcal{D})$ denotes the convex hull of the in-sample actions corresponding to the state $s$, $Conv_{s}(\mathcal{D}) = \{\sum_{i=1}^n\lambda_i a_i|\lambda_i\geq 0,\sum_{i=1}^n\lambda_i=1,(s,a_i)\in {\mathcal{D}}\}$. Define the external neighborhood of the convex hull as: $N_{s}(Conv_{s}(\mathcal{D}))=\{a\in A\mid a \notin Conv_{s}(\mathcal{D}), \min_{a'\in Conv_{s}(\mathcal{D})}\|a'-a\|\leq r_{s}\}$, where $r_{s}$ is a radius chosen to be less than or equal to the diameter of $Conv_{s}(\mathcal{D})$. Finally, the CHN can be defined as: $\textit{CHN}_{action}(\mathcal{D}) = \cup_{s\in \mathcal{D}} \textit{CHN}_{s}(\mathcal{D})$, where $\textit{CHN}_{s}(\mathcal{D}) = Conv_{s}(\mathcal{D}) \cup N_{s}(Conv_{s}(\mathcal{D}))$.
\end{definition}
While the alternative definition provided above introduces more detailed structures, it is less elegant and concise compared to the definition of the CHN in the main text. Furthermore, the alternative definition may lack connectivity, as the $\textit{CHN}_s$ for different states may be disjoint, failing to form a cohesive and unified region. Importantly, the alternative Definition \ref{alter_def} is encompassed within the broader Definition \ref{CHN} provided in the main text. (Given state $s \in \mathcal{D}$, $\forall a \in A$, if $a \in \textit{CHN}_{s}(\mathcal{D})$, then $(s,a) \in \textit{CHN}(\mathcal{D})$.) The original definition encompasses a broader region and enables us to offer more comprehensive safety guarantees due to its more general formulation.

\section{Connection Between Theory and Practice}
\label{Appendix D}
In theory, we introduce the CHN and the Smooth Bellman Operator (SBO). In practice, we design SQOG based on the insights from CHN and SBO. The connection between SBO and SQOG's critic loss is summarized in Table \ref{tab:connection}. For simplicity, we use MSE (mean squared error) to represent the loss and omit the expectation operator for readability. The complete definition of SBO is provided in Definition \ref{SBO}, while the detailed formulation of SQOG's critic loss can be found in Eq. (\ref{critic-OG}) and (\ref{practical Q-function}).

\begin{table}[ht]\vspace{-5pt}
  \caption{Connections between SBO and SQOG's critic loss.}
  \label{tab:connection}
  \centering
  \resizebox{\linewidth}{!}{
  \begin{tabular}{lll}
    \toprule
    Connections & In-sample action ($a$)& OOD action within CHN ($a^{ood}$) \\
    \midrule
     SBO & $Q(s,a) \leftarrow r+\gamma\mathbb{E}_{a'\sim\pi(\cdot|\mathbf{s'})}[Q(s',a')]$ &  $Q(s,a^{ood}) \leftarrow Q(s,a^{in}_{neighbor})$\\
    \midrule
     Loss & $\operatorname{MSE}\left(Q_{\theta}(s,a), \left(r+\gamma \hat{Q}_{\theta^{\prime}}(s^{\prime},\pi_{\phi}(s'))\right)\right)$ & $\operatorname{MSE}\left(Q_{\theta}(s,a^{ood}), Q(s,a^{in}_{neighbor})\right)$
    \\
    \bottomrule
  \end{tabular}}
\end{table}

For the $Q$-values of in-sample actions, SBO applies the base Bellman operator $\hat{B}_{2}^{\pi}$. Similarly, in practice, SQOG samples batches of data from the dataset, and the first term of its critic loss updates the $Q$-values of in-sample actions using an empirical Bellman operator. This closely aligns with the in-sample component of SBO.

For the $Q$-values of OOD actions within the CHN, SBO uses the smooth generalization operator $\mathcal{G}_1$ to generalize these OOD actions to their in-sample neighbors. Correspondingly, in practice, SQOG actively generates OOD actions from in-sample actions and trains their $Q$-values by leveraging the $Q$-values of their neighboring in-sample actions. This approach aligns well with the OOD component of SBO, enabling smooth generalization for OOD $Q$-values.

However, there is a difference. In SBO, if the policy’s action $\pi(s')$ is identified as OOD, $\mathcal{G}_1$ updates its $Q$-value, $Q(s', \pi(s'))$, based on the $Q$-value of a neighboring in-sample action, $Q(s', a^{in}_{neighbor})$. In practice, however, we leave $\hat{Q}_{\theta'}(s', \pi_{\phi}(s'))$ unchanged, as accurately determining whether an action is OOD is challenging.
Nevertheless, as the OOD $Q$-values are iteratively trained through the OG loss term, we expect $\hat{Q}_{\theta'}(s', \pi_{\phi}(s'))$ to become increasingly accurate over time, mitigating the overestimation and underestimation issue of $\hat{Q}_{\theta'}(s', \pi_{\phi}(s'))$ as training progresses (empirically shown in Appendix \ref{wider evidence}).

Additionally, the BC loss applied to the actor network keeps the learned policy not far from the behavior policy. This ensures that the actions $\pi_{\phi}(s')$ generated from $\pi_{\phi}$ are likely to remain close to the dataset. Consequently, $\hat{Q}_{\theta'}(s', \pi_{\phi}(s'))$ is less prone to overestimation, further stabilizing the learning process.

Finally, the role of CHN is critical in providing safety guarantees and guiding the selection of practicable OOD regions. Without CHN, it would be challenging to identify feasible OOD regions for generalization. Attempting to handle all OOD regions indiscriminately could result in degraded performance or instability. Similarly, without the theoretical analysis provided by SBO, it would be challenging to conceive the idea of detaching the gradient of in-sample $Q$-values, which is essential for constructing meaningful targets for OOD $Q$-value updates.

\section{The relationship between SBO and BC loss}
\label{Appendix E}
Our main contribution lies in alleviating the over-constraint issue by generalizing $Q$-function estimation to OOD regions within the CHN. While ``pessimism'' or ``constraint'' is critical for policy improvement in offline RL, excessive constraints can lead to inaccurate $Q$-value estimation and suboptimal performance. SQOG mitigates this issue by enabling more accurate $Q$-value estimation, which directly benefits the policy improvement step.

To demonstrate this, we adopt TD3+BC as a baseline. However, we emphasize that SQOG's effectiveness is not limited to this specific framework or the use of behavior cloning (BC) loss.

\textbf{Theoretical justification.} As shown in Theorem \ref{empirical Bellman}, SBO approximates accurate OOD $Q$-values within CHN under the assumption that the learned policy $\pi$ is close to the behavior policy $\mu$. This assumption is satisfied by any policy constraint offline RL algorithm that enforces $\max(\mathrm{KL}(\pi, \mu), \mathrm{KL}(\mu, \pi)) \leq \epsilon$. Therefore, the conclusion that SBO enables gradual approximation of true OOD $Q$-values within CHN is generalizable to other policy constraint methods beyond TD3+BC.
\begin{table}[htbp]\vspace{-10pt}
\centering
\caption{Normalized average score comparison of BRAC+SBO against baseline method BRAC on Mujoco ``-v2'' and Adroit ``-v1'' datasets. The results are averaged over 4 different random seeds. We bold the highest scores.}
\label{tab:brac-sbo-performance}
\vspace{5pt}
  \begin{tabular}{lll}
    \toprule
    Dataset & BRAC & BRAC+SBO \\  
    \midrule
    halfcheetah-medium & 49.8$\pm$1.2 & \textbf{54.3$\pm$1.2} \\
    hopper-medium & 3.6$\pm$3.1 & \textbf{90.9$\pm$2.9} \\
    walker2d-medium & 7.8$\pm$8.1 & \textbf{85.6$\pm$4.3} \\
    halfcheetah-medium-replay & 41.8$\pm$6.2 & \textbf{47.8$\pm$2.0} \\
    hopper-medium-replay & 28.8$\pm$20.3 & \textbf{61.1$\pm$11.9} \\
    walker2d-medium-replay & 8.5$\pm$3.0 & \textbf{67.6$\pm$11.0} \\
    \midrule
    Mujoco Average & 23.4 & \textbf{67.9} \\ 
    \midrule
    Improvement & - & \textbf{190.2\%} \\
    \midrule
    pen-human & 19.2$\pm$16.3 & \textbf{69.7$\pm$8.7} \\
    pen-cloned & 28.4$\pm$23.4 & \textbf{69.0$\pm$14.8} \\
    \midrule
    Adroit Average & 23.8 & \textbf{69.4} \\ 
    \midrule
    Improvement & - & \textbf{191.6\%} \\
    \bottomrule
  \end{tabular}
\end{table}

\textbf{Empirical validation.} To demonstrate SBO's generalizability, we replace the BC loss in TD3+BC with the KL divergence penalty as used in BRAC's actor loss \citep{DBLP:journals/corr/abs-1911-11361}. Since the true behavior policy is inaccessible, we adopt behavior cloning (a common and simple method) to approximate it. We then re-implement BRAC and add SBO to it. Table \ref{tab:brac-sbo-performance} presents the performance of BRAC+SBO across various datasets. We observe a significant performance improvement when SBO is added to BRAC, confirming that SBO is a \textit{versatile} plug-in for policy constraint methods.

Although BRAC+SBO performs well, its performance does not surpass that of SQOG. This is likely because behavior cloning only provides an approximate behavior policy, and inaccuracies in this approximation weaken the KL penalty's ability to ensure $\pi$ is close to $\mu$, which may hurt the effectiveness of SBO. While advanced methods like generative models could improve behavior policy estimation, they are computationally expensive. In contrast, TD3+BC with BC loss strikes a favorable balance between simplicity, computational efficiency, and effectiveness.

In summary, SBO is not restricted to TD3+BC or BC loss. Our empirical results demonstrate its potential to generalize across other policy constraint methods, addressing the over-constraint issue and achieving superior performance.

\textbf{Further discussion on the contribution to policy constraint methods.}
Policy constraint methods, such as TD3+BC \citep{NEURIPS2021_a8166da0}, aim to align the learned policy with the behavior policy to mitigate $Q$-value overestimation. However, these methods often result in $Q$-value underestimation due to limited exploration beyond the behavior policy, leading to the neglect of OOD actions that might correspond to higher $Q$-values. In contrast, SBO mitigates both $Q$-value underestimation and overestimation, particularly in OOD regions, as proven in Theorem \ref{ood effects}. This improvement allows the policy to make more informed decisions by considering OOD actions with higher $Q$-values, addressing a key limitation of policy constraint methods. This capability is the primary driver behind the observed experimental improvements.

Furthermore, addressing the offline RL community context, the over-constraint issue remains a significant challenge for policy constraint methods. Theoretical analysis and experiments often neglect the ``value part'' in critic network, as it risks conflicting with policy constraints in actor network. However, SBO serves as a valuable complement to policy constraint methods. On one hand, SBO overcomes this bottleneck by accurately learning OOD $Q$-values for policy improvement and exploration (proved in Theorem \ref{ood effects}, shown in the sanity check and Appendix \ref{wider evidence}). On the other hand, SBO is really smooth when integrating into those policy constraint methods, primarily because we treat policy constraints as a precondition (as demonstrated by the significant improvements in Tables \ref{table}, \ref{table2}, and \ref{tab:brac-sbo-performance}). We firmly believe that SBO (SQOG) is a valuable supplement, combining theoretical rigor, strong performance, and computational efficiency.

Finally, while prior value-based offline RL methods often treat OOD regions as inherently risky due to information deficiencies \citep{xu2023offline,kumar2020conservative,kostrikov2021offline}. Methods like MCQ \citep{NEURIPS2022_0b5669c3} explore mild but enough conservatism for offline learning
while not harming generalization. Building on this direction, our work demonstrates the insight that extending $Q$-function generalization to OOD regions within CHN can be beneficial. We believe this approach will attract more attention to $Q$-value estimation in OOD regions and offer new insights for the offline RL community.

\end{document}